\newtheorem{theorem}{Theorem}
\newtheorem{corollary}{Corollary}
\newtheorem{proposition}{Proposition}
\newtheorem{lemma}{Lemma}
\newtheorem{definition}{Definition}
\title{Support Recovery in Sparse PCA with Incomplete Data}
\author[1]{Hanbyul Lee}
\author[1]{Qifan Song}
\author[2]{Jean Honorio}
\affil[1]{Department of Statistics, Purdue University}
\affil[2]{Department of Computer Science, Purdue University}
\date{\today}
\begin{document}

\maketitle

\begin{abstract}
  
We study a practical algorithm for sparse principal component analysis (PCA) of incomplete and noisy data.
Our algorithm is based on the semidefinite program (SDP) relaxation of the non-convex $l_1$-regularized PCA problem.
We provide theoretical and experimental evidence that SDP enables us to exactly recover the true support of the sparse leading eigenvector of the unknown true matrix, despite only observing an incomplete (missing uniformly at random) and noisy version of it.
We derive sufficient conditions for exact recovery, which involve matrix incoherence, the spectral gap between the largest and second-largest eigenvalues, the observation probability and the noise variance.
We validate our theoretical results with incomplete synthetic data, and show encouraging and meaningful results on a gene expression dataset.    
  
\end{abstract}

\section{Introduction}
\label{sec:introduction}

Principal component analysis (PCA) is one of the most popular methods to reduce data dimension which is widely used in various applications including genetics, image processing, engineering, and many others. 
However, standard PCA is usually not preferred when principal components depend on only a small number of variables, because it provides dense vectors as a solution which degrades interpretability of the result. 
This can be worse especially in the high-dimensional setting where the solution of standard PCA is inconsistent as addressed in several works \citep{paul2007asymptotics, nadler2008finite, johnstone2009consistency}. 
To solve the inconsistency issue and improve interpretability, {\it sparse PCA} has been proposed, which enforces sparsity in the PCA solution so that dimension reduction and variable selection can be simultaneously performed. 
Theoretical and algorithmic researches on sparse PCA have been actively conducted over the past few years \citep{zou2006sparse, amini2008high, journee2010generalized, ma2013sparse, lei2015sparsistency, berk2019certifiably, 
richtarik2021alternating}.

In this paper, we consider a special situation where the data to which sparse PCA is applied are not completely observed, but partially missing.
Missing data frequently occurs in a wide range of machine learning problems, where sparse PCA is no exception. 
There are various reasons and situations where data becomes incomplete, such as failures of hardware, high expenses of sampling, and preserving privacy.
One concrete example is the analysis of single-cell RNA sequence (scRNA-seq) data \citep{park2019sparse}, where the cells are divided into several distinct types which can be characterized with only a small number of genes among tens of thousands of genes. 
Sparse PCA can be effectively utilized here to reduce the dimension (from numerous cells to a few cell types) and to select a small number of genes that affect the reduced data. 
However, since scRNA-seq data usually have many missing values due to technical and sampling issues, the existing sparse PCA theory and method designed for fully observed data cannot be directly applied, and new methodology and theory are in demand.

Despite the need for theoretical research and algorithmic development of sparse PCA for incomplete data, there have not been many studies yet. 
\cite{lounici2013sparse} and \cite{kundu2015approximating} considered two different optimization objectives for sparse PCA on incomplete data, which impose $l_1$ regularization and $l_0$ constraint on the classic PCA loss function using a (bias-corrected) incomplete matrix, respectively.
It was shown that the solution of each problem has a non-trivial error bound under certain conditions, but the optimization problems they considered are either nonconvex or NP-hard, and thus theoretical studies of computational feasible algorithms are still lacking. 
More recently, \cite{park2019sparse} proposed a computationally tractable two-step algorithm based on matrix factorization and completion, but its first step is an iterative algorithm that requires singular value decomposition in every iteration, which incurs a lot of cost in memory and time under a high-dimensional setting. 

With this motivation, we suggest a computational friendly convex optimization problem via a semidefinite relaxation of the $l_1$ regularized PCA, to solve the sparse PCA on incomplete data.
We note that very efficient scalable SDP solvers exist in practice \citep{yurtsever2021scalable}.
We assume that the unknown true matrix $\pmb{M}^*\in\mathbb{R}^{d\times d}$ is symmetric and has a sparse leading eigenvector $\pmb{u}_1$. 
Our goal is to exactly recover the support of this sparse leading eigenvector, i.e., to find the set $J$ correctly where $J = supp(\pmb{u}_1) = \{i : u_{1,i}\neq 0\}$. 
Given a noisy observation $\pmb{M}$ for the unknown true matrix $\pmb{M}^*$, it is intuitive to consider imposing a regularization term on the PCA quadratic loss that aims to find the first principal component. 
When using the $l_1$ regularizer, the optimization problem can be written as:
\begin{equation*}
\hat{\pmb{x}} = \underset{\pmb{x}^\top \pmb{x} = 1}{\arg\max}~ \pmb{x}^\top \pmb{M}\pmb{x} - \rho \|\pmb{x}\|_1^2.
\end{equation*}
Hence, $J$ is estimated with $supp(\hat{\pmb{x}})$.
However, this intuitively appealing objective is nonconvex and very difficult to solve, so the following semidefinite relaxation can be considered as an alternative:
\begin{equation*}
\hat{\pmb{X}} = \underset{\pmb{X}\succeq 0 \text{ and } tr(\pmb{X}) = 1}{\arg\max}~ 
\langle \pmb{M}, \pmb{X} \rangle - \rho \|\pmb{X}\|_{1,1}.
\end{equation*}
By letting $\pmb{X} = \pmb{x}\pmb{x}^\top$, the equivalence of the above two objective functions can be easily justified.
Since $supp(\pmb{x}) = supp(diag(\pmb{x}\pmb{x}^\top))$, we estimate the support $J$ by $\hat{J} = supp(diag(\hat{\pmb{X}}))$ in the semidefinite problem.
This kind of relaxation has been studied by \cite{d2004direct} and \cite{lei2015sparsistency}, but their works were limited to complete data.
Surprisingly, without any additional modifications on the relaxation problem such as using matrix factorization or matrix completion, we show that it is possible to exactly recover true support $J$ with the above semidefinite program itself when $\pmb{M}$ is an incomplete observation.
Our main contribution is to prove this claim theoretically and experimentally.

In Section \ref{sec:main_results}, we provide theoretical justification (i.e., Theorem \ref{thm_main}) that we can exactly recover the true support $J$ with high probability by obtaining a unique solution of the semidefinite problem, under proper conditions.
The conditions involve matrix coherence parameters, the spectral gap between the largest and second-largest eigenvalues of the true matrix, the observation probability and the noise variance, which are discussed in detail in Corollaries \ref{cor_1} and \ref{cor_2}.
Specifically, we show that the sample complexity is related to the matrix coherence parameters as well as the matrix dimension $d$ and the support size $s$.
We prove that the observation probability $p$ has the bound of $p = \omega\Big(\frac{1}{d^{-1} + 1}\Big)$ in the worst scenario in terms of the matrix coherence, while 
it has a smaller lower bound $p=\omega\Big(\frac{1}{(\log s)^{-1} + 1}\Big)$ in the best scenario.
In Section \ref{sec:numerical_results}, we provide experimental results on incomplete synthetic datasets and a gene expression dataset.
The experiment on the synthetic datasets validate our theoretical results,
and the experiment on the gene expression dataset gives us a consistent result with prior studies.

\section{Preliminaries}
\label{sec:preliminaries}

\subsection{Notation}
\label{subsec:notation}

We first introduce the notations used throughout the paper.
Matrices are bold capital, vectors are bold lowercase and scalars or entries are not bold.
For any positive integer $n$, we denote $[n]:=\{1,\dots,n\}$.
For any vector $\pmb{a} \in \mathbb{R}^d$ and index set $J \subseteq [d]$, $\pmb{a}_J$ denotes the $|J|$-dimensional vector consisting of the entries of $\pmb{a}$ in $J$.
For any matrix $\pmb{A} \in \mathbb{R}^{d_1 \times d_2}$ and index sets $J_1 \subseteq [d_1]$ and $J_2 \subseteq [d_2]$, $\pmb{A}_{J_1,J_2}$ and $\pmb{A}_{J_1,:} (\pmb{A}_{:,J_2})$ denote the $|J_1|\times|J_2|$ sub-matrix of $\pmb{A}$ consisting of rows in $J_1$ and columns in $J_2$, and the $|J_1|\times d_2$ ($d_1 \times |J_2|$) sub-matrix of $\pmb{A}$ consisting of rows in $J_1$ (columns in $J_2$), respectively.
$\|\pmb{a}\|_1$, $\|\pmb{a}\|_2$ and $\|\pmb{a}\|_\infty$ represent the $l_1$ norm, $l_2$ norm and maximum norm of a vector $\pmb{a}$, respectively.
$\{ \pmb{e}_i ~:~ i\in [d] \}$ indicates the standard basis of $\mathbb{R}^d$.

A variety of norms on matrices will be used: we denote by $\|\pmb{A}\|_2$ the spectral norm and by $\|\pmb{A}\|_F$ the Frobenius norm of a matrix $\pmb{A}$.
We let $\|\pmb{A}\|_{1,1} = \sum_{i\in[d_1], j\in[d_2]}|A_{i,j}|$, 
$\|\pmb{A}\|_{\max} = \|\pmb{A}\|_{\infty,\infty} = \max_{i\in[d_1], j\in[d_2]}|A_{i,j}|$,
$\|\pmb{A}\|_{2,\infty} = \max_{j\in[d_2]} \|\pmb{A}_{:,j}\|_2$
and $\|\pmb{A}\|_{1,\infty} = \max_{j\in[d_2]} \|\pmb{A}_{:,j}\|_1$ represent the $l_{1,1}$ norm, the entrywise $l_{\infty}$ norm, the $l_{2,\infty}$ norm and the $l_{1,\infty}$ norm of a matrix $\pmb{A}$, respectively.
The trace of $\pmb{A}$ is denoted $tr(\pmb{A})$, and the matrix inner product of $\pmb{A}$ and $\pmb{B}$ is denoted $\langle \pmb{A}, \pmb{B} \rangle$.
Also, $\sigma_i(\pmb{A})$ and $\lambda_i(\pmb{A})$ represent the $i$th largest singular value and the $i$th largest eigenvalue of $\pmb{A}$, respectively.

The notation $C,C_1,\dots,c,c_1,\dots$ denote positive constants whose values may change from line to line.
The notation $f(x) = o(g(x))$ or $f(x) \ll g(x)$ means $\lim_{x\rightarrow\infty}f(x)/g(x) = 0$;
$f(x) = \omega(g(x))$ or $f(x) \gg g(x)$ means $\lim_{x\rightarrow\infty}f(x)/g(x) = \infty$;
$f(x) = O(g(x))$ or $f(x) \lesssim g(x)$ means that there exists a constant $C$ such that $f(x)\leq C g(x)$ asymptotically;
$f(x) = \Omega(g(x))$ or $f(x) \gtrsim g(x)$ means that there exists a constant $C$ such that $f(x)\geq C g(x)$ asymptotically;
$f(x) = \Theta(g(x))$ or $f(x) \simeq g(x)$ means that there exists constants $C$ and $C'$ such that $C g(x) \leq f(x)\leq C' g(x)$ asymptotically.

\subsection{Model}
\label{subsec:model}

We now introduce our model assumption. 
Suppose that an unknown matrix $\pmb{M}^*\in\mathbb{R}^{d\times d}$ is symmetric. 
The spectral decomposition of $\pmb{M}^*$ is given by
$$
\pmb{M}^* = \sum_{k\in [d]}\lambda_k(\pmb{M}^*) \pmb{u}_k \pmb{u}_k^\top,
$$
where $\lambda_1(\pmb{M}^*)\geq \cdots \geq \lambda_d(\pmb{M}^*)$ are its eigenvalues and $\pmb{u}_1,\dots,\pmb{u}_d \in \mathbb{R}^d$ are the corresponding eigenvectors.
We assume that $\lambda_1(\pmb{M}^*) > \lambda_2(\pmb{M}^*)$ and the leading eigenvector $\pmb{u}_1$ of $\pmb{M}^*$ is sparse, that is, for some set $J\in [d]$,
$$
\begin{cases}
u_{1,i} \neq 0 & \text{if } i\in J \\
u_{1,i} = 0 & \text{otherwise.}
\end{cases}
$$	
With a notation $supp(\pmb{a}):=\{i\in[d] : a_{i}\neq 0\}$ for any vector $\pmb{a}\in\mathbb{R}^d$, we can write $J = supp(\pmb{u}_1)$.
Also, we denote the size of $J$ by $s$.

\paragraph{Incomplete and noisy observation}

Suppose that we have only noisy observations of the entries of $\pmb{M}^*$ over a sampling set $\Omega \subseteq [d]\times[d]$.
	Specifically, we observe a symmetric matrix $\pmb{M}\in\mathbb{R}^{d\times d}$ such that 
	$$
	M_{i,j}= M_{j,i}= \delta_{i,j}\cdot(M^*_{i,j} + \epsilon_{i,j})
	$$
	for $1\leq i\leq j\leq d$, where $\delta_{i,j} = 1$ if $(i,j)\in\Omega$ and $\delta_{i,j} = 0$ otherwise,
	 and $\epsilon_{i,j}$ is the noise at location $(i,j)$. 
In this paper, we consider the following assumptions on random sampling and random noise: for $1\leq i\leq j\leq d$,
	\begin{itemize}
		\item Each $(i,j)$ is included in the sampling set $\Omega$ independently with probability $p$ (that is, $\delta_{i,j}\overset{i.i.d.}{\sim} Ber(p)$.)
		\item $\delta_{i,j}$'s and $\epsilon_{i,j}$'s are mutually independent.
		\item $\mathbb{E}[\epsilon_{i,j}] = 0$ and $\mathsf{Var}[\epsilon_{i,j}] = \sigma^2$.
		\item $|\epsilon_{i,j}|\leq B$ almost surely.
	\end{itemize}

\section{Main Results}
\label{sec:main_results}

As mentioned in the introduction, we consider the following semidefinite programming (SDP) in order to recover the true support $J$:
\begin{equation}
\label{fps_problem}
\hat{\pmb{X}} = \underset{\pmb{X}\succeq 0 \text{ and } tr(\pmb{X}) = 1}{\arg\max}~ 
\langle \pmb{M}, \pmb{X} \rangle - \rho \|\pmb{X}\|_{1,1},
\end{equation}
where we estimate $J$ by $\hat{J} = supp(diag(\hat{\pmb{X}}))$.
We recall that \eqref{fps_problem} is a convex relaxation of the following nonconvex problem:
\begin{equation}
\label{noncvx_problem}
\hat{\pmb{x}} = \underset{\pmb{x}^\top \pmb{x} = 1}{\arg\max}~ \pmb{x}^\top \pmb{M}\pmb{x} - \rho \|\pmb{x}\|_1^2.
\end{equation}

In Theorem \ref{thm_main},
we will show that under appropriate conditions, the solution of \eqref{fps_problem} attains $\hat{J} = J$ with high probability.
Our main technical tool used in the proof is the primal-dual witness argument \citep{wainwright2009sharp}.
We start with deriving the sufficient conditions for the primal-dual solutions of \eqref{fps_problem} to be uniquely determined and satisfy $supp(diag(\hat{\pmb{X}})) = J$. 
We then establish a proper candidate solution which meets the derived sufficient conditions,
where we make use of the Karush-Kuhn-Tucker (KKT) conditions of \eqref{noncvx_problem} to set up a reasonable candidate.
We finally develop the conditions under which the established candidate solution satisfies the sufficient conditions from the primal-dual witness argument of \eqref{fps_problem} with high probability.
Detailed proof is given in Appendix \ref{sec:proof_of_thm1}.

\begin{theorem}
\label{thm_main}
Under the model defined in Section \ref{subsec:model},
assume that the following conditions hold:
\begin{align*}
&2\sqrt{2}\cdot\frac{K_1 + \rho s}{p(\lambda_1(\pmb{M}^*_{J,J}) - \lambda_2(\pmb{M}^*_{J,J}))}
\leq
\min_{i \in J}|u_{1,i}|,
\\
&\rho
>
2\sqrt{ps^c  \cdot \big\{(1-p) \|\pmb{M}^*_{J^c,J}\|_F^2 + (d-s)s\sigma^2 \big\} } + p\cdot \|\pmb{M}^*_{J^c,J}\|_{\max}
\\
&(K_2 + p\cdot \| \pmb{M}^*_{J^c,J} \|_2)^2\cdot (1+\sqrt{s})^2
\leq
\Big\{ 
p\cdot (\lambda_1(\pmb{M}^*_{J,J}) - \lambda_2(\pmb{M}^*_{J,J}))
-
2\cdot K_1 - 2\rho s \Big\}
\\&~~~~~~~~~~~~~~~~~~~~~~~~~~~~~~~~~~~~~~~~~~~\times
\Big\{
p\cdot (\lambda_1(\pmb{M}^*_{J,J}) - \lambda_1(\pmb{M}^*_{J^c,J^c} ))
- K_1 - K_3 - \rho d\Big\},
\end{align*}
where $c > 0$, and $K_1$, $K_2$ and $K_3$ are defined as follows:
\begin{align*}
K_1 &:= (c+1)\cdot R_1 \log (2s) + \sqrt{2(c+1)}\cdot R_2 \sqrt{\log (2s)} 
\\
K_2 &:= (c+1)\cdot R_3 \log d + \sqrt{2(c+1)}\cdot R_4 \sqrt{\log d}
\\
K_3 &:= (c+1)\cdot R_5 \log(2(d-s)) + \sqrt{2(c+1)}\cdot R_6 \sqrt{\log(2(d-s))}
\end{align*}
and
\begin{align*}
R_1 
&:= \max \{ (1-p) \|\pmb{M}_{J,J}^*\|_{\max} + B,~~ p\|\pmb{M}_{J,J}^*\|_{\max} \},
\\
R_2 
&:= \sqrt{p(1-p)}\|\pmb{M}_{J,J}^*\|_{2,\infty} + \sqrt{ps\sigma^2},
\\
R_3 
&:= \max \{ (1-p) \|\pmb{M}_{J^c,J}^*\|_{\max} + B,~~ p\|\pmb{M}_{J^c,J}^*\|_{\max} \},
\\
R_4 
&:= \max \{ \sqrt{p(1-p)}\|\pmb{M}_{J^c,J}^*\|_{2,\infty} + \sqrt{p(d-s)\sigma^2},
\sqrt{p(1-p)}\|\pmb{M}_{J,J^c}^*\|_{2,\infty} + \sqrt{ps\sigma^2} \},
\\
R_5 
&:= \max \{ (1-p) \|\pmb{M}_{J^c,J^c}^*\|_{\max} + B,~~ p\|\pmb{M}_{J^c,J^c}^*\|_{\max} \},
\\
R_6 
&:= \sqrt{p(1-p)}\|\pmb{M}_{J^c,J^c}^*\|_{2,\infty} + \sqrt{p(d-s)\sigma^2}.
\end{align*}

Then the optimal solution $\hat{\pmb{X}}$ to the problem (\ref{fps_problem}) is unique
and satisfies $supp(diag(\hat{\pmb{X}})) = J$
with probability at least $1-s^{-c} - d^{-c} - (2s)^{-c} - (2(d-s))^{-c}$.
\end{theorem}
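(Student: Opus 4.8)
The plan is to use the primal-dual witness (PDW) technique adapted to the SDP \eqref{fps_problem}, exactly as the authors foreshadow. First I would write down the KKT/optimality conditions for \eqref{fps_problem}: a primal feasible $\hat{\pmb{X}} \succeq 0$, $tr(\hat{\pmb{X}})=1$ is optimal iff there exist a dual scalar $\eta$ (for the trace constraint), a PSD dual matrix $\pmb{Z} \succeq 0$ (for $\hat{\pmb{X}} \succeq 0$) with complementary slackness $\langle \pmb{Z}, \hat{\pmb{X}}\rangle = 0$, and a subgradient $\pmb{W} \in \partial\|\hat{\pmb{X}}\|_{1,1}$ (so $W_{ij} = \operatorname{sign}(\hat X_{ij})$ on the support and $|W_{ij}|\le 1$ off it), satisfying the stationarity equation $\pmb{M} - \rho \pmb{W} + \pmb{Z} = \eta \pmb{I}$. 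Then I would impose the structural \emph{ansatz} that the witness solution is rank one, $\hat{\pmb{X}} = \hat{\pmb{x}}\hat{\pmb{x}}^\top$ with $supp(\hat{\pmb{x}}) = J$, which forces $\pmb{Z}_{J,J} = 0$, $\pmb{Z}_{J,J^c}=0$ (from complementary slackness on the range) and leaves $\pmb{Z}_{J^c,J^c}$ as the only free PSD block. Restricting the stationarity equation to the $(J,J)$ block gives $\pmb{M}_{J,J}\hat{\pmb{x}}_J - \rho \operatorname{sign}(\hat{\pmb{x}}_J)\|\hat{\pmb{x}}_J\|_1 = \eta \hat{\pmb{x}}_J$, i.e. $\hat{\pmb{x}}_J$ must be the leading eigenvector of the ``deflated'' matrix $\pmb{M}_{J,J} - \rho \operatorname{sign}(\hat{\pmb{x}}_J)\operatorname{sign}(\hat{\pmb{x}}_J)^\top$ — which is precisely the KKT condition of the nonconvex problem \eqref{noncvx_problem} restricted to $J$, explaining the remark about using \eqref{noncvx_problem} to build the candidate.

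Next I would establish the deterministic part: show that once such a witness $(\hat{\pmb{X}}, \pmb{Z}, \pmb{W}, \eta)$ exists and satisfies (i) $\hat x_{1,i} \neq 0$ strictly for all $i\in J$ (so $\hat J \supseteq J$), (ii) the dual feasibility $\pmb{Z}_{J^c,J^c} \succeq 0$ with enough of a spectral margin, and (iii) the strict dual feasibility of the subgradient off the block, $\|\pmb{W}_{J^c,J}\|_{\max} < 1$ (strictly), then $\hat{\pmb{X}}$ is the \emph{unique} optimum and $supp(diag(\hat{\pmb{X}})) = J$. Uniqueness is the delicate deterministic step: one argues that strict complementary slackness plus a strict dual gap pins down the support exactly, and then convexity (strict on the relevant face) forces uniqueness of $\hat{\pmb{X}}$; this is the standard ``strict dual feasibility $\Rightarrow$ uniqueness'' lemma in the Lasso/SDP PDW literature, which I would reprove in this setting. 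Along the way the candidate $\eta$ and $\pmb{Z}_{J^c,J^c}$ are defined explicitly: $\eta = \lambda_1$ of the deflated $(J,J)$ matrix, $\pmb{W}_{J^c,J}$ is forced by stationarity to be $\frac{1}{\rho}\pmb{M}_{J^c,J}\hat{\pmb{x}}_J / \|\hat{\pmb{x}}_J\|_1$ (roughly), $\pmb{W}_{J^c,J^c}$ chosen to make $\pmb{Z}_{J^c,J^c} = \eta\pmb{I} - \pmb{M}_{J^c,J^c} + \rho\pmb{W}_{J^c,J^c}$ PSD, and $\pmb{Z}_{J^c,J}$ likewise.

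The last and most laborious step — and the one I expect to be the main obstacle — is the probabilistic verification that the three conditions hold under the hypotheses of the theorem, which is where all the $K_i$, $R_i$ and the bounds on $p,\rho,\sigma^2$ come from. Here I would write $\pmb{M} = p\pmb{M}^* + \pmb{E}$ where $\pmb{E}$ collects the centered sampling fluctuation $(\delta_{ij}-p)M^*_{ij}$ and the noise term $\delta_{ij}\epsilon_{ij}$; each entry is bounded and has controlled variance, so I would apply a Bernstein-type / matrix-Bernstein concentration to control (a) $\|\pmb{E}_{J,J}\|_2$ and $\|\pmb{E}_{J,J}\pmb{v}\|$ for a fixed unit $\pmb{v}$ — feeding the eigenvalue-gap condition (the first displayed inequality, via a Davis–Kahan / eigenvector-perturbation argument to get $\min_{i\in J}|\hat x_{1,i}| \ge \min_{i\in J}|u_{1,i}| - (\text{error}) > 0$); (b) $\|\pmb{E}_{J^c,J}\hat{\pmb{x}}_J\|_\infty$ and $\|\pmb{E}_{J^c,J}\|_{2,\infty}$ — feeding the strict dual feasibility $\|\pmb{W}_{J^c,J}\|_{\max}<1$, which produces the second condition $\rho > 2\sqrt{ps^c(\cdots)} + p\|\pmb{M}^*_{J^c,J}\|_{\max}$; and (c) $\|\pmb{E}_{J^c,J^c}\|_2$ and $\|\pmb{E}_{J,J^c}\|_2$ — feeding the PSD-margin condition $\pmb{Z}_{J^c,J^c}\succeq 0$, which gives the third (product) inequality comparing $p(\lambda_1(\pmb{M}^*_{J,J}) - \lambda_1(\pmb{M}^*_{J^c,J^c}))$ against the perturbation scale. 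Each tail bound contributes one term $s^{-c}, d^{-c}, (2s)^{-c}, (2(d-s))^{-c}$ to the failure probability via a union bound, matching the stated $1 - s^{-c} - d^{-c} - (2s)^{-c} - (2(d-s))^{-c}$. The bookkeeping tying the $l_1$-norm of $\hat{\pmb{x}}_J$ (which is $\le \sqrt{s}$, hence the $(1+\sqrt{s})^2$ and $\rho s$, $\rho d$ factors) to the deterministic margins, and carefully propagating the eigenvector perturbation through $\|\hat{\pmb{x}}_J\|_1$ and $\operatorname{sign}(\hat{\pmb{x}}_J)$, is the part most likely to hide subtle constant-chasing difficulties.
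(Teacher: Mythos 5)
Your overall route is the paper's route: a primal--dual witness for the SDP, a rank-one candidate built from the KKT conditions of the nonconvex problem (\ref{noncvx_problem}) so that $\hat{\pmb{x}}$ is the leading eigenvector of $\pmb{M}_{J,J}-\rho\hat{\pmb{z}}\hat{\pmb{z}}^\top$ with $\hat{\pmb{z}}$ the sign vector, the forced off-support subgradient $\hat{\pmb{w}}=\frac{1}{\rho\|\hat{\pmb{x}}\|_1}\pmb{M}_{J^c,J}\hat{\pmb{x}}$ with strict dual feasibility $\|\hat{\pmb{w}}\|_\infty<1$, verification by matrix Bernstein, Chebyshev, Davis--Kahan and Weyl, and a union bound giving the stated probability. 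Two steps in your sketch, however, are wrong as written and are precisely where the paper's actual mechanism is needed.

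First, complementary slackness $\langle\pmb{Z},\hat{\pmb{X}}\rangle=0$ with $\pmb{Z},\hat{\pmb{X}}\succeq 0$ only forces $\pmb{Z}$ to annihilate the range of $\hat{\pmb{X}}$, i.e.\ $\pmb{Z}_{:,J}\hat{\pmb{x}}_J=0$; it does not force the blocks $\pmb{Z}_{J,J}$ and $\pmb{Z}_{J,J^c}$ to vanish. In the correct witness $\pmb{Z}_{J,J}=\hat{\mu}\pmb{I}-(\pmb{M}_{J,J}-\rho\hat{\pmb{z}}\hat{\pmb{z}}^\top)\neq 0$, and if you literally set $\pmb{Z}_{J,J}=0$ the stationarity equation on that block would demand $\pmb{M}_{J,J}-\rho\hat{\pmb{z}}\hat{\pmb{z}}^\top=\eta\pmb{I}$, which is impossible in general (your own eigenvector equation, which is correct, uses only the action on $\hat{\pmb{x}}$). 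Consequently dual PSD-ness is not the single-block condition ``$\pmb{Z}_{J^c,J^c}\succeq 0$'' but the full-matrix condition $\pmb{M}-\rho\hat{\pmb{Z}}\preceq\hat{\mu}\pmb{I}$, equivalently $\lambda_1(\pmb{M}-\rho\hat{\pmb{Z}})=\lambda_1(\pmb{M}_{J,J}-\rho\hat{\pmb{z}}\hat{\pmb{z}}^\top)$, and the coupling through the cross block $\pmb{M}_{J^c,J}-\rho\hat{\pmb{w}}\hat{\pmb{z}}^\top$ is what must be beaten: the paper does this with a quadratic-form argument in the mixing weight between the $J$ and $J^c$ coordinates (Lemma \ref{lemma_eig_ineq} together with Lemma \ref{lemma_quad}), and that is exactly why the third hypothesis of the theorem has product form with the cross-block term $(K_2+p\|\pmb{M}^*_{J^c,J}\|_2)^2(1+\sqrt{s})^2$ on the left; a pure $(J^c,J^c)$-block margin would not produce it. Second, your uniqueness argument (``convexity, strict on the relevant face'') cannot work because the SDP objective is linear in $\pmb{X}$; strict dual feasibility of the $\ell_1$ part alone does not pin down $\hat{\pmb{X}}$ within the $J\times J$ face. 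The paper instead requires the top eigenvalue of $\pmb{M}_{J,J}-\rho\hat{\pmb{z}}\hat{\pmb{z}}^\top$ to be simple (condition (\ref{cond4}), which turns out to be implied by the sign-consistency condition because $\min_{i\in J}|u_{1,i}|<\sqrt{2}$), so that $\hat{\pmb{x}}\hat{\pmb{x}}^\top$ is the unique maximizer of $\langle\pmb{M}_{J,J}-\rho\hat{\pmb{z}}\hat{\pmb{z}}^\top,\cdot\rangle$ over the restricted spectahedron, and then derives a contradiction for any other optimizer via the subgradient inequality. With these two repairs your plan coincides with the paper's proof; the probabilistic bookkeeping you outline (Bernstein for the three blocks, Chebyshev for $\|\pmb{M}_{J^c,J}\|_{\max}$, Davis--Kahan for the sign condition, Weyl for the gaps) is the same as Lemmas \ref{lemma_bernstein}--\ref{lemma_suff_cond_strict_eig}.
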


To better interpret the conditions of $\pmb{M}^*$ and $p$ listed in Theorem  \ref{thm_main} and understand under what circumstance these conditions hold, we consider the following two particular scenarios:
\begin{itemize}
\item[(s1)] $B$ = $\sigma_2 = 0$, that is, the observation $\pmb{M}$ is noiseless (but still incomplete).
\item[(s2)] The rank of $\pmb{M}^*$ is 1.
\end{itemize}
For both cases, we set $p\geq 0.5$ for simplicity.
Under the first setting, we can re-express the conditions on $\pmb{M}^*$ for exact sparse recovery of $J$ in a more interpretable way (specifically, in terms of coherence parameters and spectral gap) as well as the conditions on $p$.
In the second setting, we aim to investigate that the maximum level of noise that is allowed by Theorem \ref{thm_main}.
Corollaries \ref{cor_1} and \ref{cor_2} include the results of the two settings (s1) and (s2), respectively.

Before elaborating the details, we first define the coherence parameters of the sub-matrices $\pmb{M}^*_{J,J}$, $\pmb{M}^*_{J^c,J}$ and $\pmb{M}^*_{J^c,J^c}$.

\begin{definition}[Coherence parameters]
We define the coherence parameters $\mu_0(\pmb{M}^*_{J,J})$, $\mu_1(\pmb{M}^*_{J,J})$, $\mu_2(\pmb{M}^*_{J^c,J})$ and $\mu_3(\pmb{M}^*_{J^c,J^c})$ as follows:
\begin{align*}
\mu_0(\pmb{M}^*_{J,J}) 
&:= \frac{\|\pmb{M}^*_{J,J}\|_{\max}}{\lambda_1(\pmb{M}^*_{J,J}) - \lambda_2(\pmb{M}^*_{J,J})}
,~~ 
\mu_1(\pmb{M}^*_{J,J})
:= \frac{\|\pmb{M}^*_{J,J}\|_{\max}}{\|\pmb{M}^*_{J,J}\|_{2,\infty}},
\\
\mu_2(\pmb{M}^*_{J^c,J})
&:= \min \bigg\{
\frac{\|\pmb{M}^*_{J^c,J}\|_{\max}}{\|\pmb{M}^*_{J^c,J}\|_{F}},~
\max\Big\{
\frac{\|\pmb{M}^*_{J^c,J}\|_{\max}}{\|\pmb{M}^*_{J^c,J}\|_{2,\infty}},~
\frac{\|\pmb{M}^*_{J^c,J}\|_{\max}}{\|{\pmb{M}^{*\top}_{J^c,J}}\|_{2,\infty}}\Big\},~
\frac{\|\pmb{M}^*_{J^c,J}\|_{\max}}{\|{\pmb{M}^{*\top}_{J^c,J}}\|_{\infty, 2}}
\bigg\},
\\
\mu_3(\pmb{M}^*_{J^c,J^c})
&:= \min \bigg\{
\frac{\|\pmb{M}^*_{J^c,J^c}\|_{\max}}{\|\pmb{M}^*_{J^c,J^c}\|_{2}},~
\frac{\|\pmb{M}^*_{J^c,J^c}\|_{\max}}{\|\pmb{M}^*_{J^c,J^c}\|_{2,\infty}}
\bigg\}.
\end{align*}
We use $\mu_0$, $\mu_1$, $\mu_2$ and $\mu_3$ as shorthand for $\mu_0(\pmb{M}^*_{J,J})$, $\mu_1(\pmb{M}^*_{J,J})$, $\mu_2(\pmb{M}^*_{J^c,J})$ and $\mu_3(\pmb{M}^*_{J^c,J^c})$, respectively.
Intuitively, when each coherence parameter is small, all the entries of the corresponding matrix have comparable magnitudes.
Note that $\frac{1}{s}\leq \mu_0 \leq 1$, 
$\frac{1}{\sqrt{s}}\leq \mu_1 \leq 1$, 
$\frac{1}{\sqrt{s(d-s)}}\leq \mu_2 \leq 1$, 
$\frac{1}{d-s}\leq \mu_3 \leq 1$. 
\end{definition}

\begin{corollary}
\label{cor_1}
Assume that $B$ = $\sigma_2 = 0$, $p\geq 0.5$ and $\min_{i \in J}|u_{1,i}|=\Omega(\frac{1}{\sqrt{s}})$.
Denote $\lambda_1(\pmb{M}^*_{J,J}) - \lambda_2(\pmb{M}^*_{J,J})$ by $\bar{\lambda}(\pmb{M}^*_{J,J})$.
If the following conditions hold:
\begin{align}
\mu_0 
&= o\bigg(\frac{1}{\sqrt{s}\log s}\bigg),
\label{cor1_cond1}
\\
\|\pmb{M}^*_{J^c,J}\|_{\max}
&= o\bigg(\frac{\bar{\lambda}(\pmb{M}^*_{J,J})}{s}
\cdot \min\Big\{\mu_2,~ \frac{1}{s},~ \frac{\sqrt{s}}{\log d} \Big\}
\bigg),
\label{cor1_cond2}
\\
\|\pmb{M}^*_{J^c,J^c}\|_{\max}
&= o\bigg(\bar{\lambda}(\pmb{M}^*_{J,J})
\cdot \min\Big\{\mu_3,~ \frac{1}{\log (d-s)} \Big\}
\bigg),
\label{cor1_cond3}
\\
\sqrt{\frac{1-p}{p}}
&= o\bigg(\min \Big\{
\mu_1\sqrt{\log s},
\nonumber
\\
&~~~~~~~~~~~~~~
\frac{\bar{\lambda}(\pmb{M}^*_{J,J}) \mu_2}{\|\pmb{M}^*_{J^c,J}\|_{\max}}\cdot
\min \Big\{\frac{1}{s^2\sqrt{s}},~ \frac{1}{s\sqrt{s(d-s)}} \Big\},
\label{cor1_cond4}
\\
&~~~~~~~~~~~~~~
\frac{\bar{\lambda}(\pmb{M}^*_{J,J}) \mu_3}{\|\pmb{M}^*_{J^c,J^c}\|_{\max}}\cdot \frac{1}{\sqrt{\log(d-s)}}
\Big\}
\bigg),
\nonumber
\\
\rho
&= \Theta\bigg(\frac{p\bar{\lambda}(\pmb{M}^*_{J,J})}{s^2}
\bigg)
\label{cor1_cond5},
\end{align}
then 
the conditions in Theorem \ref{thm_main} hold asymptotically, that is,
when $s$ and $d$ are sufficiently large,
the optimal solution $\hat{\pmb{X}}$ to the problem (\ref{fps_problem}) is unique
and satisfies $supp(diag(\hat{\pmb{X}})) = J$
with probability at least $1-s^{-1} - d^{-1} - (2s)^{-1} - (2(d-s))^{-1}$.
\end{corollary}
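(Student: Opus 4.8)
The plan is to verify, condition by condition, that the five assumptions \eqref{cor1_cond1}--\eqref{cor1_cond5} (together with $B=\sigma^2=0$, $p\ge 1/2$, $\min_{i\in J}|u_{1,i}|=\Omega(1/\sqrt s)$) force the three displayed inequalities of Theorem~\ref{thm_main} for all sufficiently large $s,d$; this is a purely deterministic implication, and the probability bound then transfers once we fix the free constant $c=1$, which turns $1-s^{-c}-d^{-c}-(2s)^{-c}-(2(d-s))^{-c}$ into the claimed expression. The first task is bookkeeping: with $\sigma^2=B=0$ every $\sqrt{\,\cdot\,\sigma^2}$ term and every $+B$ drops out, and $p\ge 1/2$ collapses each $\max\{(1-p)(\cdot),p(\cdot)\}$ to its second argument, so $R_1=p\|\pmb{M}^*_{J,J}\|_{\max}$, $R_3=p\|\pmb{M}^*_{J^c,J}\|_{\max}$, $R_5=p\|\pmb{M}^*_{J^c,J^c}\|_{\max}$, while $R_2,R_4,R_6$ become $\sqrt{p(1-p)}$ times the relevant $\|\cdot\|_{2,\infty}$ factors; hence $K_1=\Theta\big(p\|\pmb{M}^*_{J,J}\|_{\max}\log s+\sqrt{p(1-p)}\|\pmb{M}^*_{J,J}\|_{2,\infty}\sqrt{\log s}\big)$ and similarly for $K_2,K_3$. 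Throughout I would convert every auxiliary norm into $\|\cdot\|_{\max}$ via the coherence parameters, using $\|\pmb{M}^*_{J,J}\|_{2,\infty}=\|\pmb{M}^*_{J,J}\|_{\max}/\mu_1$, $\|\pmb{M}^*_{J^c,J}\|_F\le\|\pmb{M}^*_{J^c,J}\|_{\max}/\mu_2$, $\|\pmb{M}^*_{J^c,J}\|_2\le\|\pmb{M}^*_{J^c,J}\|_{\max}/\mu_2$, $\|\pmb{M}^*_{J^c,J^c}\|_2\le\|\pmb{M}^*_{J^c,J^c}\|_{\max}/\mu_3$, the identity $\|\pmb{M}^*_{J,J^c}\|_{2,\infty}=\|(\pmb{M}^*_{J^c,J})^\top\|_{2,\infty}$, and $\|\pmb{M}^*_{J,J}\|_{\max}=\mu_0\bar\lambda(\pmb{M}^*_{J,J})$; for brevity write $\bar\lambda$ for $\bar\lambda(\pmb{M}^*_{J,J})$.

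For the first inequality of Theorem~\ref{thm_main}, since $\min_{i\in J}|u_{1,i}|=\Omega(1/\sqrt s)$ it suffices to show $K_1+\rho s=o(p\bar\lambda/\sqrt s)$. The term $\rho s$ equals $\Theta(p\bar\lambda/s)$ by \eqref{cor1_cond5}, hence is $o(p\bar\lambda/\sqrt s)$; the $\log s$-part of $K_1$, written as $\Theta(p\mu_0\bar\lambda\log s)$, is $o(p\bar\lambda/\sqrt s)$ precisely by \eqref{cor1_cond1}; and the $\sqrt{\log s}$-part, written as $\Theta(\sqrt{p(1-p)}\,(\mu_0/\mu_1)\bar\lambda\sqrt{\log s})$, is $o(p\bar\lambda/\sqrt s)$ by combining the bound $\sqrt{(1-p)/p}=o(\mu_1\sqrt{\log s})$ from \eqref{cor1_cond4} with \eqref{cor1_cond1}. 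For the second inequality, with $\sigma^2=0$ and $c=1$ its right side is $2\sqrt{p(1-p)s}\,\|\pmb{M}^*_{J^c,J}\|_F+p\|\pmb{M}^*_{J^c,J}\|_{\max}$; since $\rho=\Theta(p\bar\lambda/s^2)$ by \eqref{cor1_cond5}, I would dominate $p\|\pmb{M}^*_{J^c,J}\|_{\max}$ using the $1/s$-branch of the minimum in \eqref{cor1_cond2}, and dominate $\sqrt{p(1-p)s}\,\|\pmb{M}^*_{J^c,J}\|_F\le\sqrt{p(1-p)s}\,\|\pmb{M}^*_{J^c,J}\|_{\max}/\mu_2$ using the $\tfrac{1}{s^2\sqrt s}$-branch of \eqref{cor1_cond4}.

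The third inequality carries the bulk of the calculation and is the step I expect to be the main obstacle, chiefly because one must match each of the many nested minima in \eqref{cor1_cond2}--\eqref{cor1_cond4} to the correct term. I would first lower-bound the two factors on its right-hand side: $p\bar\lambda-2K_1-2\rho s=\Theta(p\bar\lambda)$, since $K_1$ and $\rho s$ were just shown to be $o(p\bar\lambda)$; and for the second factor, $\lambda_1(\pmb{M}^*_{J^c,J^c})\le\|\pmb{M}^*_{J^c,J^c}\|_{\max}/\mu_3=o(\bar\lambda)$ by \eqref{cor1_cond3}, while $K_1+K_3+\rho d=o(p\lambda_1(\pmb{M}^*_{J,J}))$ by \eqref{cor1_cond1}, \eqref{cor1_cond3}, \eqref{cor1_cond5}, so it is $\Omega(p\lambda_1(\pmb{M}^*_{J,J}))=\Omega(p\bar\lambda)$ in the regime of interest where $\lambda_1(\pmb{M}^*_{J,J})\gtrsim\bar\lambda$ (i.e.\ where $\lambda_1(\pmb{M}^*_{J^c,J^c})$ is negligible against the gap — this comparison between $\lambda_1(\pmb{M}^*_{J,J})$ and $\bar\lambda$ is the one genuinely delicate point). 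Thus the right side is $\Omega(p^2\bar\lambda^2)$, and it remains to verify $(K_2+p\|\pmb{M}^*_{J^c,J}\|_2)^2(1+\sqrt s)^2=o(p^2\bar\lambda^2)$, i.e.\ $p\|\pmb{M}^*_{J^c,J}\|_2\sqrt s=o(p\bar\lambda)$ and $K_2\sqrt s=o(p\bar\lambda)$. The first follows from the $\mu_2$-branch of \eqref{cor1_cond2} via $\|\pmb{M}^*_{J^c,J}\|_2\le\|\pmb{M}^*_{J^c,J}\|_{\max}/\mu_2$; for the second, the $\log d$-part of $K_2$ is handled by the $\sqrt s/\log d$-branch of \eqref{cor1_cond2} and the $\sqrt{\log d}$-part by converting the $\|\cdot\|_{2,\infty}$ factors to $\|\cdot\|_{\max}$ and invoking the $\tfrac{1}{s\sqrt{s(d-s)}}$-branch of \eqref{cor1_cond4}, with the $\tfrac{1}{\log(d-s)}$-branch of \eqref{cor1_cond3} and the last branch of \eqref{cor1_cond4} controlling the $K_3$ contribution to the second right-hand factor. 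Collecting the four ``$o(1)$'' statements and substituting $c=1$ into the probability bound of Theorem~\ref{thm_main} completes the argument.
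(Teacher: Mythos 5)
Your verification of the first two conditions of Theorem \ref{thm_main} is sound and coincides with the paper's own calculation (same use of $\|\pmb{M}^*_{J,J}\|_{2,\infty}=\|\pmb{M}^*_{J,J}\|_{\max}/\mu_1$, $\|\pmb{M}^*_{J^c,J}\|_F\le\|\pmb{M}^*_{J^c,J}\|_{\max}/\mu_2$, and the same branch-matching of \eqref{cor1_cond1}, \eqref{cor1_cond2}, \eqref{cor1_cond4}, \eqref{cor1_cond5}). The gap is in the third condition, exactly at the term you dismiss in passing: you claim $K_1+K_3+\rho d=o\big(p\lambda_1(\pmb{M}^*_{J,J})\big)$ ``by \eqref{cor1_cond1}, \eqref{cor1_cond3}, \eqref{cor1_cond5}.'' But \eqref{cor1_cond5} gives $\rho=\Theta\big(p\bar{\lambda}(\pmb{M}^*_{J,J})/s^2\big)$, hence $\rho d=\Theta\big(p\bar{\lambda}(\pmb{M}^*_{J,J})\,d/s^2\big)$, and nothing in the corollary's hypotheses controls $d/s^2$ or ties $\lambda_1(\pmb{M}^*_{J,J})$ to $\bar{\lambda}(\pmb{M}^*_{J,J})\,d/s^2$. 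In the regime $d\gg s^2$ --- the typical sparse high-dimensional regime --- the factor $p\big(\lambda_1(\pmb{M}^*_{J,J})-\lambda_1(\pmb{M}^*_{J^c,J^c})\big)-K_1-K_3-\rho d$ is negative, so the third displayed condition of Theorem \ref{thm_main} cannot be verified literally from \eqref{cor1_cond1}--\eqref{cor1_cond5}; no amount of bookkeeping fixes this step.

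This is precisely where the paper's proof departs from a literal check of Theorem \ref{thm_main}: it reruns the proof of Lemma \ref{lemma_suff_cond_eig}, replacing the crude estimate $\rho\|\hat{\pmb{w}}\|_2^2\le\rho(d-s)$ (which is what produces $\rho d$ in \eqref{ineq_bound3}) by the sharper bound of Lemma \ref{lemma_w_2}, i.e.\ it uses \eqref{eq:using_another_bound_w2} with $\rho\|\hat{\pmb{w}}\|_2^2\le\frac{1}{\rho}\big(p\|\pmb{M}^*_{J,J^c}\|_{\infty,2}+c_0\sqrt{d-s}\big)^2$, $c_0=\sqrt{sp(1-p)}\,\|\pmb{M}^*_{J^c,J}\|_F$. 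That term is then controlled by the $\mu_2$-branches of \eqref{cor1_cond2} and \eqref{cor1_cond4} together with $1/\rho\simeq s^2/\big(p\bar{\lambda}(\pmb{M}^*_{J,J})\big)$, yielding a contribution $\ll p\bar{\lambda}(\pmb{M}^*_{J,J})$ with no constraint on $d/s^2$. To repair your argument you must likewise descend into the proof of Lemma \ref{lemma_suff_cond_eig} and invoke Lemma \ref{lemma_w_2}; working only with the stated conditions of Theorem \ref{thm_main} is not enough. A minor secondary point: your caveat ``in the regime of interest where $\lambda_1(\pmb{M}^*_{J,J})\gtrsim\bar{\lambda}(\pmb{M}^*_{J,J})$'' is an extra unproven assumption (it holds automatically only when $\lambda_2(\pmb{M}^*_{J,J})\ge 0$); the paper's write-up leans on the same inequality, but you should state it rather than leave it implicit.
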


\paragraph{Conditions on true matrix $\pmb{M}^*$}
From the conditions in Corollary \ref{cor_1}, we can find desirable properties on the matrix $\pmb{M}^*$ as follows:

\begin{itemize}
\item {\it Incoherence of $\pmb{M}^*_{J,J}$, and coherence of $\pmb{M}^*_{J^c,J}$ and $\pmb{M}^*_{J^c,J^c}$}: 
From the coherence parameter in \eqref{cor1_cond1} and those in \eqref{cor1_cond2}, \eqref{cor1_cond3} and \eqref{cor1_cond4}, we see that the sub-matrix $\pmb{M}^*_{J,J}$ and the sub-matrices $\pmb{M}^*_{J^c,J}$ and $\pmb{M}^*_{J^c,J^c}$ are expected to be incoherent and coherent, respectively.
This is different from other problems involving incomplete matrices, such as matrix completion \citep{candes2009exact} and standard PCA on incomplete data \citep{cai2021subspace},
where the entire matrix, not a sub-matrix, is required to be incoherent.

We can easily check the need of incoherence of $\pmb{M}^*_{J^c,J}$ with an example that the sub-matrix has only one entry with a large magnitude while the other entries have relatively small values.
Even if the true leading eigenvector of the sub-matrix is not sparse, the sparse PCA algorithm may produce a solution $\hat{J}$ which has a smaller size than that of the true support $J$.

However, for $\pmb{M}^*_{J^c,J}$ and $\pmb{M}^*_{J^c,J^c}$, coherence is preferable: intuitively speaking,
when $\pmb{M}^*_{J^c,J}$ and $\pmb{M}^*_{J^c,J^c}$ are the most coherent, that is, only one entry is nonzero in each sub-matrix, and all other entries are zero,
missing the entries in $\pmb{M}^*_{J^c,J}$ and $\pmb{M}^*_{J^c,J^c}$ does not change the leading eigenvector of $\pmb{M}^*$.
On the other hand, when $\pmb{M}^*_{J^c,J}$ and $\pmb{M}^*_{J^c,J^c}$ are incoherent, that is, all the entries have comparable magnitudes,
missing only a few entries changes the leading eigenvector and its sparsitency, so that sparse PCA is likely to fail to recover $J$. A simple illustration can be found in the Appendix \ref{sec:example_coherence}.

\item {\it Large spectral gap $\bar{\lambda}(\pmb{M}^*_{J,J})$ ($= \lambda_1(\pmb{M}^*_{J,J}) - \lambda_2(\pmb{M}^*_{J,J})$)}:
This can be found in \eqref{cor1_cond2}, \eqref{cor1_cond3} and \eqref{cor1_cond4}.
A sufficiently large spectral gap requirement has been also discussed in the work on sparse PCA on the complete matrix \citep{lei2015sparsistency}.
It ensures the uniqueness and identifiability of the orthogonal projection matrix with respect to the principal subspace.
If the spectral gap of eigenvalues is nearly zero, then the top two eigenvectors are indistinguishable given the observational noise, leading to failure to recover the sparsity of the leading eigenvector.
\\
We also note that $\lambda_1(\pmb{M}^*_{J,J}) - \lambda_2(\pmb{M}^*_{J,J}) \geq \lambda_1(\pmb{M}^*) - \lambda_2(\pmb{M}^*)$ since $\lambda_1(\pmb{M}^*_{J,J}) = \lambda_1(\pmb{M}^*)$ and $\lambda_2(\pmb{M}^*_{J,J}) \leq \lambda_2(\pmb{M}^*)$.
Hence, a large $\lambda_1(\pmb{M}^*) - \lambda_2(\pmb{M}^*)$ implies a large $\bar{\lambda}(\pmb{M}^*_{J,J})$.
\item {\it Small magnitudes of $\pmb{M}^*_{J^c,J}$ and $\pmb{M}^*_{J^c,J^c}$}:
This can also be found in \eqref{cor1_cond2}, \eqref{cor1_cond3} and \eqref{cor1_cond4}.
This condition is also natural: if the magnitudes are relatively small, missing the entries will not make a big impact to the result.
\end{itemize}

\paragraph{Conditions on $p$ (ratio of missing data)}
For simplicity, suppose that $\bar{\lambda}(\pmb{M}^*_{J,J}) = O(s)$ and $s=O(\log d)$.
Then from the conditions \eqref{cor1_cond2} and \eqref{cor1_cond3}, we can write
$\|\pmb{M}^*_{J^c,J}\|_{\max}
= \epsilon_1\cdot \min\Big\{\mu_2,~ \frac{1}{s} \Big\}
$ for some $\epsilon_1 = o(1)$
and 
$\|\pmb{M}^*_{J^c,J^c}\|_{\max}
= \epsilon_2\cdot \min\Big\{s\mu_3,~ \frac{s}{\log d} \Big\}$
 for some $\epsilon_2 = o(1)$.

With these notations, we can write the condition \eqref{cor1_cond4} as follows:
$$
\sqrt{\frac{1-p}{p}}
= o\bigg(\min \Big\{
\mu_1\sqrt{\log s},~
\frac{\mu_2}{\epsilon_1}\cdot
\frac{\frac{1}{\sqrt{sd}}}{\min\Big\{\mu_2,~ \frac{1}{s}\Big\}},~
\frac{\mu_3}{\epsilon_2}\cdot \frac{1}{\min\Big\{\mu_3\sqrt{\log d},~ \frac{1}{\sqrt{\log d}} \Big\}}
\Big\}
\bigg).
$$
From the above equation, we can see that the matrix coherence ($\mu_1, \mu_2, \mu_3$) and the matrix magnitudes (in terms of $\epsilon_1$ and $\epsilon_2$) affect the expected number of entries to be observed, as well as $d$ and $s$.
Let us consider two extreme cases where the coherence parameters are maximized and minimized.
We discuss the bound of the sample complexity in each case.

\begin{itemize}
\item {\it The best scenario where the bound of the sample complexity is the lowest}:
Suppose that $\mu_1 = o(\frac{1}{\log s})$ and
$\mu_2 = \mu_3 = 1$ (note that when $\mu_0 = o\big(\frac{1}{\sqrt{s}\log s}\big)$, $\mu_1$ is upper bounded by $o\big(\frac{1}{\log s}\big)$.)
Then the condition \eqref{cor1_cond4} can be written as:
$$
\sqrt{\frac{1-p}{p}}
= o\bigg(\min \Big\{
\frac{1}{\sqrt{\log s}},~~
\frac{1}{\epsilon_1}\cdot \sqrt{\frac{s}{d}},~~
\frac{\sqrt{\log d}}{\epsilon_2}
\Big\}
\bigg)
= o\bigg(\min \Big\{
\frac{1}{\sqrt{\log s}},~~
\frac{1}{\epsilon_1}\cdot \sqrt{\frac{s}{d}}
\Big\}
\bigg).
$$
As $\epsilon_1$ is smaller (i.e., the magnitudes of the entries of $\pmb{M}^*_{J^c,J}$ are smaller,) the bound of $p$ is allowed to be smaller.
In the best case, $\sqrt{\frac{1-p}{p}} = o((\log s)^{-0.5})$,
that is,
$p = \omega\Big(\frac{1}{(\log s)^{-1} + 1}\Big)$.
\item {\it The worst scenario where the bound of the sample complexity is the highest}: 
Suppose that 
$\mu_1 = \frac{1}{\sqrt{s}}$, $\mu_2 = \frac{1}{\sqrt{s(d-s)}}$ and $\mu_3 = \frac{1}{d-s}$.
In this case, the condition \eqref{cor1_cond4} can be written as:
$$
\sqrt{\frac{1-p}{p}}
= o\bigg(\min \Big\{
\sqrt{\frac{\log s}{s}},~
\frac{1}{\epsilon_1}\cdot \frac{1}{\sqrt{sd}},~
\frac{1}{\epsilon_2}\cdot \frac{1}{\sqrt{\log d}}
\Big\}
\bigg).
$$
Suppose that $\epsilon_1$ and $\epsilon_2$ are not as small as $\frac{1}{\sqrt{s}}$. Then $\sqrt{\frac{1-p}{p}}$ is at most $o(d^{-0.5})$,
that is, $p = \omega\Big(\frac{1}{d^{-1} + 1}\Big)$.
\end{itemize}

Next, we consider the second setting (s2) where the rank of $\pmb{M}^*$ is assumed to be $1$, that is, 
$\pmb{M}^* = \lambda_1(\pmb{M}^*) \pmb{u}_1 \pmb{u}_1^\top$ (without loss of generality, we assume
$\lambda_1(\pmb{M}^*) > 0$.)
Trivially, $\pmb{M}^*_{J^c, J} = \pmb{M}^*_{J, J^c} = \pmb{M}^*_{J^c, J^c} = 0$ and Theorem \ref{thm_main} can be greatly simplified.
Here, we focus on analyzing how much noise (parameters $B$ and $\sigma^2$) is allowed.

\begin{corollary}
\label{cor_2}
Assume that $p\geq 0.5$ and the rank of $\pmb{M}^*$ is $1$, that is, 
$\pmb{M}^* = \lambda_1(\pmb{M}^*) \pmb{u}_1 \pmb{u}_1^\top$. Let $\lambda_1(\pmb{M}^*) > 0$.
Suppose that $s$ and $d$ satisfy
$
\frac{1}{\sqrt{s}}
\leq
\frac
{12 + \frac{d-s}{s} + 8\sqrt{2}a_2 - \sqrt{ (4-\frac{d-s}{s} - 8\sqrt{2}a_2)^2 + 512 a_1^2 (1+\sqrt{s})^2 }}
{4\sqrt{2} + \sqrt{2}\cdot \frac{d-s}{s} + 16a_2 - 16\sqrt{2}a_1^2(1+\sqrt{s})^2}
$
where
$
a_1 
=
(2-\frac{1}{p})\cdot \frac{\log d}{8\sqrt{2}\log(2s)}
+
\frac{\sqrt{\max\{d-s, s\}}\cdot \sqrt{\log d}}{16 s^2\sqrt{d-s}}$
and
$
a_2
=
(2-\frac{1}{p})\cdot \frac{\log(2(d-s))}{8\sqrt{2}\log(2s)}
+
\frac{\sqrt{\log(2(d-s))}}{16 s^2}$.
If the following conditions hold:
\begin{align*}
\frac{\max_{i,j\in J} |u_{1,i} u_{1,j}|}{\min_{i \in J}|u_{1,i}|}
&\leq
\frac{1}{16\sqrt{2}\log(2s)},
\\
\frac{\max_{i\in J} |u_{1,i}|}{\min_{i \in J}|u_{1,i}|}
&\leq
\frac{1}{16\sqrt{2}\sqrt{\log(2s)}}\cdot \sqrt{\frac{p}{1-p}},
\\
B 
&\leq 
(2p-1)\lambda_1(\pmb{M}^*)\cdot\max_{i,j\in J} |u_{1,i} u_{1,j}|,
\\
2\sqrt{2}\cdot \sqrt{p\sigma^2 s^2 (d-s)}
<
\rho
&\leq
\frac{1}{8\sqrt{2}s}\cdot p \lambda_1(\pmb{M}^*)\cdot \min_{i \in J}|u_{1,i}|,
\end{align*}
then the optimal solution $\hat{\pmb{X}}$ to the problem (\ref{fps_problem}) is unique
and satisfies $supp(diag(\hat{\pmb{X}})) = J$
with probability at least $1-s^{-1} - d^{-1} - (2s)^{-1} - (2(d-s))^{-1}$.
\end{corollary}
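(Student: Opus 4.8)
The plan is to derive Corollary~\ref{cor_2} directly from Theorem~\ref{thm_main}: substitute the rank-one structure $\pmb{M}^*=\lambda_1(\pmb{M}^*)\pmb{u}_1\pmb{u}_1^\top$ into every quantity appearing in Theorem~\ref{thm_main}, and then check that the four displayed inequalities of the corollary, together with the stated constraint relating $s$, $d$ and $p$, force the three displayed inequalities of Theorem~\ref{thm_main} to hold with the choice $c=1$. Throughout write $\lambda_1:=\lambda_1(\pmb{M}^*)$ and $m:=\min_{i\in J}|u_{1,i}|$.

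First I would record the rank-one simplifications. Since $\pmb{u}_1$ is supported on $J$ with $\|\pmb{u}_1\|_2=1$, the block $\pmb{M}^*_{J,J}=\lambda_1\pmb{u}_{1,J}\pmb{u}_{1,J}^\top$ is rank one, so $\lambda_1(\pmb{M}^*_{J,J})=\lambda_1$, $\lambda_2(\pmb{M}^*_{J,J})=0$, and the spectral gap equals $\lambda_1$; the off-support blocks $\pmb{M}^*_{J^c,J},\pmb{M}^*_{J,J^c},\pmb{M}^*_{J^c,J^c}$ all vanish, so $\lambda_1(\pmb{M}^*_{J^c,J^c})=0$ and every one of their norms is zero; and $\|\pmb{M}^*_{J,J}\|_{\max}=\lambda_1\max_{i,j\in J}|u_{1,i}u_{1,j}|$, $\|\pmb{M}^*_{J,J}\|_{2,\infty}=\lambda_1\max_{i\in J}|u_{1,i}|$. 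Hence $R_3=R_5=B$, $R_4=\sqrt{p\sigma^2\max\{d-s,s\}}$, $R_6=\sqrt{p(d-s)\sigma^2}$, and — crucially — the third displayed hypothesis $B\le(2p-1)\lambda_1\max_{i,j\in J}|u_{1,i}u_{1,j}|$ is exactly what makes $(1-p)\|\pmb{M}^*_{J,J}\|_{\max}+B\le p\|\pmb{M}^*_{J,J}\|_{\max}$, so that $R_1=p\|\pmb{M}^*_{J,J}\|_{\max}$ and $R_2=\sqrt{p(1-p)}\lambda_1\max_{i\in J}|u_{1,i}|+\sqrt{ps\sigma^2}$.

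Next, with $c=1$ (so $c+1=2$, $\sqrt{2(c+1)}=2$), I would bound $K_1,K_2,K_3,\rho s,\rho d$ all by explicit multiples of $p\lambda_1 m$. The two ``signal'' pieces of $K_1$, namely $2R_1\log(2s)$ and the $\max_{i\in J}|u_{1,i}|$-part of $2R_2\sqrt{\log(2s)}$, are each at most $\tfrac1{8\sqrt2}p\lambda_1 m$ by the first and second displayed hypotheses (the second is calibrated so that $\sqrt{p(1-p)}\sqrt{p/(1-p)}=p$ kills the $\sqrt{\log(2s)}$). The $\rho$-window in the fourth hypothesis gives $\sqrt{p\sigma^2}<\rho/(2\sqrt2\,s\sqrt{d-s})\le p\lambda_1 m/(32 s^2\sqrt{d-s})$, which makes the noise pieces of $K_1,K_2,K_3$ small; combined with the $B$-bounds from the first and third hypotheses, one gets $K_1\le\tfrac3{8\sqrt2}p\lambda_1 m$, $K_2\le a_1 p\lambda_1 m$, $K_3\le a_2 p\lambda_1 m$ (the definitions of $a_1,a_2$ in the corollary are precisely these two bounds), while $\rho s\le\tfrac1{8\sqrt2}p\lambda_1 m$ and $\rho d\le\bigl(\tfrac1{8\sqrt2}+\tfrac{d-s}{8\sqrt2\,s}\bigr)p\lambda_1 m$. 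The first displayed hypothesis of Theorem~\ref{thm_main}, $2\sqrt2(K_1+\rho s)\le p\lambda_1 m$, is then immediate from $\tfrac3{8\sqrt2}+\tfrac1{8\sqrt2}=\tfrac1{2\sqrt2}$; and the second hypothesis, which reduces to $\rho>2\sqrt{p\sigma^2 s^2(d-s)}$ because $\pmb{M}^*_{J^c,J}=0$, is implied by the lower bound $\rho>2\sqrt2\sqrt{p\sigma^2 s^2(d-s)}$ in the corollary.

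The remaining work — the part that needs genuine care — is the third hypothesis of Theorem~\ref{thm_main}, which in the rank-one case collapses to $K_2^2(1+\sqrt s)^2\le(p\lambda_1-2K_1-2\rho s)(p\lambda_1-K_1-K_3-\rho d)$, since $\|\pmb{M}^*_{J^c,J}\|_2=0$ and both eigenvalue differences equal $\lambda_1$. Plugging in the bounds above and dividing by $(p\lambda_1)^2$, it suffices that $a_1^2(1+\sqrt s)^2 m^2\le\bigl(1-\tfrac{8}{8\sqrt2}m\bigr)\bigl(1-\tfrac{4+(d-s)/s+8\sqrt2 a_2}{8\sqrt2}m\bigr)$. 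Rearranged, this is a quadratic $P(m)\ge0$ with $P(0)=1>0$; whether $P$ opens upward or downward, $P\ge0$ holds on an interval $[0,m^\star]$ with $m^\star>0$ the smaller positive root, and since $\|\pmb{u}_1\|_2=1$ forces $m\le1/\sqrt s$, the inequality is guaranteed precisely when $1/\sqrt s\le m^\star$. Solving the quadratic for $m^\star$ reproduces exactly the displayed condition on $s,d,p$ involving $a_1$ and $a_2$ (the $16\sqrt2$ in the denominator and the $512$ under the square root coming from $(8\sqrt2)^2=128$ and $4\cdot128=512$). The main obstacle is thus purely bookkeeping: carrying every numerical constant faithfully through $K_1,K_2,K_3$, confirming the noise contributions are dominated throughout the admissible $\rho$-window, and selecting the correct root of the quadratic so the closed form matches the statement. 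Finally, with $c=1$ the probability $1-s^{-c}-d^{-c}-(2s)^{-c}-(2(d-s))^{-c}$ of Theorem~\ref{thm_main} becomes $1-s^{-1}-d^{-1}-(2s)^{-1}-(2(d-s))^{-1}$, which is the claim.
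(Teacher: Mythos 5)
Your proposal is correct and follows essentially the same route as the paper's proof: specialize Theorem \ref{thm_main} with $c=1$ to the rank-one case, bound each of $K_1$, $K_2$, $K_3$, $\rho s$, $\rho d$ by the same multiples of $p\lambda_1(\pmb{M}^*)\min_{i\in J}|u_{1,i}|$ (with $a_1,a_2$ arising exactly as your bounds on $K_2,K_3$), reduce the third condition to the quadratic $(a\,m)^2\le(1-bm)(1-cm)$, and invoke $\min_{i\in J}|u_{1,i}|\le 1/\sqrt{s}$ together with the closed-form root to recover the stated $s,d$ condition. The only differences are presentational (the paper routes through intermediate conditions on $\|\pmb{M}^*_{J,J}\|_{\max}$ and $\|\pmb{M}^*_{J,J}\|_{2,\infty}$ and a small quadratic lemma, and explicitly notes $\log(2s)\le 2s^3(d-s)$ to absorb the noise term, which you fold into bookkeeping).
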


\paragraph{Conditions on noise parameters $B$ and $\sigma^2$}
For simplicity, let ${\lambda_1}(\pmb{M}^*) = O(s)$ and $\forall |u_{1,i}| = \Theta(\frac{1}{\sqrt{s}})$.
Then the above conditions in Corollary \ref{cor_2} imply that
$$
B \lesssim p ~~\text{~and~}~~ \sigma^2 \lesssim \frac{p}{s^3(d-s)}.
$$
The condition for $B$ is relatively moderate while $\sigma^2$ needs to be extremely small to satisfy the condition in Corollary \ref{cor_2}. 
We comment this is only a sufficient condition, and the experimental results show that \eqref{fps_problem} can succeed even with $\sigma^2$ larger than the aforementioned bound.

\section{Numerical Results}
\label{sec:numerical_results}

We perform the SDP algorithm of \eqref{fps_problem} on synthetic and real data to 
validate our theoretic results and
show how well the true support of the sparse principal component is exactly recovered.
Our experiments were executed on {\tt MATLAB} and standard {\tt CVX} code was used, although very efficient scalable SDP solvers exist in practice \citep{yurtsever2021scalable}.

\subsection{Synthetic Data}

We perform two lines of experiments:
\begin{enumerate}
\item With the spectral gap $\lambda_1(\pmb{M}^*)-\lambda_2(\pmb{M}^*)$ and the noise parameters $B$ and $\sigma^2$ fixed, we compare the results for different $s$ and $d$.
\item With $s$ and $d$ fixed, we compare the results for different spectral gaps and noise parameters.
\end{enumerate}

In each experiment, we generate the true matrix $\pmb{M}^*$ as follows: 
the leading eigenvector $\pmb{u}_1$ is set to have $s$ number of non-zero entries.
$\lambda_2(\pmb{M}^*), \dots, \lambda_d(\pmb{M}^*)$ are randomly selected from a normal distribution with mean $0$ and standard deviation $1$,
and $\lambda_1(\pmb{M}^*)$ is set to $\lambda_2(\pmb{M}^*)$ plus the spectral gap.
The orthogonal eigenvectors are randomly selected, while the non-zero entries of the leading eigenvector $\pmb{u}_1$ are made to have a value of at least $\frac{1}{2\sqrt{s}}$.

When generating the observation $\pmb{M}$, we first add to $\pmb{M}^*$ the entry-wise noise which is randomly selected from a truncated normal distribution with support $[-B, B]$.
The normal distribution to be truncated is set to have mean $0$ and standard deviation $\sigma_{normal}$.
After adding the entry-wise noise, we generate an incomplete matrix $\pmb{M}$ by selecting the observed entries uniformly at random with probability $p\in \{0.1, 0.3, 0.5, 0.7, 0.9\}$.

In each setting, we run the algorithm \eqref{fps_problem} and verify if the solution exactly recovers the true support.
We repeat each experiment $30$ times with different random seeds, and calculate the rate of exact recovery in each setting.

\paragraph{Experiment 1}

In this experiment, we fix the spectral gap $\lambda_1(\pmb{M}^*)-\lambda_2(\pmb{M}^*)$ as $20$ and the noise parameters $B$ and $\sigma^2$ as $5$ and $0.01$.
We use the tuning parameter $\rho = 0.1$.
We try three different matrix dimensions $d\in\{20, 50, 100\}$ and three different support sizes $s\in\{5, 10, 20\}$.

To check whether the bound of the sample complexity obtained in Corollary \ref{cor_1} is tight,
we calculate the coherence parameters and the maximum magnitudes of the sub-matrices at each setting,
and calculate the following rescaled parameter:
$$
\sqrt{\frac{p}{1-p}}
\cdot \min \Big\{
\mu_1\sqrt{\log s},
\frac{\bar{\lambda}(\pmb{M}^*_{J,J}) \mu_2}{\|\pmb{M}^*_{J^c,J}\|_{\max}}\cdot
\min \Big\{\frac{1}{s^2\sqrt{s}},~ \frac{1}{s\sqrt{s(d-s)}} \Big\},
\frac{\bar{\lambda}(\pmb{M}^*_{J,J}) \mu_3}{\|\pmb{M}^*_{J^c,J^c}\|_{\max}}\cdot \frac{1}{\sqrt{\log(d-s)}}
\Big\},
$$
which is derived from \eqref{cor1_cond4}.
If the exact recovery rate versus this rescaled parameter is the same across different settings, then we empirically justify that the bound of the sample complexity we derive is "tight" in the sense that the exact recovery rate is solely determined by this rescaled parameter.

Figure \ref{fig:simulation1} shows the experimental results.
The two plots above are the experimental results for different values of $s$ when $d=100$,
and the two plots below are for different values of $d$ when $s=10$.
The x-axis of the left graphs represents $p$, and the x-axis of the right graphs indicates the rescaled parameter.

We can see from the two graphs on the right that the exact recovery rate versus the rescaled parameter is the same in different settings of $d$ and $s$.
This means that our bound of the sample complexity is tight.

Another observation we can make is that the exact recovery rate is not necessarily increasing or decreasing as $s$ or $d$ increases or decreases.
This is probably because coherences and maximum magnitudes of sub-matrices are involved in the sample complexity as well.

\begin{figure}
  \centering
  \includegraphics[width=0.5\textwidth]{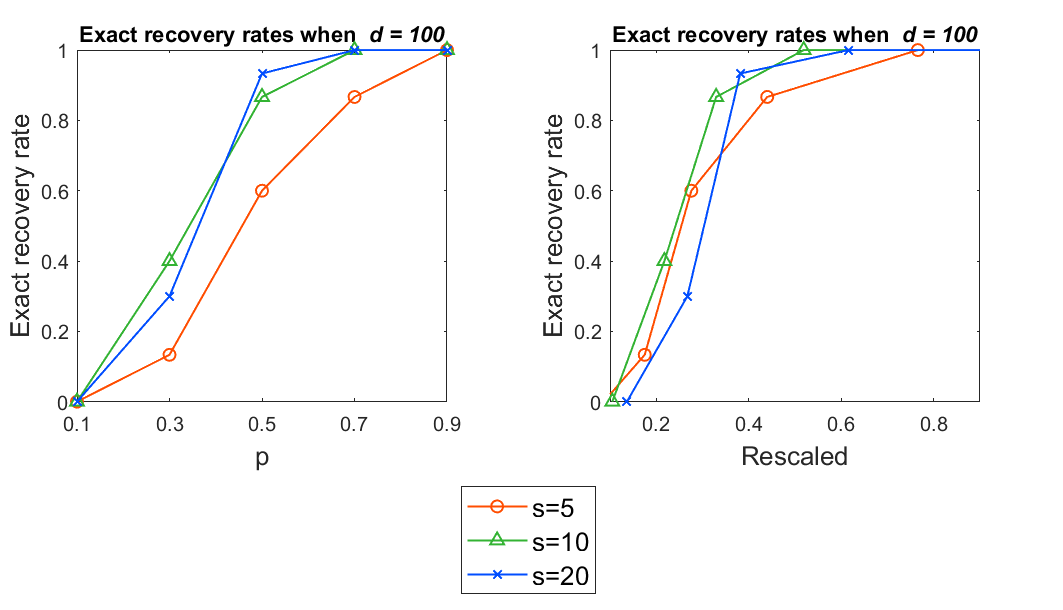}
  \includegraphics[width=0.5\textwidth]{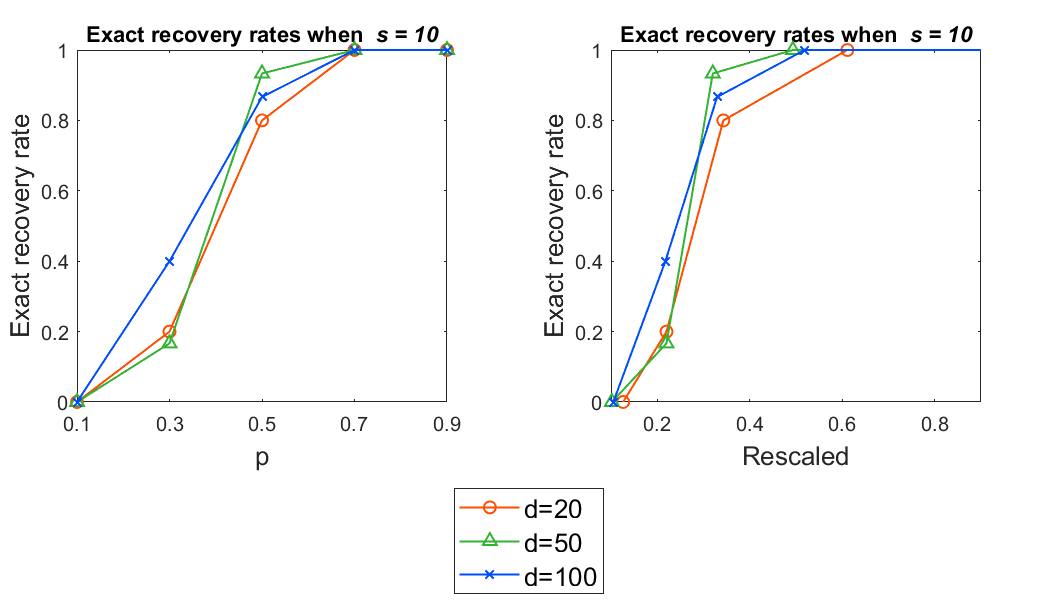}
  \caption{Results of experiment 1 on synthetic data.}
  \label{fig:simulation1}
\end{figure}

\paragraph{Experiment 2}

Here, we fix the matrix dimension $d$ as $100$ and the support size $s$ as $50$.
We set $B=5$.
We try three different spectral gaps $\lambda_1(\pmb{M}^*)-\lambda_2(\pmb{M}^*)\in\{10, 30, 50\}$ and three different standard deviations of the normal distribution, $\sigma_{normal}\in \{0.1, 0.3, 0.5\}$.
We try two different tuning parameters $\rho \in \{0.1, 0.01\}$ and report the best result. 

Figure \ref{fig:simulation2} demonstrates the experimental results.
The three plots show the results when $\sigma_{normal}$ is $0.1$, $0.3$ and $0.5$, respectively.
The red, green and blue lines indicate the cases where the spectral gap $\lambda_1(\pmb{M}^*)-\lambda_2(\pmb{M}^*)$ is $50$, $30$ and $10$, respectively.
From the plots, we can observe that the exact recovery rate increases as $\sigma^2$ is small and $\lambda_1(\pmb{M}^*)-\lambda_2(\pmb{M}^*)$ is large,
which is consistent with the conditions we have checked in Corollaries \ref{cor_1} and \ref{cor_2}.

\begin{figure}
  \centering
  \includegraphics[width=0.7\textwidth]{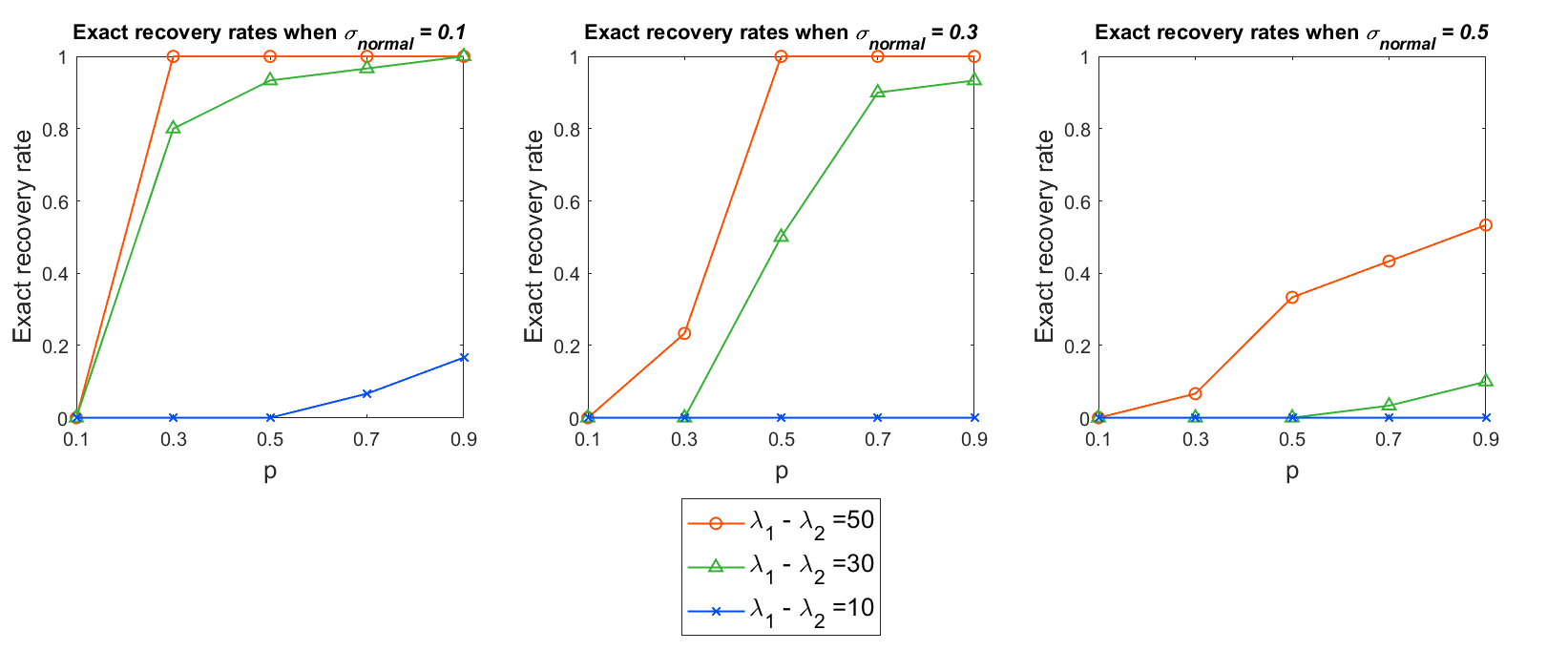}
  \caption{Results of experiment 2 on synthetic data.}
  \label{fig:simulation2}
\end{figure}

\subsection{Gene Expression Data}

We analyze a gene expression dataset (GSE21385) from the Gene Expression Omnibus website (\url{https://www.ncbi.nlm.nih.gov/geo/}.)
The dataset examines rheumatoid arthritis synovial fibroblasts, which together with synovial macrophages, are the two leading cell types that invade and degrade cartilage and bone.

The original data set contains $56$ subjects and $112$ genes.
We compute its incomplete covariance matrix,
where $87$\% of the matrix entries are observed since some subject/gene pairs are unobserved.
With this incomplete covariance matrix, we solve the semidefinite program in \eqref{fps_problem} for sparse PCA with $\rho=2$.

By solving \eqref{fps_problem}, we find that the support of the solution contains $3$ genes: beta-1 catenin (CTNNB), hypoxanthine-guanine phosphoribosyltransferase 1 (HPRT1) and semaphorin III/F (SEMA3F).
Our result is consistent with prior studies on rheumatoid arthritis since CTNNB has been found to be upregulated \citep{iwamoto2018osteogenic}, SEMA3F has been found to be downregulated \citep{tang2018class}, and HPRT1 is known to be a housekeeping gene \citep{mesko2013peripheral}.

\section{Concluding Remarks}
\label{sec:concluding_remarks}

We have presented the sufficient conditions to exactly recover the true support of the sparse leading eigenvector by solving a simple semidefinite programming on an incomplete and noisy observation.
We have shown that the conditions involve matrix coherence, spectral gap, matrix magnitudes, sample complexity and variance of noise, and provided empirical evidence to justify our theoretical results.
To the best of our knowledge, we provide the first theoretical guarantee for exact support recovery with sparse PCA on incomplete data.
While we currently focus on a uniformly missing at random setup, an interesting open question is whether it is possible to provide guarantees for a deterministic pattern of missing entries.

\bibliography{ref}
\bibliographystyle{plainnat}

\newpage

\appendix

\section{Examples of coherent and incoherent sub-matrices $\pmb{M}^*_{J^c, J}$ and $\pmb{M}^*_{J^c, J^c}$}
\label{sec:example_coherence}

For illustration, here we present a noiseless case (that is, we only focus on the change of eigen-structure caused by missing values,) and set $J = \{1,\dots, s\}$, i.e., only the first $s$ entries of the true leading eigenvector are nonzero.
We let $d=20$ and $s=10$ in the below examples.

In the following four examples, 
we show a complete or incomplete matrix, followed by its leading eigenvector.
We separate each matrix and its leading eigenvector by an arrow.
The entries in $J^c\times J$, $J\times J^c$ and $J^c\times J^c$ sub-matrices are marked in bold.
The missing entries are marked in red.

\begin{center}
$\left(\begin{smallmatrix}
1 & \cdots & 1 & 1 & 1 & \pmb{0} & \pmb{\color{red} 0} & \cdots & \pmb{0} \\
\vdots & \vdots & \vdots & \vdots & \vdots & \vdots & \vdots & \vdots & \vdots \\
1 & \cdots & 1 & 1 & 1 & \pmb{\color{red} 0} & \pmb{0} & \cdots & \pmb{0} \\
1 & \cdots & 1 & 2 & 0 & \pmb{1} & \pmb{0} & \cdots & \pmb{\color{red} 0} \\
1 & \cdots & 1 & 0 & 2 & \pmb{-1} & \pmb{0} & \cdots & \pmb{0} \\
\pmb{0} & \cdots & \pmb{\color{red} 0} & \pmb{1} & \pmb{-1} & \pmb{\color{red} 1} & \pmb{0} & \cdots & \pmb{\color{red} 0} \\
\pmb{\color{red} 0} & \cdots & \pmb{0} & \pmb{0} & \pmb{0} & \pmb{0} & \pmb{0} & \cdots & \pmb{0} \\
\vdots & \vdots & \vdots & \vdots & \vdots & \vdots & \vdots & \vdots & \vdots \\
\pmb{0} & \cdots & \pmb{0} & \pmb{\color{red} 0} & \pmb{0} & \pmb{\color{red} 0} & \pmb{0} & \cdots & \pmb{0}
\end{smallmatrix}\right)
\Rightarrow
\left(\begin{smallmatrix}
0.3162 \\
\vdots \\
0.3162 \\
0 \\
\vdots \\
0
\end{smallmatrix}\right)$,~~
$\left(\begin{smallmatrix}
1 & \cdots & 1 & 1 & 1 & \pmb{0} & \pmb{\color{red} 0} & \cdots & \pmb{0} \\
\vdots & \vdots & \vdots & \vdots & \vdots & \vdots & \vdots & \vdots & \vdots \\
1 & \cdots & 1 & 1 & 1 & \pmb{\color{red} 0} & \pmb{0} & \cdots & \pmb{0} \\
1 & \cdots & 1 & 2 & 0 & \pmb{1} & \pmb{0} & \cdots & \pmb{\color{red} 0} \\
1 & \cdots & 1 & 0 & 2 & \pmb{-1} & \pmb{0} & \cdots & \pmb{0} \\
\pmb{0} & \cdots & \pmb{\color{red} 0} & \pmb{1} & \pmb{-1} & \pmb{\color{red} 0} & \pmb{0} & \cdots & \pmb{\color{red} 0} \\
\pmb{\color{red} 0} & \cdots & \pmb{0} & \pmb{0} & \pmb{0} & \pmb{0} & \pmb{0} & \cdots & \pmb{0} \\
\vdots & \vdots & \vdots & \vdots & \vdots & \vdots & \vdots & \vdots & \vdots \\
\pmb{0} & \cdots & \pmb{0} & \pmb{\color{red} 0} & \pmb{0} & \pmb{\color{red} 0} & \pmb{0} & \cdots & \pmb{0}
\end{smallmatrix}\right)
\Rightarrow
\left(\begin{smallmatrix}
0.3162 \\
\vdots \\
0.3162 \\
0 \\
\vdots \\
0
\end{smallmatrix}\right)$
\end{center}

The example on the left is a complete matrix having coherent sub-matrices $\pmb{M}^*_{J^c, J}$ and $\pmb{M}^*_{J^c, J^c}$, 
and the example on the right is its incomplete counterpart.
We can observe that missing some entries does not change the leading eigenvector in this case.

\begin{center}
$\left(\begin{smallmatrix}
  4       & 2      & \cdots & 4      & 2      &  \pmb{1}     & \pmb{-1}     & \cdots  & \pmb{-1}    \\
 2       & 4      & \cdots & 2      & 4      & \pmb{-1}     &  \pmb{1}     & \cdots  &  \pmb{1}    \\
 \vdots  & \vdots & \vdots & \vdots & \vdots & \vdots & \vdots & \vdots  & \vdots \\ 
 4       & 2      & \cdots & 4      & 2      &  \pmb{1}     & \pmb{-1}     & \cdots  & \pmb{-1}    \\
 2       & 4      & \cdots & 2      & 4      & \pmb{\color{red} -1}     &  \pmb{1}     & \cdots   &  \pmb{1}    \\
 \pmb{1}      & \pmb{-1}     & \cdots &  \pmb{1}     & \pmb{\color{red} -1}      & \pmb{1}      & \pmb{-1}     & \cdots   & \pmb{-1}    \\
 \pmb{-1}     &  \pmb{1}     & \cdots & \pmb{-1}     &  \pmb{1}      & \pmb{-1}     &  \pmb{1}     & \cdots  &  \pmb{1}    \\
 \vdots & \vdots & \vdots & \vdots & \vdots  & \vdots & \vdots & \vdots  & \vdots  \\
 \pmb{-1}     &  \pmb{1}     & \cdots & \pmb{-1}     &  \pmb{1}      & \pmb{-1}     &  \pmb{1}     & \cdots  &  \pmb{1}  
\end{smallmatrix}\right)
\Rightarrow
\left(\begin{smallmatrix}
0.3162 \\
\vdots \\
0.3162 \\
0 \\
\vdots \\
0
\end{smallmatrix}\right)$,~~
$\left(\begin{smallmatrix}
  4       & 2      & \cdots & 4      & 2      &  \pmb{1}     & \pmb{-1}     & \cdots  & \pmb{-1}    \\
 2       & 4      & \cdots & 2      & 4      & \pmb{-1}     &  \pmb{1}     & \cdots  &  \pmb{1}    \\
 \vdots  & \vdots & \vdots & \vdots & \vdots & \vdots & \vdots & \vdots  & \vdots \\ 
 4       & 2      & \cdots & 4      & 2      &  \pmb{1}     & \pmb{-1}     & \cdots  & \pmb{-1}    \\
 2       & 4      & \cdots & 2      & 4      & \pmb{\color{red} 0}     &  \pmb{1}     & \cdots   &  \pmb{1}    \\
 \pmb{1}      & \pmb{-1}     & \cdots &  \pmb{1}     & \pmb{\color{red} 0}      & \pmb{1}      & \pmb{-1}     & \cdots   & \pmb{-1}    \\
 \pmb{-1}     &  \pmb{1}     & \cdots & \pmb{-1}     &  \pmb{1}      & \pmb{-1}     &  \pmb{1}     & \cdots  &  \pmb{1}    \\
 \vdots & \vdots & \vdots & \vdots & \vdots  & \vdots & \vdots & \vdots  & \vdots  \\
 \pmb{-1}     &  \pmb{1}     & \cdots & \pmb{-1}     &  \pmb{1}      & \pmb{-1}     &  \pmb{1}     & \cdots  &  \pmb{1}  
\end{smallmatrix}\right)
\Rightarrow
\left(\begin{smallmatrix}
0.3172 \\
\vdots \\
0.3155 \\
0.0115 \\
-0.001 \\
\vdots \\
-0.001
\end{smallmatrix}\right)$
\end{center}

However, as shown above, if the sub-matrices $\pmb{M}^*_{J^c, J}$ and $\pmb{M}^*_{J^c, J^c}$ are highly incoherent, then missing only one entry in the sub-matrix $\pmb{M}^*_{J^c, J}$ changes the leading eigenvector significantly.
In this case, the support of the leading eigenvector of the incomplete matrix is $\{1,\dots,d\}$, so that it is more difficult to exactly recover the true support $J$.

\section{Proof of Theorem \ref{thm_main}}
\label{sec:proof_of_thm1}

We use the primal-dual witness construction \citep{wainwright2009sharp} to obtain the sufficient conditions for support recovery. 
The following proposition indicates the sufficient conditions to recover the support without false positives by using the optimization problem (\ref{fps_problem}).

\begin{proposition}
\label{prop_fps_kkt}
If $\hat{\pmb{X}} \in \mathbb{R}^{d\times d}$, $\hat{\pmb{Z}} \in \mathbb{R}^{d\times d}$ and $\hat\mu \in \mathbb{R}$
satisfy the following:
\begin{align*}
&\hat{\pmb{X}}_{J,J} \succeq 0, tr(\hat{\pmb{X}}_{J,J}) = 1
\\&
\pmb{M}_{J,J} - \rho  \hat{\pmb{Z}}_{J,J} \preceq \hat\mu \pmb{I}
\\&
\pmb{M} - \rho  \hat{\pmb{Z}} \preceq \hat\mu \pmb{I}
\\&
\hat{Z}_{ij} \in \partial |\hat{X}_{ij}| ~~\text{ for each } (i,j) \in J\times J
\\&
\hat{Z}_{ij} \in (-1, 1) ~~\text{ for each } (i,j) \notin J\times J
\\&
(\pmb{M}_{J,J} - \rho  \hat{\pmb{Z}}_{J,J})\hat{\pmb{X}}_{J,J} = \hat\mu \cdot \hat{\pmb{X}}_{J,J}
\\&
(\pmb{M}_{J^c,J} - \rho  \hat{\pmb{Z}}_{J^c,J})\hat{\pmb{X}}_{J,J} = 0,
\end{align*}
then $\hat{\pmb{X}}$ is an optimal solution to the problem (\ref{fps_problem}),
and it satisfies $supp(diag(\hat{\pmb{X}})) \subseteq J$.
\end{proposition}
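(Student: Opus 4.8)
The plan is to treat $(\hat{\pmb{X}},\hat{\pmb{Z}},\hat\mu)$ as a candidate primal solution together with its dual certificates and to verify optimality of $\hat{\pmb{X}}$ by a direct Lagrangian comparison --- this is the ``verification'' half of the primal--dual witness method. Since the objective $f(\pmb{X}):=\langle\pmb{M},\pmb{X}\rangle-\rho\|\pmb{X}\|_{1,1}$ of \eqref{fps_problem} is concave and the feasible set $\{\pmb{X}\succeq0,\ tr(\pmb{X})=1\}$ is convex, it suffices to prove the two bounds $f(\pmb{X})\le\hat\mu$ for every feasible $\pmb{X}$ and $f(\hat{\pmb{X}})=\hat\mu$; no general KKT theorem need be invoked, and the only non-algebraic ingredient is the subgradient inequality for $\|\cdot\|_{1,1}$.

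First I would make the candidate explicit: following the primal--dual witness construction, $\hat{\pmb{X}}$ is taken to be the $d\times d$ matrix carrying $\hat{\pmb{X}}_{J,J}$ in the $J\times J$ block and zeros elsewhere. Then $\hat{\pmb{X}}$ is feasible: zero-padding preserves positive semidefiniteness, since $\pmb{v}^\top\hat{\pmb{X}}\pmb{v}=\pmb{v}_J^\top\hat{\pmb{X}}_{J,J}\pmb{v}_J\ge0$ for every $\pmb{v}\in\mathbb{R}^d$ by the first hypothesis, and $tr(\hat{\pmb{X}})=tr(\hat{\pmb{X}}_{J,J})=1$; moreover its diagonal is supported on $J$, so $supp(diag(\hat{\pmb{X}}))\subseteq J$ holds at once. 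Next I would check that $\hat{\pmb{Z}}$ is a subgradient of $\|\cdot\|_{1,1}$ at $\hat{\pmb{X}}$: on $J\times J$ this is the fourth hypothesis, and off $J\times J$ we have $\hat{X}_{ij}=0$, whose subdifferential $[-1,1]$ contains $\hat{Z}_{ij}$ by the fifth hypothesis. As a byproduct $\langle\hat{\pmb{Z}},\hat{\pmb{X}}\rangle=\|\hat{\pmb{X}}\|_{1,1}$, since $\hat{Z}_{ij}\hat{X}_{ij}=|\hat{X}_{ij}|$ entrywise.

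Then I would run the comparison. For any feasible $\pmb{X}$, the subgradient inequality together with the byproduct gives $\|\pmb{X}\|_{1,1}\ge\|\hat{\pmb{X}}\|_{1,1}+\langle\hat{\pmb{Z}},\pmb{X}-\hat{\pmb{X}}\rangle=\langle\hat{\pmb{Z}},\pmb{X}\rangle$, hence
\[
f(\pmb{X})\ \le\ \langle\pmb{M}-\rho\hat{\pmb{Z}},\pmb{X}\rangle\ =\ \langle\pmb{M}-\rho\hat{\pmb{Z}}-\hat\mu\pmb{I},\pmb{X}\rangle+\hat\mu\, tr(\pmb{X})\ \le\ \hat\mu ,
\]
where the final inequality uses the third hypothesis $\pmb{M}-\rho\hat{\pmb{Z}}\preceq\hat\mu\pmb{I}$ together with $\pmb{X}\succeq0$ and $tr(\pmb{X})=1$. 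For the matching value at $\hat{\pmb{X}}$ I would combine the sixth and seventh hypotheses with the fact that $\hat{\pmb{X}}$ vanishes outside $J\times J$ into the single identity $(\pmb{M}-\rho\hat{\pmb{Z}}-\hat\mu\pmb{I})\hat{\pmb{X}}=0$ (the rows indexed by $J$ reproduce the sixth hypothesis, those indexed by $J^c$ the seventh). Taking traces gives $\langle\pmb{M}-\rho\hat{\pmb{Z}}-\hat\mu\pmb{I},\hat{\pmb{X}}\rangle=0$, so $f(\hat{\pmb{X}})=\langle\pmb{M}-\rho\hat{\pmb{Z}},\hat{\pmb{X}}\rangle=\hat\mu$. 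Combining the two bounds, $\hat{\pmb{X}}$ maximizes $f$ over the feasible set, which with the support inclusion noted above finishes the proof.

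Since this is a verification argument, no step is genuinely deep; the care required is organizational. The delicate point is the passage between the block-level hypotheses (which literally involve only $\hat{\pmb{X}}_{J,J}$, $\hat{\pmb{Z}}_{J,J}$ and $\hat{\pmb{Z}}_{J^c,J}$) and the $d\times d$ relations needed in the duality computation: one must recognize that the candidate $\hat{\pmb{X}}$ is the zero-extension of its $J\times J$ block, confirm that this extension is still positive semidefinite with unit trace and the correct diagonal support, and see that the sixth and seventh hypotheses are exactly the two row-blocks of the complementary-slackness equation $(\pmb{M}-\rho\hat{\pmb{Z}}-\hat\mu\pmb{I})\hat{\pmb{X}}=0$. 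Incidentally the second hypothesis is not needed in this proof, being just the restriction of the third to the $J\times J$ principal submatrix; it is presumably retained because it is convenient in the later uniqueness and exact-recovery arguments.
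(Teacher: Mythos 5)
Your proof is correct, and it takes a somewhat different (more elementary) route than the paper. The paper's proof writes down the Lagrangian of \eqref{fps_problem}, invokes the standard KKT characterization (primal/dual feasibility, complementary slackness, stationarity), does the same for the restricted problem \eqref{fps_restricted_problem} supported on $J\times J$, and then asserts that the listed conditions are sufficient for the zero-padded candidate to satisfy the KKT system of the full problem. You instead bypass the KKT theorem and the restricted problem altogether: you verify directly, via the entrywise subgradient inequality for $\|\cdot\|_{1,1}$ and the trace inequality $\langle \pmb{A},\pmb{X}\rangle\le 0$ for $\pmb{A}\preceq 0$, $\pmb{X}\succeq 0$, that every feasible $\pmb{X}$ has objective at most $\hat\mu$ while the candidate attains $\hat\mu$, the latter because the sixth and seventh hypotheses are exactly the $J$- and $J^c$-row blocks of $(\pmb{M}-\rho\hat{\pmb{Z}}-\hat\mu\pmb{I})\hat{\pmb{X}}=0$ once $\hat{\pmb{X}}$ is the zero-extension of $\hat{\pmb{X}}_{J,J}$. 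This buys a self-contained argument that makes explicit the steps the paper leaves as ``easily derived,'' at the cost of not setting up the restricted-problem machinery that the paper reuses later in the uniqueness part of Proposition \ref{prop_kkt}. Two of your side remarks are also accurate and worth keeping: the statement must be read with $\hat{\pmb{X}}$ vanishing off $J\times J$ (as the paper's construction does, and as is forced if the subgradient conditions are to be consistent with the conclusion), and the second hypothesis is redundant for optimality, being the $J\times J$ principal-submatrix restriction of the third.
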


The proof is provided in \ref{subsec:pf_prop_fps_kkt}.
Recall that our goal is to exactly recover the support $J$ where $s=|J|$,
and the conditions in Proposition \ref{prop_fps_kkt} do not guarantee $supp(diag(\hat{\pmb{X}})) = J$ yet.
We now want to construct a solution $\hat{\pmb{X}}$ satisfying not only the above optimality conditions but also $supp(diag(\hat{\pmb{X}})) = J$.
To find a reasonable candidate, we look at the KKT conditions of the nonconvex problem (\ref{noncvx_problem}) and apply the primal-dual witness argument.
We note that the problem (\ref{noncvx_problem}) is only used for
getting some initial intuition in order to later construct a desirable solution to the problem (\ref{fps_problem}), and solving the problem (\ref{noncvx_problem}) is not our interest.
Proposition \ref{prop_kkt} represents the sufficient conditions for the desirable solution to be uniquely obtained.
The proof is deferred to \ref{subsec:pf_prop_kkt}.

\begin{proposition}
\label{prop_kkt}
Consider a 3-tuple of $(\hat{\pmb{x}}, \hat{\pmb{z}}, \hat{\pmb{w}}) \in \mathbb{R}^{s}\times \mathbb{R}^{s} \times \mathbb{R}^{d-s}$ such that
\begin{align*}
&\hat{z}_i = \text{sign}(u_{1,i}) \text{~~~~for all~} i\in J,
\\
& \hat{\pmb{x}} \text{~is the leading eigenvector of~} \pmb{M}_{J,J}-\rho \hat{\pmb{z}} \hat{\pmb{z}}^\top,
\\
& \hat{\pmb{w}} = \frac{1}{\rho \|\hat{\pmb{x}}\|_1} \pmb{M}_{J^c, J}\hat{\pmb{x}}.
\end{align*}
If the 3-tuple $(\hat{\pmb{x}}, \hat{\pmb{z}}, \hat{\pmb{w}}) \in \mathbb{R}^{s}\times \mathbb{R}^{s} \times \mathbb{R}^{d-s}$ satisfies the following conditions:
\begin{align}
& \text{sign}(\hat{x}_{i}) = \text{sign}(u_{1,i}) \text{~~~~for all~} i\in J
\label{cond1}
\\ 
& \|\hat{\pmb{w}}\|_\infty < 1
\label{cond2}
\\
&
\lambda_1(\pmb{M}_{J,J}-\rho \hat{\pmb{z}} \hat{\pmb{z}}^\top) = \lambda_1(\pmb{M}-\rho (\hat{\pmb{z}}^\top, \hat{\pmb{w}}^\top)^\top (\hat{\pmb{z}}^\top, \hat{\pmb{w}}^\top))
\label{cond3}
\\&
\lambda_1(\pmb{M}_{J,J}-\rho \hat{\pmb{z}} \hat{\pmb{z}}^\top) 
> \lambda_2(\pmb{M}_{J,J}-\rho \hat{\pmb{z}} \hat{\pmb{z}}^\top)
\label{cond4}
\end{align}
then for $\hat{\pmb{X}} := \begin{pmatrix}
\hat{\pmb{x}} \hat{\pmb{x}}^\top & 0 \\
0 & 0
\end{pmatrix}$, 
$\hat{\pmb{X}}$ 
is a unique optimal solution to the problem (\ref{fps_problem}) and satisfies $supp(diag(\hat{\pmb{X}})) = J$.
\end{proposition}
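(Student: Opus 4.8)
The plan is to verify the hypotheses of Proposition \ref{prop_fps_kkt} for a carefully chosen triple $(\hat{\pmb{X}}, \hat{\pmb{Z}}, \hat\mu)$, then upgrade the conclusion $supp(diag(\hat{\pmb{X}})) \subseteq J$ to equality and to uniqueness. Concretely, I would set $\hat{\pmb{X}} = \begin{pmatrix} \hat{\pmb{x}}\hat{\pmb{x}}^\top & 0 \\ 0 & 0 \end{pmatrix}$ as in the statement, take $\hat\mu := \lambda_1(\pmb{M}_{J,J} - \rho\hat{\pmb{z}}\hat{\pmb{z}}^\top)$, and build the dual certificate $\hat{\pmb{Z}}$ block by block: on $J\times J$ put $\hat{\pmb{Z}}_{J,J} = \hat{\pmb{z}}\hat{\pmb{z}}^\top$ (which, since $\hat z_i = \text{sign}(u_{1,i}) = \text{sign}(\hat x_i)$ by \eqref{cond1}, lies entrywise in $\partial|\hat X_{ij}|$ because $\hat X_{ij} = \hat x_i \hat x_j$ has sign $\text{sign}(\hat x_i)\text{sign}(\hat x_j) = \hat z_i \hat z_j$); on the off-diagonal blocks put $\hat{\pmb{Z}}_{J^c,J} = \hat{\pmb{w}}\hat{\pmb{z}}^\top$ and $\hat{\pmb{Z}}_{J,J^c}$ its transpose; and on $J^c\times J^c$ put $\hat{\pmb{Z}}_{J^c,J^c} = \hat{\pmb{w}}\hat{\pmb{w}}^\top$. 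In other words $\hat{\pmb{Z}} = \hat{\pmb{v}}\hat{\pmb{v}}^\top$ with $\hat{\pmb{v}} = (\hat{\pmb{z}}^\top, \hat{\pmb{w}}^\top)^\top$, which makes several KKT checks transparent.

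Next I would run through the seven conditions of Proposition \ref{prop_fps_kkt} in order. The first ($\hat{\pmb{X}}_{J,J}\succeq 0$, trace one) holds since $\hat{\pmb{x}}$ is a unit-norm eigenvector. For the subgradient conditions: the $J\times J$ entries were handled above; for $(i,j)\notin J\times J$ we need $\hat Z_{ij}\in(-1,1)$, i.e. $|\hat w_i \hat z_j| < 1$ and $|\hat w_i \hat w_j| < 1$, both of which follow from $\|\hat{\pmb{w}}\|_\infty < 1$ in \eqref{cond2} together with $|\hat z_j| = 1$. The stationarity equations $(\pmb{M}_{J,J} - \rho\hat{\pmb{Z}}_{J,J})\hat{\pmb{X}}_{J,J} = \hat\mu\hat{\pmb{X}}_{J,J}$ and $(\pmb{M}_{J^c,J} - \rho\hat{\pmb{Z}}_{J^c,J})\hat{\pmb{X}}_{J,J} = 0$ reduce, after right-multiplying by $\hat{\pmb{x}}$, to $(\pmb{M}_{J,J} - \rho\hat{\pmb{z}}\hat{\pmb{z}}^\top)\hat{\pmb{x}} = \hat\mu\hat{\pmb{x}}$ — true by definition of $\hat{\pmb{x}}$ and $\hat\mu$ — and to $(\pmb{M}_{J^c,J} - \rho\hat{\pmb{w}}\hat{\pmb{z}}^\top)\hat{\pmb{x}} = 0$, which is exactly the defining relation $\hat{\pmb{w}} = \frac{1}{\rho\|\hat{\pmb{x}}\|_1}\pmb{M}_{J^c,J}\hat{\pmb{x}}$ once one notes $\hat{\pmb{z}}^\top\hat{\pmb{x}} = \sum_i \text{sign}(\hat x_i)\hat x_i = \|\hat{\pmb{x}}\|_1$, again using \eqref{cond1}. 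The two operator-inequality conditions are where \eqref{cond3} enters: $\pmb{M} - \rho\hat{\pmb{Z}} = \pmb{M} - \rho\hat{\pmb{v}}\hat{\pmb{v}}^\top$ has $\hat{\pmb{v}}/\|\hat{\pmb{v}}\|$ as an eigenvector (since $(\pmb{M} - \rho\hat{\pmb{v}}\hat{\pmb{v}}^\top)\hat{\pmb{v}} = $ the block computation above shows this acts on the $J$-part as $\hat\mu\hat{\pmb{x}}$ scaled appropriately) with eigenvalue $\hat\mu$, and \eqref{cond3} says $\hat\mu = \lambda_1(\pmb{M} - \rho\hat{\pmb{v}}\hat{\pmb{v}}^\top)$, so $\pmb{M} - \rho\hat{\pmb{Z}}\preceq\hat\mu\pmb{I}$; the $J,J$ version is immediate since $\hat\mu = \lambda_1(\pmb{M}_{J,J} - \rho\hat{\pmb{z}}\hat{\pmb{z}}^\top)$. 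I will need to be a little careful aligning the eigenvector bookkeeping — $\hat{\pmb{v}}$ restricted to $J$ is $\hat{\pmb{z}}\,(\hat{\pmb{z}}^\top\hat{\pmb{x}})/\dots$ rather than literally $\hat{\pmb{x}}$ — so the actual eigenvector of $\pmb{M} - \rho\hat{\pmb{v}}\hat{\pmb{v}}^\top$ is $(\hat{\pmb{x}}^\top, \hat{\pmb{w}}^\top \text{-scaled})^\top$ and this is the step that most needs spelled-out linear algebra.

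Having invoked Proposition \ref{prop_fps_kkt}, it remains to prove $supp(diag(\hat{\pmb{X}})) = J$ (not just $\subseteq$) and uniqueness. Equality is free: $diag(\hat{\pmb{X}})_i = \hat x_i^2$ for $i\in J$, and $\hat x_i\neq 0$ there because \eqref{cond1} forces $\text{sign}(\hat x_i) = \text{sign}(u_{1,i})\neq 0$. For uniqueness I would argue by strict complementary slackness / strict feasibility of the dual certificate: the dual variable $\hat{\pmb{Z}}$ satisfies $\|\hat{\pmb{Z}}_{J^c,J}\|_{\max} < 1$ and $\|\hat{\pmb{Z}}_{J^c,J^c}\|_{\max} < 1$ strictly (from $\|\hat{\pmb{w}}\|_\infty < 1$), and by \eqref{cond4} the top eigenvalue of $\pmb{M}_{J,J} - \rho\hat{\pmb{z}}\hat{\pmb{z}}^\top$ is simple, so $\hat{\pmb{x}}$ is the unique (up to sign, irrelevant for $\hat{\pmb{x}}\hat{\pmb{x}}^\top$) maximizer on the $J$-block and the gap $\hat\mu\pmb{I} - (\pmb{M} - \rho\hat{\pmb{Z}})$ has one-dimensional kernel. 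A standard convexity argument then shows any other optimal $\hat{\pmb{X}}'$ must have the same support structure and the same $J,J$ block, hence equals $\hat{\pmb{X}}$; I expect this uniqueness argument, rather than the KKT verification, to be the main obstacle, since it requires combining the strict dual feasibility on $J^c$ with the spectral-gap simplicity on $J$ to rule out alternative optima, and the details of that combination are the substantive content of the proposition.
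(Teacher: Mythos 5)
Your construction of the certificate is exactly the paper's: take $\hat{\pmb{Z}} = (\hat{\pmb{z}}^\top, \hat{\pmb{w}}^\top)^\top(\hat{\pmb{z}}^\top, \hat{\pmb{w}}^\top)$, $\hat\mu = \lambda_1(\pmb{M}_{J,J}-\rho\hat{\pmb{z}}\hat{\pmb{z}}^\top)$, and verify the conditions of Proposition \ref{prop_fps_kkt} blockwise, using \eqref{cond1} for the subgradient on $J\times J$ and for $\hat{\pmb{z}}^\top\hat{\pmb{x}} = \|\hat{\pmb{x}}\|_1$, \eqref{cond2} for strict dual feasibility off $J\times J$, and \eqref{cond3} for $\pmb{M}-\rho\hat{\pmb{Z}}\preceq\hat\mu\pmb{I}$; the conclusion $supp(diag(\hat{\pmb{X}}))=J$ then follows from \eqref{cond1} as you say. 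One small correction on that part: $(\hat{\pmb{z}}^\top,\hat{\pmb{w}}^\top)^\top$ is \emph{not} an eigenvector of $\pmb{M}-\rho\hat{\pmb{Z}}$; the relevant eigenvector is $(\hat{\pmb{x}}^\top,\pmb{0}^\top)^\top$ (this is what the stationarity computation shows), and in any case no eigenvector bookkeeping is needed because \eqref{cond3} directly says the top eigenvalue of $\pmb{M}-\rho\hat{\pmb{Z}}$ equals $\hat\mu$.

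The genuine gap is the uniqueness argument, which you yourself flag as the unfinished "substantive content." Your proposed mechanism --- that $\hat\mu\pmb{I}-(\pmb{M}-\rho\hat{\pmb{Z}})$ has a one-dimensional kernel --- is not implied by the hypotheses: \eqref{cond4} gives simplicity of the top eigenvalue only for the $J\times J$ block $\pmb{M}_{J,J}-\rho\hat{\pmb{z}}\hat{\pmb{z}}^\top$, while \eqref{cond3} only says the top eigenvalue of the full matrix equals $\hat\mu$; the full matrix may well attain $\hat\mu$ again with an eigenvector supported on (or overlapping) $J^c$, so the slack matrix can have a kernel of dimension larger than one and the complementary-slackness route stalls. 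The paper closes this in two steps that your sketch does not supply: first, the standard primal--dual witness inequality chain $\|\tilde{\pmb{X}}\|_{1,1}\le\langle\hat{\pmb{Z}},\tilde{\pmb{X}}\rangle\le\|\tilde{\pmb{X}}\|_{1,1}$, combined with $|\hat{Z}_{ij}|<1$ off $J\times J$ from \eqref{cond2}, forces \emph{every} optimal solution of \eqref{fps_problem} to be supported in $J\times J$, i.e.\ to solve the restricted problem; second, uniqueness of the restricted problem's solution is proved by a variational argument: writing a competing feasible $\tilde{\pmb{X}}_{J,J}=\sum_i\theta_i\pmb{v}_i\pmb{v}_i^\top$ one gets $\langle\pmb{M}_{J,J}-\rho\hat{\pmb{z}}\hat{\pmb{z}}^\top,\tilde{\pmb{X}}_{J,J}\rangle\le\lambda_1(\pmb{M}_{J,J}-\rho\hat{\pmb{z}}\hat{\pmb{z}}^\top)$ with equality only when $\tilde{\pmb{X}}_{J,J}=\hat{\pmb{x}}\hat{\pmb{x}}^\top$ (this is where \eqref{cond4} enters), and assuming a distinct optimizer leads to $\langle\tilde{\pmb{Z}}_{J,J},\tilde{\pmb{X}}_{J,J}\rangle<\langle\hat{\pmb{z}}\hat{\pmb{z}}^\top,\tilde{\pmb{X}}_{J,J}\rangle$, contradicting $\langle\tilde{\pmb{Z}}_{J,J},\tilde{\pmb{X}}_{J,J}\rangle=\sup_{\|\pmb{Z}\|_{\max}\le1}\langle\pmb{Z},\tilde{\pmb{X}}_{J,J}\rangle$. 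Without some version of these two steps your proof of uniqueness does not go through.
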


For clarity of exposition, our abuse of notation seemingly assumes $J=[s]$ when we join vectors and matrices, for instance when we join $(\hat{\pmb{z}}^\top, \hat{\pmb{w}}^\top)$ and $\left(\begin{smallmatrix}
\hat{\pmb{x}} \hat{\pmb{x}}^\top & 0 \\
0 & 0
\end{smallmatrix}\right)$.
It should be clear that for $J\neq [s]$, one will need to properly interleave vector entries or matrix rows/columns.

What remains is to derive the sufficient conditions for (\ref{cond1}) - (\ref{cond4}).
Lemmas \ref{lemma_suff_cond_sign} to \ref{lemma_suff_cond_strict_eig} below
presents the sufficient conditions for (\ref{cond1}) - (\ref{cond4}), respectively.
We provide the proofs of the lemmas in \ref{subsec:lemma_suff_cond_sign} to \ref{subsec:lemma_suff_cond_strict_eig}.

\begin{lemma}[Sufficient Conditions for (\ref{cond1})]
\label{lemma_suff_cond_sign}
If the following inequality holds:
\begin{equation*}
2\sqrt{2}\cdot\frac{K_1 + \rho s}{p(\lambda_1(\pmb{M}^*_{J,J}) - \lambda_2(\pmb{M}^*_{J,J}))}
\leq
\min_{i \in J}|u_{1,i}|,
\end{equation*}
then the condition (\ref{cond1}) holds with probability at least $1- (2s)^{-c}$ for any $c>0$.
Note that $K_1$ is defined in Lemma \ref{lemma_bernstein}.
\end{lemma}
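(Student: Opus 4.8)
The plan is to control the deviation of $\hat{\pmb{x}}$, the leading eigenvector of $\pmb{M}_{J,J} - \rho\hat{\pmb{z}}\hat{\pmb{z}}^\top$, from $\pmb{u}_1$ (restricted to $J$), and then argue that if the $\ell_\infty$-perturbation is smaller than $\min_{i\in J}|u_{1,i}|$, the signs must agree coordinate-wise. First I would decompose $\pmb{M}_{J,J} - \rho\hat{\pmb{z}}\hat{\pmb{z}}^\top = \pmb{M}^*_{J,J} + (\pmb{M}_{J,J} - p\pmb{M}^*_{J,J}) + \big((p-1)\pmb{M}^*_{J,J} - \rho\hat{\pmb{z}}\hat{\pmb{z}}^\top\big)$ — or, more cleanly, write the perturbation relative to $p\pmb{M}^*_{J,J}$, whose leading eigenvector is still $\pmb{u}_{1,J}$ with eigengap $p(\lambda_1(\pmb{M}^*_{J,J}) - \lambda_2(\pmb{M}^*_{J,J}))$. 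The perturbation matrix is $\pmb{E} := (\pmb{M}_{J,J} - p\pmb{M}^*_{J,J}) - \rho\hat{\pmb{z}}\hat{\pmb{z}}^\top$. Since $\|\hat{\pmb{z}}\hat{\pmb{z}}^\top\|_2 = \|\hat{\pmb{z}}\|_2^2 = s$ (as $\hat{z}_i = \mathrm{sign}(u_{1,i})$ for $i\in J$), the regularization contributes $\rho s$ to the operator-norm bound. For the stochastic part $\pmb{M}_{J,J} - p\pmb{M}^*_{J,J}$ I would invoke the matrix Bernstein bound (Lemma \ref{lemma_bernstein}, referenced in the statement), which should give $\|\pmb{M}_{J,J} - p\pmb{M}^*_{J,J}\|_2 \le K_1$ with probability at least $1 - (2s)^{-c}$; the parameters $R_1$ (the almost-sure bound on each summand, combining the $(1-p)\|\pmb{M}^*_{J,J}\|_{\max}$ Bernoulli fluctuation with the noise bound $B$) and $R_2$ (the variance proxy, combining $\sqrt{p(1-p)}\|\pmb{M}^*_{J,J}\|_{2,\infty}$ with $\sqrt{ps\sigma^2}$) are exactly the quantities entering $K_1$.

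Next I would apply a Davis–Kahan-type $\sin\Theta$ bound, or more precisely a bound on $\|\hat{\pmb{x}} - \pmb{u}_{1,J}\|_2$ (after sign normalization), of the form $\|\hat{\pmb{x}} - \pmb{u}_{1,J}\|_2 \le \sqrt{2}\cdot\frac{\|\pmb{E}\|_2}{\text{eigengap of } p\pmb{M}^*_{J,J}} \le \sqrt{2}\cdot\frac{K_1 + \rho s}{p(\lambda_1(\pmb{M}^*_{J,J}) - \lambda_2(\pmb{M}^*_{J,J}))}$. Since $\|\hat{\pmb{x}} - \pmb{u}_{1,J}\|_\infty \le \|\hat{\pmb{x}} - \pmb{u}_{1,J}\|_2$, the hypothesis of the lemma — with its factor $2\sqrt{2}$, giving a factor-$2$ safety margin — yields $\|\hat{\pmb{x}} - \pmb{u}_{1,J}\|_\infty \le \tfrac12\min_{i\in J}|u_{1,i}|$, hence $|\hat{x}_i - u_{1,i}| < |u_{1,i}|$ for every $i\in J$, which forces $\mathrm{sign}(\hat{x}_i) = \mathrm{sign}(u_{1,i})$. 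That establishes (\ref{cond1}) on the event where the Bernstein bound holds, which has probability at least $1 - (2s)^{-c}$.

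The main obstacle is getting the Davis–Kahan step to produce an $\ell_\infty$ (or at least $\ell_2$) control with exactly the constant $2\sqrt{2}$ and the eigengap $p(\lambda_1(\pmb{M}^*_{J,J}) - \lambda_2(\pmb{M}^*_{J,J}))$ rather than a perturbed eigengap; the standard trick is to require $\|\pmb{E}\|_2$ to be at most half the true gap (so that the perturbed gap is at least half) and absorb the resulting factor of $2$ into the constant — which is presumably why the condition is stated with the eigengap of $\pmb{M}^*_{J,J}$ on the denominator and the extra factor in front. A secondary subtlety is the sign ambiguity of eigenvectors: one fixes the sign of $\hat{\pmb{x}}$ so that it aligns with $\pmb{u}_{1,J}$ (e.g., $\langle\hat{\pmb{x}}, \pmb{u}_{1,J}\rangle \ge 0$), and this is exactly the sign convention under which $\hat{z}_i = \mathrm{sign}(\hat{x}_i) = \mathrm{sign}(u_{1,i})$ in Proposition \ref{prop_kkt}. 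Beyond these points, the argument is a routine chain of matrix-concentration-then-perturbation estimates, so I would not expect any further genuine difficulty.
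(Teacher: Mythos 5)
Your overall route is the same as the paper's: bound $\|\pmb{M}_{J,J}-p\pmb{M}^*_{J,J}\|_2\le K_1$ by matrix Bernstein (Lemma \ref{lemma_bernstein}), add $\rho s$ for $\|\rho\hat{\pmb{z}}\hat{\pmb{z}}^\top\|_2$, apply Davis--Kahan against $p\pmb{M}^*_{J,J}$ whose top eigenvector is $\pmb{u}_{1,J}$ with gap $p\bar{\lambda}$, $\bar{\lambda}:=\lambda_1(\pmb{M}^*_{J,J})-\lambda_2(\pmb{M}^*_{J,J})$, and then compare signs coordinatewise. The genuine gap is in your constant accounting for the eigenvector-perturbation step. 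The bound $\|\hat{\pmb{x}}-\pmb{u}_{1,J}\|_2\le \sqrt{2}\,\|\pmb{E}\|_2/(p\bar{\lambda})$ with the \emph{unperturbed} gap and no side condition is not a valid form of Davis--Kahan: the variant that uses the unperturbed gap gives $\sin\theta\le 2\|\pmb{E}\|_2/(p\bar{\lambda})$ and hence $\|\hat{\pmb{x}}-\pmb{u}_{1,J}\|_2\le 2\sqrt{2}\,\|\pmb{E}\|_2/(p\bar{\lambda})$, which is exactly where the paper's $2\sqrt{2}$ comes from and is consumed entirely by this step (when $\|\pmb{E}\|_2$ approaches half the gap the perturbed eigenvector can be nearly orthogonal, so a constant $\sqrt{2}$ cannot hold). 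Consequently the ``factor-$2$ safety margin'' your sign argument relies on does not exist: under the hypothesis you only obtain $\|\hat{\pmb{x}}-\pmb{u}_{1,J}\|_2\le\min_{i\in J}|u_{1,i}|$, and the coordinatewise comparison then gives only the non-strict $|\hat{x}_i-u_{1,i}|\le|u_{1,i}|$, which does not rule out $\hat{x}_i=0$ or a boundary sign flip.

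The paper closes exactly this boundary case with a separate elementary lemma (Lemma \ref{lemma_sign}): for two \emph{distinct unit} vectors the inequality $|x_i-y_i|<\|\pmb{x}-\pmb{y}\|_2$ is strict (if they differed only in coordinate $i$, unit norm would force $x_i=-y_i$, and then $2|y_i|\le\min_j|y_j|$ is impossible), so $\|\hat{\pmb{x}}-\pmb{u}_{1,J}\|_2\le\min_{i\in J}|u_{1,i}|$ already implies sign agreement. Alternatively, you can rescue a margin along the lines of your closing remark: the hypothesis forces $K_1+\rho s\le p\bar{\lambda}\min_i|u_{1,i}|/(2\sqrt{2})\le p\bar{\lambda}/(2\sqrt{2})$, so the mixed-gap form $\sin\theta\le\|\pmb{E}\|_2/(p\bar{\lambda}-\|\pmb{E}\|_2)$ combined with Weyl yields $\|\hat{\pmb{x}}-\pmb{u}_{1,J}\|_2\le \frac{2}{4-\sqrt{2}}\min_{i\in J}|u_{1,i}|\approx 0.77\min_{i\in J}|u_{1,i}|$, which is strict and suffices --- but the margin is not $1/2$, and you cannot simultaneously spend the factor $2$ on the gap conversion and keep it as a safety margin; that is the inconsistency between your main chain and your final paragraph. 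Your treatment of the Bernstein ingredients, of $\|\hat{\pmb{z}}\hat{\pmb{z}}^\top\|_2=s$, and of the sign normalization of $\hat{\pmb{x}}$ matches the paper.
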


\begin{lemma}[Sufficient Conditions for (\ref{cond2})]
\label{lemma_suff_cond_w}
If the following inequality holds:
$$
\rho
>
2\sqrt{ps^c  \cdot \big\{(1-p) \|\pmb{M}^*_{J^c,J}\|_F^2 + (d-s)s\sigma^2 \big\} } + p\cdot \|\pmb{M}^*_{J^c,J}\|_{\max},
$$
then the condition (\ref{cond2}) holds with probability at least $1-s^{-c}$ for any $c>0$.
\end{lemma}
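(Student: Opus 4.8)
\textbf{Proof proposal for Lemma~\ref{lemma_suff_cond_w}.}
The plan is to unwind condition~(\ref{cond2}). By Proposition~\ref{prop_kkt}, $\hat{\pmb{w}} = \frac{1}{\rho\|\hat{\pmb{x}}\|_1}\pmb{M}_{J^c,J}\hat{\pmb{x}}$, so $\|\hat{\pmb{w}}\|_\infty<1$ is equivalent to $\|\pmb{M}_{J^c,J}\hat{\pmb{x}}\|_\infty < \rho\|\hat{\pmb{x}}\|_1$. Since $\hat{\pmb{x}}$ is a unit leading eigenvector we have $\|\hat{\pmb{x}}\|_1 \ge \|\hat{\pmb{x}}\|_2 = 1$, which will let us replace the right-hand side by the constant $\rho$ after the stochastic part of the left-hand side has been made $\|\hat{\pmb{x}}\|_1$-free. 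The structural fact I would exploit first is an independence/conditioning argument: $\hat{\pmb{x}}$ is the leading eigenvector of $\pmb{M}_{J,J}-\rho\hat{\pmb{z}}\hat{\pmb{z}}^\top$ with $\hat{\pmb{z}}$ \emph{deterministic} (it equals $\mathrm{sign}(\pmb{u}_{1,J})$), hence $\hat{\pmb{x}}$ is a function of $\pmb{M}_{J,J}$ alone; because $J$ and $J^c$ are disjoint, $\pmb{M}_{J,J}$ and $\pmb{M}_{J^c,J}$ depend on disjoint families of the independent variables $\{\delta_{ij}\},\{\epsilon_{ij}\}$ and are therefore independent. Conditioning on $\pmb{M}_{J,J}$, I may thus treat $\hat{\pmb{x}}$ as a fixed unit vector while the entries of $\pmb{M}_{J^c,J}$ remain mutually independent with $\mathbb{E}[M_{ij}]=pM^*_{ij}$.

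Next I would split $\pmb{M}_{J^c,J}\hat{\pmb{x}} = p\,\pmb{M}^*_{J^c,J}\hat{\pmb{x}} + \pmb{V}$, where $\pmb{V}:=(\pmb{M}_{J^c,J}-p\,\pmb{M}^*_{J^c,J})\hat{\pmb{x}}$ is centered. The deterministic term is bounded entrywise by row-wise H\"older: $|(p\,\pmb{M}^*_{J^c,J}\hat{\pmb{x}})_i| \le p\|\pmb{M}^*_{J^c,J}\|_{\max}\|\hat{\pmb{x}}\|_1$, producing the additive $p\|\pmb{M}^*_{J^c,J}\|_{\max}$ in the claimed threshold. For $\pmb{V}\in\mathbb{R}^{d-s}$ I would use $\|\pmb{V}\|_\infty\le\|\pmb{V}\|_2$ and compute, using independence across $j\in J$ together with the elementary identity $\mathsf{Var}[\delta_{ij}(M^*_{ij}+\epsilon_{ij})] = p(1-p)(M^*_{ij})^2 + p\sigma^2$, that $\mathbb{E}\|\pmb{V}\|_2^2 = \sum_{i\in J^c}\sum_{j\in J}\hat{x}_j^2\big(p(1-p)(M^*_{ij})^2 + p\sigma^2\big)$. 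Relaxing $\hat{x}_j^2 \le \|\hat{\pmb{x}}\|_\infty^2\le 1$ in every summand then gives $\mathbb{E}\|\pmb{V}\|_2^2 \le p\big\{(1-p)\|\pmb{M}^*_{J^c,J}\|_F^2 + (d-s)s\sigma^2\big\}$.

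Finally, applying Markov's inequality to the nonnegative variable $\|\pmb{V}\|_2^2$ with threshold $4s^c\,\mathbb{E}\|\pmb{V}\|_2^2$ yields, with probability at least $1-s^{-c}$, the bound $\|\pmb{V}\|_2 \le 2\sqrt{ps^c\big\{(1-p)\|\pmb{M}^*_{J^c,J}\|_F^2 + (d-s)s\sigma^2\big\}}$. Combining this with the mean-term bound, using $\|\hat{\pmb{x}}\|_1\ge 1$ to absorb the (now $\|\hat{\pmb{x}}\|_1$-free) stochastic term into a factor of $\|\hat{\pmb{x}}\|_1$, and invoking the assumed lower bound on $\rho$, forces $\|\pmb{M}_{J^c,J}\hat{\pmb{x}}\|_\infty < \rho\|\hat{\pmb{x}}\|_1$, i.e.\ (\ref{cond2}). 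I expect the main obstacle to be the independence/conditioning step: one must argue cleanly that $\hat{\pmb{x}}$ is legitimately deterministic when integrating over $\pmb{M}_{J^c,J}$, since everything downstream rests on that. The remaining pieces — the one-line variance identity, the entrywise relaxations, the choice of Markov threshold, and the bookkeeping of constants — are routine.
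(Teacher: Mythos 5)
Your proposal is correct, but it follows a genuinely different route from the paper. The paper first discards $\hat{\pmb{x}}$ entirely via the deterministic bound $\|\hat{\pmb{w}}\|_\infty \le \frac{1}{\rho\|\hat{\pmb{x}}\|_1}\max_{i\in J^c}\big|\sum_{j\in J}M_{i,j}\hat{x}_j\big| \le \frac{1}{\rho}\|\pmb{M}_{J^c,J}\|_{\max}$, which holds for \emph{any} vector $\hat{\pmb{x}}$, and then shows $\|\pmb{M}_{J^c,J}\|_{\max}<\rho$ with probability at least $1-s^{-c}$ by applying Chebyshev's inequality to each entry $M_{i,j}$ (with $\mathbb{E}[M_{i,j}]=pM^*_{i,j}$, $\mathsf{Var}[M_{i,j}]=p(1-p)(M^*_{i,j})^2+p\sigma^2$) and a union bound over the $(d-s)s$ entries, which is where the $(1-p)\|\pmb{M}^*_{J^c,J}\|_F^2+(d-s)s\sigma^2$ term originates. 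You instead keep $\hat{\pmb{x}}$ in the bound, justify treating it as fixed by the conditioning/independence argument ($\hat{\pmb{z}}=\mathrm{sign}(\pmb{u}_{1,J})$ is deterministic, $\hat{\pmb{x}}$ is a function of $\pmb{M}_{J,J}$ only, and $\pmb{M}_{J,J}$, $\pmb{M}_{J^c,J}$ are built from disjoint families of $(\delta_{i,j},\epsilon_{i,j})$), split off the mean $p\,\pmb{M}^*_{J^c,J}\hat{\pmb{x}}$, and control the centered part in $\ell_2$ by Markov on its second moment; your worst-case relaxation $\hat{x}_j^2\le 1$ is what reproduces the same $(d-s)s\sigma^2$ term and hence exactly the paper's threshold on $\rho$ and the probability $1-s^{-c}$ (in fact $1-\tfrac14 s^{-c}$). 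The trade-off: the paper's argument is shorter and entirely avoids any dependence issue between $\hat{\pmb{x}}$ and $\pmb{M}_{J^c,J}$, which you correctly identify as the delicate step of your route (and your justification of it is sound); your route is somewhat more powerful in principle, since without the relaxation $\hat{x}_j^2\le 1$ one gets $\mathbb{E}\|\pmb{V}\|_2^2\le p(1-p)\|\pmb{M}^*_{J^c,J}\|_{2,\infty}^2+p(d-s)\sigma^2$, i.e.\ a strictly weaker requirement on $\rho$ than the stated condition, whereas the paper's entrywise union bound cannot exploit the unit norm of $\hat{\pmb{x}}$.
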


\begin{lemma}[Sufficient Conditions for (\ref{cond3})]
\label{lemma_suff_cond_eig}
In addition to the conditions in Lemma \ref{lemma_suff_cond_w},
if the following inequality holds:
\begin{align*}
(K_2 + p\cdot \| \pmb{M}^*_{J^c,J} \|_2)^2\cdot (1+\sqrt{s})^2
&\leq
\Big\{ 
p\cdot (\lambda_1(\pmb{M}^*_{J,J}) - \lambda_2(\pmb{M}^*_{J,J}))
-
2\cdot K_1 - 2\rho s \Big\}
\\&~~~~\times
\Big\{
p\cdot (\lambda_1(\pmb{M}^*_{J,J}) - \lambda_1( \pmb{M}^*_{J^c,J^c} ))
- K_1 - K_3 - \rho d\Big\},
\end{align*}
then the condition (\ref{cond3}) holds with probability at least $1-s^{-c} - d^{-c} - (2s)^{-c} - (2(d-s))^{-c}$ for any $c>0$.
Note that $K_1$, $K_2$ and $K_3$ are defined in Lemma \ref{lemma_bernstein}.
\end{lemma}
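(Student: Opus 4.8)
\emph{Proof proposal.} The plan is to analyze the leading eigenvalue of the $d\times d$ matrix $\pmb{M}-\rho\hat{\pmb{v}}\hat{\pmb{v}}^\top$, with $\hat{\pmb{v}}:=(\hat{\pmb{z}}^\top,\hat{\pmb{w}}^\top)^\top$, through its $2\times2$ block form
\begin{align*}
\pmb{M}-\rho\hat{\pmb{v}}\hat{\pmb{v}}^\top=\begin{pmatrix}\pmb{A}&\pmb{C}\\\pmb{C}^\top&\pmb{D}\end{pmatrix},\qquad
\pmb{A}:=\pmb{M}_{J,J}-\rho\hat{\pmb{z}}\hat{\pmb{z}}^\top,\quad
\pmb{C}:=\pmb{M}_{J,J^c}-\rho\hat{\pmb{z}}\hat{\pmb{w}}^\top,\quad
\pmb{D}:=\pmb{M}_{J^c,J^c}-\rho\hat{\pmb{w}}\hat{\pmb{w}}^\top,
\end{align*}
via a Schur-complement argument, and then to reduce the resulting deterministic spectral inequality to the stated population condition using the matrix Bernstein bounds of Lemma~\ref{lemma_bernstein}. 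First I would check that the zero-padded vector $(\hat{\pmb{x}}^\top,\pmb{0}^\top)^\top$ is an eigenvector of $\pmb{M}-\rho\hat{\pmb{v}}\hat{\pmb{v}}^\top$ with eigenvalue $\lambda_1(\pmb{A})=\lambda_1(\pmb{M}_{J,J}-\rho\hat{\pmb{z}}\hat{\pmb{z}}^\top)$: the top block is immediate since $\hat{\pmb{x}}$ (unit norm) is a leading eigenvector of $\pmb{A}$, and the bottom block reduces to $\pmb{C}^\top\hat{\pmb{x}}=\pmb{M}_{J^c,J}\hat{\pmb{x}}-\rho\hat{\pmb{w}}(\hat{\pmb{z}}^\top\hat{\pmb{x}})=\pmb{M}_{J^c,J}\hat{\pmb{x}}-\rho\|\hat{\pmb{x}}\|_1\hat{\pmb{w}}=\pmb{0}$, where I use sign-consistency~(\ref{cond1}) (established in Lemma~\ref{lemma_suff_cond_sign}) to get $\hat{\pmb{z}}^\top\hat{\pmb{x}}=\|\hat{\pmb{x}}\|_1$ and then the definition of $\hat{\pmb{w}}$. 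This already gives $\lambda_1(\pmb{M}-\rho\hat{\pmb{v}}\hat{\pmb{v}}^\top)\ge\lambda_1(\pmb{A})$, so the content of~(\ref{cond3}) is the reverse bound.

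For the reverse bound I would argue by contradiction. Suppose $\mu:=\lambda_1(\pmb{M}-\rho\hat{\pmb{v}}\hat{\pmb{v}}^\top)>\lambda_1(\pmb{A})$ with unit eigenvector $(\pmb{a}^\top,\pmb{b}^\top)^\top$. Then $\mu\pmb{I}-\pmb{A}\succ\pmb{0}$, so the first block of eigen-equations gives $\pmb{a}=(\mu\pmb{I}-\pmb{A})^{-1}\pmb{C}\pmb{b}$, and substituting into the second leaves $(\mu\pmb{I}-\pmb{D}-\pmb{C}^\top(\mu\pmb{I}-\pmb{A})^{-1}\pmb{C})\pmb{b}=\pmb{0}$ with $\pmb{b}\ne\pmb{0}$ (if $\pmb{b}=\pmb{0}$ then $\pmb{a}=\pmb{0}$, impossible). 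I would rule this out by showing the Schur complement is strictly positive definite, using two estimates: (i) since $\pmb{C}^\top\hat{\pmb{x}}=\pmb{0}$, every vector in the range of $\pmb{C}$ is orthogonal to the leading eigenvector $\hat{\pmb{x}}$ of $\pmb{A}$, so only the eigenvalues $\lambda_2(\pmb{A}),\dots$ of $\pmb{A}$ contribute and $\|\pmb{C}^\top(\mu\pmb{I}-\pmb{A})^{-1}\pmb{C}\|_2\le\|\pmb{C}\|_2^2/(\mu-\lambda_2(\pmb{A}))\le\|\pmb{C}\|_2^2/(\lambda_1(\pmb{A})-\lambda_2(\pmb{A}))$; and (ii) $\lambda_{\min}(\mu\pmb{I}-\pmb{D})=\mu-\lambda_1(\pmb{D})>\lambda_1(\pmb{A})-\lambda_1(\pmb{D})$. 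Combining, the Schur complement is strictly larger than $\big(\lambda_1(\pmb{A})-\lambda_1(\pmb{D})-\|\pmb{C}\|_2^2/(\lambda_1(\pmb{A})-\lambda_2(\pmb{A}))\big)\pmb{I}$, which is $\succeq\pmb{0}$ — the contradiction — as soon as the deterministic inequality
\begin{align*}
\big(\lambda_1(\pmb{A})-\lambda_1(\pmb{D})\big)\big(\lambda_1(\pmb{A})-\lambda_2(\pmb{A})\big)\ \ge\ \|\pmb{C}\|_2^2
\end{align*}
holds (with the left factors positive, which in particular recovers~(\ref{cond4})).

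It then remains to see that the hypothesis of the lemma implies this deterministic inequality, on the event $\mathcal{E}$ where Lemma~\ref{lemma_bernstein} gives $\|\pmb{M}_{J,J}-p\pmb{M}^*_{J,J}\|_2\le K_1$, $\|\pmb{M}_{J,J^c}-p\pmb{M}^*_{J,J^c}\|_2\le K_2$, $\|\pmb{M}_{J^c,J^c}-p\pmb{M}^*_{J^c,J^c}\|_2\le K_3$ and where~(\ref{cond1}) and~(\ref{cond2}) hold; a union bound gives $\mathbb{P}(\mathcal{E})\ge 1-s^{-c}-d^{-c}-(2s)^{-c}-(2(d-s))^{-c}$. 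By Weyl's inequality together with $\|\rho\hat{\pmb{z}}\hat{\pmb{z}}^\top\|_2=\rho s$ and $\|\rho\hat{\pmb{w}}\hat{\pmb{w}}^\top\|_2=\rho\|\hat{\pmb{w}}\|_2^2<\rho(d-s)$ (the latter from~(\ref{cond2})), I would obtain $\lambda_1(\pmb{A})-\lambda_2(\pmb{A})\ge p(\lambda_1(\pmb{M}^*_{J,J})-\lambda_2(\pmb{M}^*_{J,J}))-2K_1-2\rho s$ and $\lambda_1(\pmb{A})-\lambda_1(\pmb{D})\ge p(\lambda_1(\pmb{M}^*_{J,J})-\lambda_1(\pmb{M}^*_{J^c,J^c}))-K_1-K_3-\rho d$. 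For $\|\pmb{C}\|_2$, the definition of $\hat{\pmb{w}}$ and symmetry of $\pmb{M}$ give $\rho\hat{\pmb{z}}\hat{\pmb{w}}^\top=\|\hat{\pmb{x}}\|_1^{-1}\hat{\pmb{z}}\hat{\pmb{x}}^\top\pmb{M}_{J,J^c}$, so $\pmb{C}=(\pmb{I}-\|\hat{\pmb{x}}\|_1^{-1}\hat{\pmb{z}}\hat{\pmb{x}}^\top)\pmb{M}_{J,J^c}$ and, since $\|\hat{\pmb{x}}\|_1\ge\|\hat{\pmb{x}}\|_2=1$ and $\|\hat{\pmb{z}}\|_2=\sqrt{s}$, $\|\pmb{C}\|_2\le(1+\sqrt{s})\|\pmb{M}_{J,J^c}\|_2\le(1+\sqrt{s})(K_2+p\|\pmb{M}^*_{J^c,J}\|_2)$. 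Substituting the three bounds into the deterministic inequality yields exactly the condition stated in the lemma.

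The step I expect to be the main obstacle is the reverse eigenvalue bound, and within it the identity $\pmb{C}^\top\hat{\pmb{x}}=\pmb{0}$: without this orthogonality the crude estimate $\|\pmb{C}^\top(\mu\pmb{I}-\pmb{A})^{-1}\pmb{C}\|_2\le\|\pmb{C}\|_2^2/(\mu-\lambda_1(\pmb{A}))$ blows up as $\mu\downarrow\lambda_1(\pmb{A})$, and one cannot exclude eigenvalues just above $\lambda_1(\pmb{A})$; this orthogonality is precisely what the specific choice of $\hat{\pmb{w}}$ (together with~(\ref{cond1})) is engineered to provide, and it is also what makes the spectral gap $\lambda_1(\pmb{A})-\lambda_2(\pmb{A})$ rather than the vanishing quantity $\mu-\lambda_1(\pmb{A})$ appear in the denominator. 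Everything after that is routine Weyl and triangle-inequality bookkeeping plus the concentration inputs of Lemma~\ref{lemma_bernstein}.
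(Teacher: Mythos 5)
Your proposal is correct, and its second half (the reduction to the population condition via Weyl's inequality, the triangle inequality, the bound $\|\pmb{M}_{J,J^c}-\rho\hat{\pmb{z}}\hat{\pmb{w}}^\top\|_2\leq(1+\sqrt{s})\|\pmb{M}_{J,J^c}\|_2$, the crude bound $\|\hat{\pmb{w}}\|_2^2<d-s$ from (\ref{cond2}), and the union bound over the events of Lemma \ref{lemma_bernstein}) coincides with the paper's derivation of \eqref{ineq_bound1}--\eqref{ineq_bound3}. Where you genuinely diverge is the deterministic spectral step. The paper (Lemma \ref{lemma_eig_ineq}) verifies that $(\hat{\pmb{x}}^\top,\pmb{0}^\top)^\top$ is an eigenvector and then bounds the Rayleigh quotient of $\pmb{M}-\rho\hat{\pmb{Z}}$ over unit vectors whose $J$-block is orthogonal to $\hat{\pmb{x}}$, reducing the problem to a scalar inequality in $t=\|\pmb{y}_2\|_2^2$ that is dispatched by the elementary quadratic Lemma \ref{lemma_quad}. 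You instead argue by contradiction through the Schur complement $\mu\pmb{I}-\pmb{D}-\pmb{C}^\top(\mu\pmb{I}-\pmb{A})^{-1}\pmb{C}$, using the orthogonality $\pmb{C}^\top\hat{\pmb{x}}=\pmb{0}$ to get the resolvent bound $\|\pmb{C}^\top(\mu\pmb{I}-\pmb{A})^{-1}\pmb{C}\|_2\leq\|\pmb{C}\|_2^2/(\mu-\lambda_2(\pmb{A}))$, which makes the spectral gap $\lambda_1(\pmb{A})-\lambda_2(\pmb{A})$ rather than the vanishing $\mu-\lambda_1(\pmb{A})$ appear; this lands on exactly the same deterministic inequality $\|\pmb{C}\|_2^2\leq(\lambda_1(\pmb{A})-\lambda_2(\pmb{A}))(\lambda_1(\pmb{A})-\lambda_1(\pmb{D}))$ as the paper. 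Both routes rely on the same two structural facts — the orthogonality supplied by the construction of $\hat{\pmb{w}}$ together with (\ref{cond1}) (which gives $\hat{\pmb{z}}^\top\hat{\pmb{x}}=\|\hat{\pmb{x}}\|_1$; the paper uses this implicitly, you flag it explicitly), and the gap-versus-coupling inequality — and both implicitly need the two factors to be positive (you state this; the paper's Lemma \ref{lemma_quad} assumes it through $c\geq 0$, $b+c\geq 0$), which in particular recovers (\ref{cond4}). What your Schur-complement route buys is that it avoids the auxiliary quadratic Lemma \ref{lemma_quad} and the explicit optimization over $t$; what the paper's route buys is that it stays entirely at the level of quadratic forms, never needs $\mu\pmb{I}-\pmb{A}$ to be invertible, and so handles the boundary situation $\mu\downarrow\lambda_1(\pmb{A})$ without a limiting or contradiction argument. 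Either way the probabilistic accounting and the final sufficient condition are identical to the paper's.
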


\begin{lemma}[Sufficient Conditions for (\ref{cond4})]
\label{lemma_suff_cond_strict_eig}
In addition to the conditions in Lemma \ref{lemma_suff_cond_w},
if the following inequality holds:
$$
p\cdot (\lambda_1(\pmb{M}^*_{J,J})-\lambda_2(\pmb{M}^*_{J,J}))
\geq 2\cdot K_1 + 2\rho s,
$$
then the condition (\ref{cond4}) holds with probability at least $1-(2s)^{-c} $ for any $c>0$.
Note that $K_1$ is defined in Lemma \ref{lemma_bernstein}.
\end{lemma}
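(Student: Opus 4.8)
The statement concerns only the two largest eigenvalues of the $s\times s$ symmetric matrix $\pmb{M}_{J,J}-\rho\hat{\pmb{z}}\hat{\pmb{z}}^\top$, and $\hat{\pmb{z}}$ is a fixed $\pm1$ sign vector that depends only on $\pmb{u}_1$ and not on the random observation; so the plan is to reduce the claim to a deterministic eigenvalue–perturbation estimate that holds on the event that $\pmb{M}_{J,J}$ is close to its mean. Since $\mathbb{E}[\pmb{M}_{J,J}]=p\,\pmb{M}^*_{J,J}$, I would condition on the event $\mathcal{E}=\{\|\pmb{M}_{J,J}-p\,\pmb{M}^*_{J,J}\|_2\le K_1\}$, which by the matrix-Bernstein bound of Lemma \ref{lemma_bernstein} (which defines $K_1$) has probability at least $1-(2s)^{-c}$; this is the only place randomness enters, which accounts for the stated failure probability. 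Everything that follows is deterministic on $\mathcal{E}$.

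Two elementary observations drive the estimate. First, since $\hat{\pmb{z}}$ has entries $\pm1$, $\|\hat{\pmb{z}}\|_2^2=s$, hence $\rho\hat{\pmb{z}}\hat{\pmb{z}}^\top\succeq0$ and $\|\rho\hat{\pmb{z}}\hat{\pmb{z}}^\top\|_2=\rho s$. Second, because $\pmb{u}_1$ is supported on $J$ and $\pmb{M}^*\pmb{u}_1=\lambda_1(\pmb{M}^*)\pmb{u}_1$, the vector $\pmb{u}_{1,J}$ is a unit-norm leading eigenvector of $\pmb{M}^*_{J,J}$ with eigenvalue $\lambda_1(\pmb{M}^*_{J,J})$, and $\hat{\pmb{z}}^\top\pmb{u}_{1,J}=\|\pmb{u}_{1,J}\|_1\le\sqrt{s}$. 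For the lower bound on $\lambda_1$, I would either apply Weyl's inequality to the splitting $\pmb{M}_{J,J}-\rho\hat{\pmb{z}}\hat{\pmb{z}}^\top=p\,\pmb{M}^*_{J,J}+\big[(\pmb{M}_{J,J}-p\,\pmb{M}^*_{J,J})-\rho\hat{\pmb{z}}\hat{\pmb{z}}^\top\big]$ or evaluate the Rayleigh quotient at $\pmb{u}_{1,J}$; either way one gets, on $\mathcal{E}$, $\lambda_1(\pmb{M}_{J,J}-\rho\hat{\pmb{z}}\hat{\pmb{z}}^\top)\ge p\,\lambda_1(\pmb{M}^*_{J,J})-K_1-\rho s$. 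For the upper bound on $\lambda_2$, subtracting the positive semidefinite rank-one matrix $\rho\hat{\pmb{z}}\hat{\pmb{z}}^\top$ cannot increase any eigenvalue, so $\lambda_2(\pmb{M}_{J,J}-\rho\hat{\pmb{z}}\hat{\pmb{z}}^\top)\le\lambda_2(\pmb{M}_{J,J})$, and one more use of Weyl against $p\,\pmb{M}^*_{J,J}$ gives, on $\mathcal{E}$, $\lambda_2(\pmb{M}_{J,J}-\rho\hat{\pmb{z}}\hat{\pmb{z}}^\top)\le p\,\lambda_2(\pmb{M}^*_{J,J})+K_1$. Subtracting the two and inserting the hypothesis $p(\lambda_1(\pmb{M}^*_{J,J})-\lambda_2(\pmb{M}^*_{J,J}))\ge 2K_1+2\rho s$ yields $\lambda_1(\pmb{M}_{J,J}-\rho\hat{\pmb{z}}\hat{\pmb{z}}^\top)-\lambda_2(\pmb{M}_{J,J}-\rho\hat{\pmb{z}}\hat{\pmb{z}}^\top)\ge\rho s>0$, where strict positivity uses $\rho>0$, which is in force under the conditions of Lemma \ref{lemma_suff_cond_w}. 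This is exactly (\ref{cond4}), so (\ref{cond4}) holds on $\mathcal{E}$, i.e.\ with probability at least $1-(2s)^{-c}$.

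I do not expect a genuine obstacle in this lemma: the substantive ingredient is the spectral-norm concentration $\|\pmb{M}_{J,J}-p\,\pmb{M}^*_{J,J}\|_2\le K_1$, which is proved separately in Lemma \ref{lemma_bernstein}, and everything else here is eigenvalue monotonicity plus Weyl's inequality. The one point needing care is the constant: the slack $2\rho s$ in the hypothesis (rather than just $\rho s$) is exactly what converts the perturbation bound $\lambda_1-\lambda_2\ge p(\lambda_1(\pmb{M}^*_{J,J})-\lambda_2(\pmb{M}^*_{J,J}))-2K_1-\rho s$ into a \emph{strict} gap, which is what is needed for the leading eigenvector $\hat{\pmb{x}}$ in Proposition \ref{prop_kkt} to be uniquely defined.
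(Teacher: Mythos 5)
Your proof is correct, and it uses the same skeleton as the paper: condition on the event $\|\pmb{M}_{J,J}-p\,\pmb{M}^*_{J,J}\|_2\le K_1$ from Lemma \ref{lemma_bernstein} (probability at least $1-(2s)^{-c}$), then run deterministic eigenvalue perturbation. The one place you genuinely deviate is in how the regularization term $\rho\hat{\pmb{z}}\hat{\pmb{z}}^\top$ is absorbed. The paper lumps it into the perturbation, bounding $\|\mathbb{E}[\pmb{M}_{J,J}]-\pmb{M}_{J,J}+\rho\hat{\pmb{z}}\hat{\pmb{z}}^\top\|_2\le K_1+\rho s$ and applying Weyl to both $\lambda_1$ and $\lambda_2$, so the gap loses $2K_1+2\rho s$; you instead use that subtracting the PSD rank-one matrix $\rho\hat{\pmb{z}}\hat{\pmb{z}}^\top$ cannot increase $\lambda_2$, so only the $\lambda_1$ bound pays $\rho s$ and the gap loses just $2K_1+\rho s$. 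This buys you something concrete: under the stated hypothesis $p(\lambda_1(\pmb{M}^*_{J,J})-\lambda_2(\pmb{M}^*_{J,J}))\ge 2K_1+2\rho s$, your chain yields $\lambda_1-\lambda_2\ge\rho s>0$, so the strict inequality required by \eqref{cond4} is immediate (using only $\rho>0$, which the condition of Lemma \ref{lemma_suff_cond_w} guarantees). The paper's chain, taken with the non-strict concentration bound $\le K_1$, only gives a gap $\ge 0$ under the same hypothesis, and its final strict ``$>$'' implicitly relies on the concentration event holding with strict inequality; your version sidesteps that delicacy and is marginally sharper. Both arguments rest on the same two ingredients (matrix Bernstein and Weyl/monotonicity), so the difference is a refinement rather than a new method.
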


Since $\min_{i \in J}|u_{1,i}| < \sqrt{2}$ always holds, the sufficient condition in Lemma \ref{lemma_suff_cond_sign} implies the sufficient condition in Lemma \ref{lemma_suff_cond_strict_eig}.
Hence, we do not need the condition in Lemma \ref{lemma_suff_cond_strict_eig}.
Combining the sufficient condtions in Lemma \ref{lemma_suff_cond_sign} to Lemma \ref{lemma_suff_cond_eig} gives the result of Theorem \ref{thm_main}.

\section{Proof of Corollary \ref{cor_1}}

When $B = \sigma^2 =0$ and $p \geq 0.5$, we can write
\begin{align*}
K_1 
&\lesssim p \|\pmb{M}^*_{J,J}\|_{\max}\log s + \sqrt{p(1-p)} \|\pmb{M}^*_{J,J}\|_{2,\infty} \sqrt{\log s}
\\&=
p \log s \|\pmb{M}^*_{J,J}\|_{\max} \bigg(1 + \sqrt{\frac{1-p}{p}} (\mu_1 \sqrt{\log s})^{-1}\bigg).
\end{align*}
Under the conditions \eqref{cor1_cond1}, \eqref{cor1_cond4} and \eqref{cor1_cond5},
\begin{align*}
K_1 + \rho s
&\lesssim p \log s \|\pmb{M}^*_{J,J}\|_{\max} + \frac{p\bar{\lambda}(\pmb{M}^*_{J,J})}{s}
= p\log s \cdot \bar{\lambda}(\pmb{M}^*_{J,J}) \mu_0 + \frac{p\bar{\lambda}(\pmb{M}^*_{J,J})}{s}
\\& \ll p \bar{\lambda}(\pmb{M}^*_{J,J}) \log s \frac{1}{\sqrt{s}\log s} + \frac{p\bar{\lambda}(\pmb{M}^*_{J,J})}{\sqrt{s}}
= \frac{2p\bar{\lambda}(\pmb{M}^*_{J,J})}{\sqrt{s}}
\lesssim \frac{p\bar{\lambda}(\pmb{M}^*_{J,J}) \cdot \min_{i\in J}|u_{1,i}|}{2\sqrt{2}}.
\end{align*}
Hence, $2\sqrt{2}\cdot \frac{K_1 + \rho s}{p\bar{\lambda}(\pmb{M}^*_{J,J})}\leq \min_{i\in J}|u_{1,i}|$ holds asymptotically.

Next, under the conditions \eqref{cor1_cond2}, \eqref{cor1_cond4} and \eqref{cor1_cond5},
\begin{align*}
&2\sqrt{sp(1-p)}\|\pmb{M}^*_{J^c,J}\|_F + p\|\pmb{M}^*_{J^c,J}\|_{\max}
\leq 
p\|\pmb{M}^*_{J^c,J}\|_{\max} \Big(1 + 2\sqrt{s}\sqrt{\frac{1-p}{p}} \mu_2^{-1}\Big)
\\&\ll
p\|\pmb{M}^*_{J^c,J}\|_{\max} \bigg( \frac{\bar{\lambda}(\pmb{M}^*_{J,J})}{s^2 \|\pmb{M}^*_{J^c,J}\|_{\max}} + \frac{\bar{\lambda}(\pmb{M}^*_{J,J})}{s^2 \|\pmb{M}^*_{J^c,J}\|_{\max}} \bigg)
\lesssim
\frac{p\bar{\lambda}(\pmb{M}^*_{J,J})}{s^2} 
\lesssim \rho.
\end{align*}
Therefore, $\rho > 2\sqrt{sp(1-p)}\|\pmb{M}^*_{J^c,J}\|_F + p\|\pmb{M}^*_{J^c,J}\|_{\max}$ holds asymptotically.

Lastly, we will show that under the conditions \eqref{cor1_cond2}, \eqref{cor1_cond3}, \eqref{cor1_cond4} and \eqref{cor1_cond5},
\begin{align*}
(K_2 + p\cdot \| \pmb{M}^*_{J^c,J} \|_2)^2\cdot (1+\sqrt{s})^2
&\leq
\Big\{ 
p\cdot \bar{\lambda}(\pmb{M}^*_{J,J})
-
2\cdot K_1 - 2\rho s \Big\}
\\&~~~~\times
\Big\{
p\cdot \lambda_1(\pmb{M}^*_{J,J})
- K_1 - \rho s - K_3 - \frac{1}{\rho} \cdot 
( p\|\pmb{M}^*_{J,J^c} \|_{\infty, 2} + c_0\sqrt{d-s} )^2\Big\},
\end{align*}
where $c_0 = \sqrt{s p(1-p)}\|\pmb{M}^*_{J^c,J}\|_F$.
Here, the bound \eqref{eq:using_another_bound_w2} is used instead of \eqref{ineq_bound3}.
Under the conditions \eqref{cor1_cond2} and \eqref{cor1_cond4},
\begin{align*}
&(K_2 + p\cdot \|\pmb{M}^*_{J^c,J}\|_2)\cdot(1+\sqrt{s})
\\&\lesssim
\big(p \|\pmb{M}^*_{J^c,J}\|_{\max}\log d + \sqrt{p(1-p)} \max\big\{ \|\pmb{M}^*_{J^c,J}\|_{2,\infty}, \|{\pmb{M}^*}^\top_{J^c,J}\|_{2,\infty} \big\} \sqrt{\log d} + p \|\pmb{M}^*_{J^c,J}\|_2\big) \cdot \sqrt{s}
\\&\leq
p \|\pmb{M}^*_{J^c,J}\|_{\max}\cdot \big( \log d + \sqrt{\log d}\sqrt{\frac{1-p}{p}}\mu_2^{-1} + \mu_2^{-1} \big)\cdot\sqrt{s}
\\&\ll
p \|\pmb{M}^*_{J^c,J}\|_{\max}\cdot \bigg( \frac{\bar{\lambda}(\pmb{M}^*_{J,J}) \sqrt{s}}{s \|\pmb{M}^*_{J^c,J}\|_{\max}} + \frac{\bar{\lambda}(\pmb{M}^*_{J,J}) \sqrt{\log d}}{s\sqrt{s(d-s)} \|\pmb{M}^*_{J^c,J}\|_{\max}} + \frac{\bar{\lambda}(\pmb{M}^*_{J,J}) }{s \|\pmb{M}^*_{J^c,J}\|_{\max}} \bigg)\cdot\sqrt{s}
\\&\lesssim
p \|\pmb{M}^*_{J^c,J}\|_{\max}\cdot\frac{\bar{\lambda}(\pmb{M}^*_{J,J})}{\sqrt{s} \|\pmb{M}^*_{J^c,J}\|_{\max}}\cdot\sqrt{s}
= p\bar{\lambda}(\pmb{M}^*_{J,J}).
\end{align*}
Also, since $2\sqrt{2}\cdot \frac{K_1 + \rho s}{p\bar{\lambda}(\pmb{M}^*_{J,J})}\leq \min_{i\in J}|u_{1,i}| \leq \frac{1}{\sqrt{s}}$,
\begin{align*}
p\bar{\lambda}(\pmb{M}^*_{J,J}) - 2K_1 -2\rho s
\geq
p\bar{\lambda}(\pmb{M}^*_{J,J}) - \frac{p\bar{\lambda}(\pmb{M}^*_{J,J})}{\sqrt{2s}}
\simeq
p\bar{\lambda}(\pmb{M}^*_{J,J}).
\end{align*}
Moreover, under the conditions \eqref{cor1_cond3} and \eqref{cor1_cond4},
\begin{align*}
K_3 
&\lesssim
p \|\pmb{M}^*_{J^c,J^c}\|_{\max}\log(d-s) + \sqrt{p(1-p)} \|\pmb{M}^*_{J^c,J^c}\|_{2,\infty} \sqrt{\log(d-s)}
\\&=
 p \|\pmb{M}^*_{J^c,J^c}\|_{\max}\log(d-s)\cdot 
\bigg\{1 + \sqrt{\frac{1-p}{p}}\mu_3^{-1}\frac{1}{\sqrt{\log(d-s)}}\bigg\}
\\&\ll
p \|\pmb{M}^*_{J^c,J^c}\|_{\max}\log(d-s)\cdot 
\bigg\{ \frac{\bar{\lambda}(\pmb{M}^*_{J,J})}{\|\pmb{M}^*_{J^c,J^c}\|_{\max}\log(d-s)} + \frac{\bar{\lambda}(\pmb{M}^*_{J,J})}{\|\pmb{M}^*_{J^c,J^c}\|_{\max}\log(d-s)}\bigg\}
\simeq p \bar{\lambda}(\pmb{M}^*_{J,J}),
\end{align*}
and under the conditions \eqref{cor1_cond2}, \eqref{cor1_cond4} and \eqref{cor1_cond5},
\begin{align*}
&\frac{1}{\rho}\cdot ( p\|\pmb{M}^*_{J,J^c} \|_{\infty, 2} + \sqrt{s(d-s)p(1-p)}\|\pmb{M}^*_{J,J^c} \|_{F} )^2
\\&\leq
\frac{1}{\rho}\cdot \bigg\{ p\|\pmb{M}^*_{J,J^c} \|_{\max} \Big( \mu_2^{-1} + \sqrt{s(d-s)}\sqrt{\frac{1-p}{p}}\mu_2^{-1} \Big) \bigg\}^2
\\&\ll
\frac{1}{\rho}\cdot \bigg\{ p\|\pmb{M}^*_{J,J^c} \|_{\max}\Big( \frac{\bar{\lambda}(\pmb{M}^*_{J,J})}{s\|\pmb{M}^*_{J,J^c} \|_{\max}} + \frac{\bar{\lambda}(\pmb{M}^*_{J,J})}{s\|\pmb{M}^*_{J,J^c} \|_{\max}} \Big) \bigg\}^2
\simeq \frac{1}{\rho}\cdot \frac{(p \bar{\lambda}(\pmb{M}^*_{J,J}))^2}{s^2}
\\&\simeq
\frac{s^2}{p \bar{\lambda}(\pmb{M}^*_{J,J})}\cdot \frac{(p \bar{\lambda}(\pmb{M}^*_{J,J}))^2}{s^2} = p \bar{\lambda}(\pmb{M}^*_{J,J}).
\end{align*}
Hence,
\begin{align*}
&p\cdot \lambda_1(\pmb{M}^*_{J,J})
- K_1 - \rho s - K_3 - \frac{1}{\rho} \cdot 
( p\|\pmb{M}^*_{J,J^c} \|_{\infty, 2} + c_0\sqrt{d-s} )^2
\\&\geq
p \bar{\lambda}(\pmb{M}^*_{J,J})\cdot (1-o(1)) - \frac{p \bar{\lambda}(\pmb{M}^*_{J,J})}{2\sqrt{2s}}
\simeq
p \bar{\lambda}(\pmb{M}^*_{J,J}).
\end{align*}
Therefore, 
\begin{align*}
&(K_2 + p\cdot \| \pmb{M}^*_{J^c,J} \|_2)^2\cdot (1+\sqrt{s})^2
\ll
\big(p \bar{\lambda}(\pmb{M}^*_{J,J})\big)^2
\\&\lesssim
\Big\{ 
p\cdot \bar{\lambda}(\pmb{M}^*_{J,J})
-
2\cdot K_1 - 2\rho s \Big\}
\times
\Big\{
p\cdot \lambda_1(\pmb{M}^*_{J,J})
- K_1 - \rho s - K_3 - \frac{1}{\rho} \cdot 
( p\|\pmb{M}^*_{J,J^c} \|_{\infty, 2} + c_0\sqrt{d-s} )^2\Big\},
\end{align*}
that is, the desired result holds asymptotically.

\section{Proof of Corollary \ref{cor_2}}

Since
$\pmb{M}^*_{J^c, J} = \pmb{M}^*_{J, J^c} = \pmb{M}^*_{J^c, J^c} = 0$ and $\pmb{M}^*_{J, J} = \lambda_1(\pmb{M}^*) \pmb{u}_{1,J} \pmb{u}_{1,J}^\top$ for $\lambda_1(\pmb{M}^*) >0$, the conditions in Theorem \ref{thm_main} can be written as follows:
\begin{align}
2\sqrt{2}\cdot\frac{K_1 + \rho s}{p \lambda_1(\pmb{M}^*)}
&\leq
\min_{i \in J}|u_{1,i}|,
\label{cor_cond1}
\\
\rho
&>
2\sqrt{2}\cdot \sqrt{p\sigma^2 s^2 (d-s)},
\label{cor_cond2}
\\
K_2^2\cdot (1+\sqrt{s})^2
&\leq
\Big\{ 
p \lambda_1(\pmb{M}^*)
-
2\cdot K_1 - 2\rho s \Big\}
\times
\Big\{
p \lambda_1(\pmb{M}^*)
- K_1 - K_3 - \rho d\Big\},
\label{cor_cond3}
\end{align}
where $c = 1$.

First, when $B \leq (2p-1)\lambda_1(\pmb{M}^*)\cdot\max_{i,j\in J}|u_{1,i}u_{1,j}| = (2p-1)\|\pmb{M}^*_{J,J}\|_{\max}$ and $p\geq 0.5$, $K_1$ is expressed as
$$
K_1 = 2p\|\pmb{M}^*_{J,J}\|_{\max}\log(2s) + 2\big\{\sqrt{p(1-p)}\|\pmb{M}^*_{J,J}\|_{2,\infty} + \sqrt{ps\sigma^2}\big\}\sqrt{\log(2s)}.
$$
Hence, if the following inequalities hold:
\begin{align*}
2p\|\pmb{M}^*_{J,J}\|_{\max}\log(2s)
&\leq
\frac{1}{8\sqrt{2}}\cdot p \lambda_1(\pmb{M}^*)\cdot \min_{i \in J}|u_{1,i}|,
\\
2\sqrt{p(1-p)}\|\pmb{M}^*_{J,J}\|_{2,\infty}\cdot \sqrt{\log(2s)}
&\leq
\frac{1}{8\sqrt{2}}\cdot p \lambda_1(\pmb{M}^*)\cdot \min_{i \in J}|u_{1,i}|,
\\
2\sqrt{ps\sigma^2}\cdot \sqrt{\log(2s)}
&\leq
\frac{1}{8\sqrt{2}}\cdot p \lambda_1(\pmb{M}^*)\cdot \min_{i \in J}|u_{1,i}|,
\\
\rho s
&\leq
\frac{1}{8\sqrt{2}}\cdot p \lambda_1(\pmb{M}^*)\cdot \min_{i \in J}|u_{1,i}|,
\end{align*}
that is, if the following inequalities hold:
\begin{align}
\|\pmb{M}^*_{J,J}\|_{\max}
&\leq
\frac{1}{16\sqrt{2}\log(2s)}\cdot \lambda_1(\pmb{M}^*)\cdot \min_{i \in J}|u_{1,i}|,
\label{cor_cond4}
\\
\|\pmb{M}^*_{J,J}\|_{2,\infty}
&\leq
\frac{1}{16\sqrt{2}\sqrt{\log(2s)}}\cdot \sqrt{\frac{p}{1-p}}\cdot \lambda_1(\pmb{M}^*)\cdot \min_{i \in J}|u_{1,i}|,
\label{cor_cond5}
\\
2\sqrt{p\sigma^2s\log(2s)}
&\leq
\frac{1}{8\sqrt{2}}\cdot p \lambda_1(\pmb{M}^*)\cdot \min_{i \in J}|u_{1,i}|,
\label{cor_cond6}
\\
\rho
&\leq
\frac{1}{8\sqrt{2}s}\cdot p \lambda_1(\pmb{M}^*)\cdot \min_{i \in J}|u_{1,i}|,
\label{cor_cond7}
\end{align}
then (\ref{cor_cond1}) holds.
Note that (\ref{cor_cond2}) and (\ref{cor_cond7}) imply that
$$
2\sqrt{2}\cdot \sqrt{p\sigma^2 s^2 (d-s)}
< \frac{1}{8\sqrt{2}s}\cdot p \lambda_1(\pmb{M}^*)\cdot \min_{i \in J}|u_{1,i}|.
$$
Since $2s^3(d-s) \geq \log(2s)$ for any $s\geq 1$ and $d>s$,
(\ref{cor_cond2}) and (\ref{cor_cond7}) are sufficient for (\ref{cor_cond6}).

Now, we will derive the sufficient conditions for (\ref{cor_cond3}).
First, by the conditions (\ref{cor_cond1}) and (\ref{cor_cond7}), we have that 
\begin{align*}
2K_1 + 2\rho s 
&\leq 
\frac{1}{\sqrt{2}} \cdot p \lambda_1(\pmb{M}^*) \min_{i \in J}|u_{1,i}|,
\\
K_1 + \rho d 
&= K_1 + \rho s + \rho (d-s)
\leq
\frac{1}{2\sqrt{2}} \cdot p \lambda_1(\pmb{M}^*) \min_{i \in J}|u_{1,i}|
+
\frac{d-s}{8\sqrt{2}s}\cdot p \lambda_1(\pmb{M}^*)\cdot \min_{i \in J}|u_{1,i}|.
\end{align*}
Also, since
\begin{gather*}
2B 
\leq 
2(2p-1)\|\pmb{M}^*_{J,J}\|_{\max} 
\leq 
(2p-1) \frac{1}{8\sqrt{2}\log(2s)}\cdot \lambda_1(\pmb{M}^*)\cdot \min_{i \in J}|u_{1,i}|
=
(2-\frac{1}{p})\cdot \frac{p\lambda_1(\pmb{M}^*)\cdot \min_{i \in J}|u_{1,i}|}{8\sqrt{2}\log(2s)},
\\
2\sqrt{p\sigma^2}
< 
\frac{1}{16 s^2\sqrt{d-s}}\cdot p \lambda_1(\pmb{M}^*)\cdot \min_{i \in J}|u_{1,i}|,
\end{gather*}
we can state
\begin{align*}
K_2
&=
2B \log d + 2\sqrt{p\sigma^2 \max\{d-s, s\}}\cdot \sqrt{\log d}
\\&\leq
p\lambda_1(\pmb{M}^*)\cdot \min_{i \in J}|u_{1,i}|\cdot
\Bigg\{
\underbrace{
(2-\frac{1}{p})\cdot \frac{\log d}{8\sqrt{2}\log(2s)}
+
\frac{\sqrt{\max\{d-s, s\}}\cdot \sqrt{\log d}}{16 s^2\sqrt{d-s}}
}_{=:a_1}
\Bigg\}
\end{align*}
and
\begin{align*}
K_3 
&= 
2B\log(2(d-s)) + 2 \sqrt{p\sigma^2(d-s)} \sqrt{\log(2(d-s))}
\\&\leq
p\lambda_1(\pmb{M}^*)\cdot \min_{i \in J}|u_{1,i}|\cdot
\Bigg\{
\underbrace{
(2-\frac{1}{p})\cdot \frac{\log(2(d-s))}{8\sqrt{2}\log(2s)}
+
\frac{\sqrt{\log(2(d-s))}}{16 s^2}
}_{=:a_2}
\Bigg\}.
\end{align*}
Therefore, the following inequality is sufficient for (\ref{cor_cond3}):
\begin{multline*}
\bigg(p\lambda_1(\pmb{M}^*)\cdot\min_{i \in J}|u_{1,i}| 
\cdot a_1 \cdot (1+\sqrt{s}) \bigg)^2
\leq
\Big\{ 
p \lambda_1(\pmb{M}^*)
-
\frac{1}{\sqrt{2}}\cdot p\lambda_1(\pmb{M}^*)\cdot\min_{i \in J}|u_{1,i}| 
\Big\}
\\ 
\times
\Big\{
p \lambda_1(\pmb{M}^*)
-
\frac{1}{2\sqrt{2}} \cdot p \lambda_1(\pmb{M}^*) \min_{i \in J}|u_{1,i}|
-
\frac{d-s}{8\sqrt{2}s}\cdot p \lambda_1(\pmb{M}^*)\cdot \min_{i \in J}|u_{1,i}|
- a_2 \cdot p \lambda_1(\pmb{M}^*)\cdot\min_{i \in J}|u_{1,i}|
\Big\}
\end{multline*}
\begin{align*}
\Leftrightarrow
\bigg(a_1 \cdot (1+\sqrt{s})\cdot\min_{i \in J}|u_{1,i}| \bigg)^2
&\leq
\Big\{ 
1
-
\frac{1}{\sqrt{2}}\cdot\min_{i \in J}|u_{1,i}| 
\Big\}
\times
\Big\{
1
- 
\bigg(
\frac{1}{2\sqrt{2}}
+ \frac{d-s}{8\sqrt{2}s}
+ a_2
\bigg)
\cdot\min_{i \in J}|u_{1,i}| 
\Big\}.
\end{align*}
Note that the quadratic inequality $(a x)^2 \leq (1-bx)(1-cx)$
holds if
$a^2 \neq bc$ and $0 \leq x \leq \frac{b+c-\sqrt{(b-c)^2 + 4a^2}}{2(bc-a^2)}$.
By using this fact, we have that if the following inequality holds:
\begin{align*}
\min_{i \in J}|u_{1,i}|
&\leq
\frac
{12 + \frac{d-s}{s} + 8\sqrt{2}a_2 - \sqrt{ (4-\frac{d-s}{s} - 8\sqrt{2}a_2)^2 + 512 a_1^2 (1+\sqrt{s})^2 }}
{4\sqrt{2} + \sqrt{2}\cdot \frac{d-s}{s} + 16a_2 - 16\sqrt{2}a_1^2(1+\sqrt{s})^2},
\end{align*}
then (\ref{cor_cond3}) holds.
Since $\min_{i \in J}|u_{1,i}| \leq \frac{1}{\sqrt{s}}$,
the following inequality is sufficient for (\ref{cor_cond3}):
\begin{align*}
\frac{1}{\sqrt{s}}
&\leq
\frac
{12 + \frac{d-s}{s} + 8\sqrt{2}a_2 - \sqrt{ (4-\frac{d-s}{s} - 8\sqrt{2}a_2)^2 + 512 a_1^2 (1+\sqrt{s})^2 }}
{4\sqrt{2} + \sqrt{2}\cdot \frac{d-s}{s} + 16a_2 - 16\sqrt{2}a_1^2(1+\sqrt{s})^2}.
\end{align*}

In sum, if the following inequalities hold:
\begin{align*}
\|\pmb{M}^*_{J,J}\|_{\max}
&\leq
\frac{1}{16\sqrt{2}\log(2s)}\cdot \lambda_1(\pmb{M}^*)\cdot \min_{i \in J}|u_{1,i}|,
\\
\|\pmb{M}^*_{J,J}\|_{2,\infty}
&\leq
\frac{1}{16\sqrt{2}\sqrt{\log(2s)}}\cdot \sqrt{\frac{p}{1-p}}\cdot \lambda_1(\pmb{M}^*)\cdot \min_{i \in J}|u_{1,i}|,
\\
B 
&\leq 
(2p-1)\lambda_1(\pmb{M}^*)\cdot\max_{i,j\in J} |u_{1,i} u_{1,j}|,
\\
\rho
&\leq
\frac{1}{8\sqrt{2}s}\cdot p \lambda_1(\pmb{M}^*)\cdot \min_{i \in J}|u_{1,i}|,
\\
\rho
&>
2\sqrt{2}\cdot \sqrt{p\sigma^2 s^2 (d-s)},
\\
\frac{1}{\sqrt{s}}
&\leq
\frac
{12 + \frac{d-s}{s} + 8\sqrt{2}a_2 - \sqrt{ (4-\frac{d-s}{s} - 8\sqrt{2}a_2)^2 + 512 a_1^2 (1+\sqrt{s})^2 }}
{4\sqrt{2} + \sqrt{2}\cdot \frac{d-s}{s} + 16a_2 - 16\sqrt{2}a_1^2(1+\sqrt{s})^2},
\end{align*}
then the desired result holds, where
$
a_1 
=
(2-\frac{1}{p})\cdot \frac{\log d}{8\sqrt{2}\log(2s)}
+
\frac{\sqrt{\max\{d-s, s\}}\cdot \sqrt{\log d}}{16 s^2\sqrt{d-s}}$
and
$a_2
=
(2-\frac{1}{p})\cdot \frac{\log(2(d-s))}{8\sqrt{2}\log(2s)}
+
\frac{\sqrt{\log(2(d-s))}}{16 s^2}.$
Since 
$\|\pmb{M}^*\|_{\max} = \lambda_1(\pmb{M}^*)\max_{i,j\in J} |u_{1,i} u_{1,j}|$
and
$\|\pmb{M}^*\|_{2,\infty} = \lambda_1(\pmb{M}^*)\max_{i\in J}\sqrt{\sum_{j\in J} u_{1,i}^2 u_{1,j}^2} = \lambda_1(\pmb{M}^*)\max_{i\in J} |u_{1,i}|$, the first two conditions can be written as
$\frac{\max_{i,j\in J} |u_{1,i} u_{1,j}|}{\min_{i \in J}|u_{1,i}|}
\leq
\frac{1}{16\sqrt{2}\log(2s)}$
and
$\frac{\max_{i\in J} |u_{1,i}|}{\min_{i \in J}|u_{1,i}|}
\leq
\frac{1}{16\sqrt{2}\sqrt{\log(2s)}}\cdot \sqrt{\frac{p}{1-p}}$.

\section{Other Proofs}

\subsection{Proof of Proposition \ref{prop_fps_kkt}}
\label{subsec:pf_prop_fps_kkt}

With the primal variable $\pmb{X}\in\mathbb{R}^{d\times d}$ and the dual variables $\pmb{Z}\in\mathbb{R}^{d\times d}$, $\pmb{\Lambda}\in\mathbb{R}^{d\times d}$ and $\mu \in \mathbb{R}$,
the Lagrangian of the problem (\ref{fps_problem}) is written as
$$
L(\pmb{X}, \pmb{Z}, \pmb{\Lambda}, \mu)
= 
- \langle \pmb{M}, \pmb{X} \rangle + \rho \langle \pmb{X}, \pmb{Z} \rangle
- \langle \pmb{\Lambda}, \pmb{X} \rangle  + \mu \cdot (tr(\pmb{X}) - 1)
$$
where $Z_{ij} \in \partial |X_{ij}|$ for each $i,j \in [d]$.
According to the standard KKT condition, we can derive that
$(\hat{\pmb{X}}, \hat{\pmb{Z}}, \hat{\pmb{\Lambda}}, \hat{\mu})$ is optimal
if and only if the followings hold:
\begin{itemize}
\item Primal feasibility: 
$\hat{\pmb{X}} \succeq 0$, $tr(\hat{\pmb{X}}) = 1$
\item Dual feasibility: 
$\hat{\pmb{\Lambda}} \succeq 0$,
$\hat{Z}_{ij} \in \partial |\hat{X}_{ij}|$ for each $i,j \in [d]$
\item Complementary slackness: 
$\langle \hat{\pmb{\Lambda}}, \hat{\pmb{X}} \rangle = 0$
($\Leftrightarrow \hat{\pmb{\Lambda}} \hat{\pmb{X}} = 0$ if $\hat{\pmb{X}} \succeq 0$ and $\hat{\pmb{\Lambda}} \succeq 0$)
\item Stationarity: 
$\hat{\pmb{\Lambda}} = - \pmb{M} + \rho  \hat{\pmb{Z}} + \hat\mu \cdot \pmb{I}$.
\end{itemize}
By substituting $\hat{\pmb{\Lambda}}$ with $- \pmb{M} + \rho  \hat{\pmb{Z}} + \hat\mu \cdot \pmb{I}$,
it can be shown that the above conditions are equivalent to
\begin{align*}
&\hat{\pmb{X}} \succeq 0, tr(\hat{\pmb{X}}) = 1
\\&
\pmb{M} - \rho  \hat{\pmb{Z}} \preceq \hat\mu \pmb{I}
\\&
\hat{Z}_{ij} \in \partial |\hat{X}_{ij}| ~~\text{ for each } i,j \in [d]
\\&
(\pmb{M} - \rho  \hat{\pmb{Z}})\hat{\pmb{X}} = \hat\mu \cdot \hat{\pmb{X}}.
\end{align*}

To use the primal-dual witness construction, we now consider the following restricted problem:

\begin{equation}
\label{fps_restricted_problem}
\underset{\pmb{X}\succeq 0, tr(\pmb{X}) = 1 \text{ and } supp(\pmb{X})\subseteq J\times J}{\max}~ 
\langle \pmb{M}, \pmb{X} \rangle - \rho \|\pmb{X}\|_{1,1}.
\end{equation}
Similarly to the above, we can derive that
$\hat{\pmb{X}} = 
\begin{pmatrix}
\hat{\pmb{X}}_{J,J} & 0 \\
0 & 0
\end{pmatrix}
$
is optimal to the problem (\ref{fps_restricted_problem}) if and only if
\begin{align*}
&\hat{\pmb{X}}_{J,J} \succeq 0, tr(\hat{\pmb{X}}_{J,J}) = 1
\\&
\pmb{M}_{J,J} - \rho  \hat{\pmb{Z}}_{J,J} \preceq \hat\mu \pmb{I}
\\&
\hat{Z}_{ij} \in \partial |\hat{X}_{ij}| ~~\text{ for each } i,j \in J
\\&
(\pmb{M}_{J,J} - \rho  \hat{\pmb{Z}}_{J,J})\hat{\pmb{X}}_{J,J} = \hat\mu \cdot \hat{\pmb{X}}_{J,J}.
\end{align*}

Now, we want for the above solution $\hat{\pmb{X}} = \begin{pmatrix}
\hat{\pmb{X}}_{J,J} & 0 \\
0 & 0
\end{pmatrix}$
to satisfy the optimality conditions of the original problem (\ref{fps_problem}).
Furthermore, by assuming the strict dual feasibility, we want to guarantee $supp(diag(\hat{\pmb{X}})) \subseteq J$.
We can easily derive their sufficient conditions listed below:
\begin{align*}
&\hat{\pmb{X}}_{J,J} \succeq 0, tr(\hat{\pmb{X}}_{J,J}) = 1
\\&
\pmb{M}_{J,J} - \rho  \hat{\pmb{Z}}_{J,J} \preceq \hat\mu \pmb{I}
\\&
\pmb{M} - \rho  \hat{\pmb{Z}} \preceq \hat\mu \pmb{I}
\\&
\hat{Z}_{ij} \in \partial |\hat{X}_{ij}| ~~\text{ for each } (i,j) \in J\times J
\\&
\hat{Z}_{ij} \in (-1, 1) ~~\text{ for each } (i,j) \notin J\times J
\\&
(\pmb{M}_{J,J} - \rho  \hat{\pmb{Z}}_{J,J})\hat{\pmb{X}}_{J,J} = \hat\mu \cdot \hat{\pmb{X}}_{J,J}
\\&
(\pmb{M}_{J^c,J} - \rho  \hat{\pmb{Z}}_{J^c,J})\hat{\pmb{X}}_{J,J} = 0.
\end{align*}
If the above conditions hold, then 
$\hat{\pmb{X}} = \begin{pmatrix}
\hat{\pmb{X}}_{J,J} & 0 \\
0 & 0
\end{pmatrix}$ is optimal to the problem (\ref{fps_problem}) and satisfies $supp(diag(\hat{\pmb{X}})) \subseteq J$.

\subsection{Proof of Proposition \ref{prop_kkt}}
\label{subsec:pf_prop_kkt}

With the primal variable $\pmb{x}\in\mathbb{R}^d$ and the dual variables $\pmb{z} \in \mathbb{R}^d$ and $\lambda \in \mathbb{R}$,
the Lagrangian of the problem (\ref{noncvx_problem}) is written as
$$
L(\pmb{x}, \pmb{z}, \lambda)
= - \pmb{x}^\top \pmb{M} \pmb{x} + \rho \langle \pmb{x}, \pmb{z} \rangle^2 + \lambda (\pmb{x}^\top \pmb{x} - 1)
= \pmb{x}^\top ( -\pmb{M} + \rho \pmb{z}\pmb{z}^\top + \lambda \pmb{I} ) \pmb{x} - \lambda
$$
where $z_i \in \partial|x_i|$ for $i \in [d]$.
By denoting the primal solution by $\tilde{\pmb{x}} = (\tilde{\pmb{x}}_1^\top, \tilde{\pmb{x}}_2^\top)^\top \in \mathbb{R}^{s}\times \mathbb{R}^{d-s}$ and the dual solutions by $(\tilde{\pmb{z}}^\top, \tilde{\pmb{w}}^\top)^\top \in \mathbb{R}^{s}\times \mathbb{R}^{d-s}$ and $\tilde{\lambda}\in \mathbb{R}$,
the KKT conditions of (\ref{noncvx_problem}) are given as follows:
\begin{itemize}
\item Primal feasibility: 
$\tilde{\pmb{x}}_1^\top\tilde{\pmb{x}}_1 + \tilde{\pmb{x}}_2^\top\tilde{\pmb{x}}_2 = 1$
\item Dual feasibility:
$
\begin{cases}
\tilde{z}_i= \text{sign}(\tilde{x}_{1,i}) &\text{if~} \tilde{x}_{1,i} \neq 0 \\
\tilde{z}_i\in [-1, 1] &\text{if~} \tilde{x}_{1,i} = 0,
\end{cases}
$~~
$
\begin{cases}
\tilde{w}_i = \text{sign}(\tilde{x}_{2,i}) &\text{if~} \tilde{x}_{2,i} \neq 0 \\
\tilde{w}_i \in [-1, 1] &\text{if~} \tilde{x}_{2,i} = 0
\end{cases}
$
\item Stationarity: 
$\Big[ \pmb{M}-\rho (\tilde{\pmb{z}}^\top, \tilde{\pmb{w}}^\top)^\top (\tilde{\pmb{z}}^\top, \tilde{\pmb{w}}^\top) \Big]
\begin{pmatrix}
\tilde{\pmb{x}}_1 \\ \tilde{\pmb{x}}_2
\end{pmatrix}
= \tilde{\lambda} 
\begin{pmatrix}
\tilde{\pmb{x}}_1 \\ \tilde{\pmb{x}}_2
\end{pmatrix}$
\\
$\tilde{\lambda}\pmb{I}
\succeq \pmb{M}-\rho (\tilde{\pmb{z}}^\top, \tilde{\pmb{w}}^\top)^\top (\tilde{\pmb{z}}^\top, \tilde{\pmb{w}}^\top)$.
\end{itemize}
Here, it can be easily checked that the primal feasibility and the stationarity conditions are equivalent to the following:
$$
\begin{pmatrix}
\tilde{\pmb{x}}_1 \\ \tilde{\pmb{x}}_2
\end{pmatrix}
\text{~is the leading eigenvector of~} 
\pmb{M}-\rho (\tilde{\pmb{z}}^\top, \tilde{\pmb{w}}^\top)^\top (\tilde{\pmb{z}}^\top, \tilde{\pmb{w}}^\top).
$$

To proceed with primal-dual witness argument,
we now consider the KKT conditions for the problem (\ref{noncvx_problem}) with an additional constraint $supp(({\pmb{x}}^\top, {\pmb{y}}^\top)^\top )\subseteq J$, that is, $\pmb{y} = \pmb{0}$.
With the primal solution $\hat{\pmb{x}} \in \mathbb{R}^{s}$ and the dual solution $\hat{\pmb{z}} \in \mathbb{R}^{s}$, the KKT conditions are given by
\begin{align*}
& \hat{\pmb{x}} \text{~is the leading eigenvector of~} \pmb{M}_{J,J}-\rho \hat{\pmb{z}} \hat{\pmb{z}}^\top
\\ & 
\hat{z}_i = \text{sign}(\hat{x}_i) \text{~~~~if~} \hat{x}_{i} \neq 0,
~~\hat{z}_i \in [-1, 1] \text{~~~~if~} \hat{x}_{i} = 0
\end{align*}

Now we will show that if the following conditions hold,
the solution $({\hat{\pmb{x}}}^\top, {\pmb{0}}^\top)^\top$
satisfies the KKT conditions of the original problem (\ref{noncvx_problem}) and the strict dual feasibility:
\begin{align}
& \hat{\pmb{x}} \text{~is the leading eigenvector of~} \pmb{M}_{J,J}-\rho \hat{\pmb{z}} \hat{\pmb{z}}^\top
\label{eq_kkt1}
\\ &
\hat{z}_i = \text{sign}(\hat{x}_i) \text{~~~~if~} \hat{x}_{i} \neq 0,
~~\hat{z}_i \in [-1, 1] \text{~~~~if~} \hat{x}_{i} = 0
\nonumber
\\ & 
\hat{\pmb{w}} = \frac{1}{\rho \|\hat{\pmb{x}}\|_1} \pmb{M}_{J^c, J}\hat{\pmb{x}}
\label{eq_kkt2}
\\ & 
\|\hat{\pmb{w}}\|_\infty < 1.
\nonumber
\\ &
\lambda_1(\pmb{M}_{J,J}-\rho \hat{\pmb{z}} \hat{\pmb{z}}^\top) = \lambda_1(\pmb{M}-\rho (\hat{\pmb{z}}^\top, \hat{\pmb{w}}^\top)^\top (\hat{\pmb{z}}^\top, \hat{\pmb{w}}^\top)).
\nonumber
\end{align}
Let $\hat{\lambda} 
= \lambda_1\big(\pmb{M}_{J,J}-\rho \hat{\pmb{z}} \hat{\pmb{z}}^\top\big)
= \lambda_1\big(\pmb{M}-\rho (\hat{\pmb{z}}^\top, \hat{\pmb{w}}^\top)^\top (\hat{\pmb{z}}^\top, \hat{\pmb{w}}^\top)\big)$.
For $({\hat{\pmb{x}}}^\top, {\pmb{0}}^\top)^\top$ to be the leading eigenvector of $\pmb{M}-\rho (\hat{\pmb{z}}^\top, \hat{\pmb{w}}^\top)^\top (\hat{\pmb{z}}^\top, \hat{\pmb{w}}^\top)$,
\begin{equation}
\Big[ \pmb{M}-\rho (\hat{\pmb{z}}^\top, \hat{\pmb{w}}^\top)^\top (\hat{\pmb{z}}^\top, \hat{\pmb{w}}^\top) \Big]
\begin{pmatrix}
\hat{\pmb{x}} \\ \pmb{0}
\end{pmatrix}
= \hat{\lambda} 
\begin{pmatrix}
\hat{\pmb{x}} \\ \pmb{0}
\end{pmatrix}
\label{eq_kkt3}
\end{equation}
needs to be satisfied.
It can be easily checked that (\ref{eq_kkt3}) is equivalent to (\ref{eq_kkt1}) and (\ref{eq_kkt2}).

Now, let 
$\hat{\pmb{X}} = \begin{pmatrix}
\hat{\pmb{x}} \hat{\pmb{x}}^\top & 0 \\
0 & 0
\end{pmatrix}$,
$\hat{\pmb{Z}} = (\hat{\pmb{z}}^\top, \hat{\pmb{w}}^\top)^\top (\hat{\pmb{z}}^\top, \hat{\pmb{w}}^\top)$
and $\hat{\mu}= \lambda_1\big(\pmb{M}_{J,J}-\rho \hat{\pmb{z}} \hat{\pmb{z}}^\top\big)
= \lambda_1\big(\pmb{M}-\rho (\hat{\pmb{z}}^\top, \hat{\pmb{w}}^\top)^\top (\hat{\pmb{z}}^\top, \hat{\pmb{w}}^\top)\big)$.
Then it can be easily shown that $(\hat{\pmb{X}}, \hat{\pmb{Z}}, \hat{\mu})$
satisfies the sufficient conditions in Proposition \ref{prop_fps_kkt}.
That is, $\hat{\pmb{X}}$ constructed above
is an optimal solution to the problem (\ref{fps_problem})
and satisfies $supp(diag(\hat{\pmb{X}})) \subseteq J$.
To ensure that there is no false positive, we consider the additional condition that $\text{sign}(\hat{x}_i) = \text{sign}(u_{1,i})$ for all $i\in J$.

Lastly, for the uniqueness, we need an additional condition presented in the following lemma.

\begin{lemma}
For 
$\hat{\pmb{X}} = \begin{pmatrix}
\hat{\pmb{x}} \hat{\pmb{x}}^\top & 0 \\
0 & 0
\end{pmatrix}$
and
$\hat{\pmb{Z}} = (\hat{\pmb{z}}^\top, \hat{\pmb{w}}^\top)^\top (\hat{\pmb{z}}^\top, \hat{\pmb{w}}^\top)$
constructed above,
if the following condition holds:
$$
\lambda_1(\pmb{M}_{J,J}-\rho \hat{\pmb{z}} \hat{\pmb{z}}^\top )
> \lambda_2(\pmb{M}_{J,J}-\rho \hat{\pmb{z}} \hat{\pmb{z}}^\top)
$$
then the solution $\hat{\pmb{X}}$ is a unique optimal solution to the problem (\ref{fps_problem}).
\end{lemma}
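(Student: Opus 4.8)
The plan is to produce a self-contained ``dual certificate'' argument showing that the linear(-plus-norm) objective of \eqref{fps_problem} is maximized at a unique point, using the pair $(\hat{\pmb{X}},\hat{\pmb{Z}})$ already constructed in the proof of Proposition \ref{prop_kkt}. Write $\hat{\pmb{Z}}=(\hat{\pmb{z}}^\top,\hat{\pmb{w}}^\top)^\top(\hat{\pmb{z}}^\top,\hat{\pmb{w}}^\top)$, $\hat\mu=\lambda_1(\pmb{M}-\rho\hat{\pmb{Z}})$, and $\pmb{A}:=\pmb{M}_{J,J}-\rho\hat{\pmb{z}}\hat{\pmb{z}}^\top$. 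First I would collect the facts inherited from the construction: $\hat z_i=\mathrm{sign}(u_{1,i})\in\{-1,1\}$ for $i\in J$ and, by condition \eqref{cond2}, $\|\hat{\pmb{w}}\|_\infty<1$, so every entry of $\hat{\pmb{Z}}$ has absolute value at most $1$ and those with at least one index in $J^c$ are strictly less than $1$ in absolute value; $\hat{\pmb{x}}$ is the \emph{unit} leading eigenvector of $\pmb{A}$ with $\lambda_1(\pmb{A})=\hat\mu$; and, using $\mathrm{sign}(\hat x_i)=\mathrm{sign}(u_{1,i})$ (condition \eqref{cond1}), $\langle\hat{\pmb{z}},\hat{\pmb{x}}\rangle=\|\hat{\pmb{x}}\|_1$, whence the objective value at $\hat{\pmb{X}}$ equals $\hat{\pmb{x}}^\top\pmb{M}_{J,J}\hat{\pmb{x}}-\rho\|\hat{\pmb{x}}\|_1^2=\hat{\pmb{x}}^\top\pmb{A}\hat{\pmb{x}}=\hat\mu$.

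Next I would establish the global bound: for every feasible $\pmb{X}$ (i.e. $\pmb{X}\succeq 0$, $\mathrm{tr}(\pmb{X})=1$),
\[
\langle\pmb{M},\pmb{X}\rangle-\rho\|\pmb{X}\|_{1,1}
\;\le\;\langle\pmb{M}-\rho\hat{\pmb{Z}},\pmb{X}\rangle
\;\le\;\lambda_1(\pmb{M}-\rho\hat{\pmb{Z}})\,\mathrm{tr}(\pmb{X})\;=\;\hat\mu ,
\]
where the first inequality uses $\|\pmb{X}\|_{1,1}=\max_{\|\pmb{Z}\|_{\max}\le 1}\langle\pmb{Z},\pmb{X}\rangle\ge\langle\hat{\pmb{Z}},\pmb{X}\rangle$ (valid since $\|\hat{\pmb{Z}}\|_{\max}\le 1$ and $\rho>0$) and the second uses $\langle\pmb{B},\pmb{X}\rangle\le\lambda_1(\pmb{B})$ for density matrices. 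Since $\hat{\pmb{X}}$ attains $\hat\mu$, the optimal value of \eqref{fps_problem} is exactly $\hat\mu$, and any optimal $\pmb{X}^\star$ must turn both inequalities into equalities: (i) $\langle\hat{\pmb{Z}},\pmb{X}^\star\rangle=\|\pmb{X}^\star\|_{1,1}$, and (ii) $\langle\pmb{M}-\rho\hat{\pmb{Z}},\pmb{X}^\star\rangle=\hat\mu$, which (for a density matrix) is equivalent to $\mathrm{range}(\pmb{X}^\star)$ lying in the $\hat\mu$-eigenspace of $\pmb{M}-\rho\hat{\pmb{Z}}$.

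Then I would exploit the two equalities in turn. Equality (i) forces $\hat Z_{ij}=\mathrm{sign}(X^\star_{ij})$ at every $(i,j)$ with $X^\star_{ij}\neq 0$; but $|\hat Z_{ij}|<1$ whenever $(i,j)\notin J\times J$, so necessarily $X^\star_{ij}=0$ there, i.e. $\mathrm{supp}(\pmb{X}^\star)\subseteq J\times J$. Consequently each column of $\pmb{X}^\star$ has the form $(\pmb{u}^\top,\pmb{0}^\top)^\top$ with $\pmb{u}\in\mathbb{R}^s$, and equality (ii) then forces $\pmb{A}\pmb{u}=\hat\mu\pmb{u}$. Invoking the hypothesis $\lambda_1(\pmb{A})=\hat\mu>\lambda_2(\pmb{A})$, the $\hat\mu$-eigenspace of $\pmb{A}$ is exactly $\mathrm{span}(\hat{\pmb{x}})$, so $\mathrm{range}(\pmb{X}^\star_{J,J})\subseteq\mathrm{span}(\hat{\pmb{x}})$; being PSD, $\pmb{X}^\star_{J,J}=c\,\hat{\pmb{x}}\hat{\pmb{x}}^\top$, and $\mathrm{tr}(\pmb{X}^\star)=c\|\hat{\pmb{x}}\|_2^2=c=1$ yields $\pmb{X}^\star=\hat{\pmb{X}}$.

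The conceptually load-bearing observation is that strict dual slackness off the block $J\times J$ (which is precisely condition \eqref{cond2}, $\|\hat{\pmb{w}}\|_\infty<1$) already confines any optimum to be supported on $J\times J$, after which the spectral-gap hypothesis on the $s\times s$ block, rather than on the full $d\times d$ matrix, suffices to pin it down. I expect the main obstacle not to be conceptual but bookkeeping: carefully justifying that equality in the norm inequality really identifies the signs of the nonzero entries of $\pmb{X}^\star$, and tracking the abuse of notation $J=[s]$ through the block decomposition; verifying $\|\hat{\pmb{Z}}\|_{\max}\le 1$ and $\hat\mu=\lambda_1(\pmb{M}-\rho\hat{\pmb{Z}})$ is immediate from conditions \eqref{cond2} and \eqref{cond3} already used above.
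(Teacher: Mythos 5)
Your argument is correct, and it is a genuinely more self-contained route than the one in the paper. The paper first appeals to the standard primal--dual witness machinery to reduce uniqueness for \eqref{fps_problem} to uniqueness for the restricted problem \eqref{fps_restricted_problem} supported on $J\times J$, and then rules out a second restricted optimum $\tilde{\pmb{X}}_{J,J}$ by contradiction: it shows $\langle \pmb{M}_{J,J}-\rho\hat{\pmb{z}}\hat{\pmb{z}}^\top,\tilde{\pmb{X}}_{J,J}\rangle<\lambda_1(\pmb{M}_{J,J}-\rho\hat{\pmb{z}}\hat{\pmb{z}}^\top)$ via the spectral decomposition of $\tilde{\pmb{X}}_{J,J}$ and the gap hypothesis, and then contradicts the variational characterization $\langle\tilde{\pmb{Z}}_{J,J},\tilde{\pmb{X}}_{J,J}\rangle=\sup_{\|\pmb{Z}\|_{\max}\le 1}\langle\pmb{Z},\tilde{\pmb{X}}_{J,J}\rangle$ using the dual $\tilde{\pmb{Z}}_{J,J}$ of the competing optimum. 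You instead certify the optimal value $\hat\mu$ globally by the two-step bound $\langle\pmb{M},\pmb{X}\rangle-\rho\|\pmb{X}\|_{1,1}\le\langle\pmb{M}-\rho\hat{\pmb{Z}},\pmb{X}\rangle\le\hat\mu$ and then read off the equality cases: strict dual feasibility off $J\times J$ (condition \eqref{cond2}) forces any optimum to be supported on $J\times J$, and conditions \eqref{cond3} and \eqref{cond4} force its range into $\mathrm{span}(\hat{\pmb{x}})$, hence $\pmb{X}^\star=\hat{\pmb{X}}$. The two proofs use exactly the same ingredients ($\|\hat{\pmb{w}}\|_\infty<1$ plus the spectral gap of $\pmb{M}_{J,J}-\rho\hat{\pmb{z}}\hat{\pmb{z}}^\top$), but yours makes the support-containment step explicit rather than delegating it to the ``standard'' witness construction, avoids introducing a dual variable for the hypothetical second optimum, and is stated as a direct equality-case analysis rather than a contradiction; the paper's version, in exchange, fits more tightly into the PDW template it uses for the rest of the appendix. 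Two small points worth making explicit if this were written out: the equality-case analysis of $\langle\hat{\pmb{Z}},\pmb{X}^\star\rangle=\|\pmb{X}^\star\|_{1,1}$ needs $\rho>0$ (otherwise the first inequality is vacuously tight), and your identity $\langle\hat{\pmb{z}},\hat{\pmb{x}}\rangle=\|\hat{\pmb{x}}\|_1$ uses condition \eqref{cond1}, which is indeed part of the construction the lemma refers to.
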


\begin{proof}
According to the standard primal-dual witness construction, we only need to show that under the condition, $\hat{\pmb{X}}_{J,J} = \hat{\pmb{x}} \hat{\pmb{x}}^\top$ is a unique optimal solution to the restricted problem (\ref{fps_restricted_problem}).

Assume that there exists another optimal solution to the problem (\ref{fps_restricted_problem}), say $\tilde{\pmb{X}}_{J,J}$.
Also, denote its dual optimal solution by $\tilde{\pmb{Z}}_{J,J}$. 
Then, we can write
\begin{align*}
& \langle \pmb{M}_{J,J}, \hat{\pmb{X}}_{J,J} \rangle - \rho \|\hat{\pmb{X}}_{J,J}\|_{1,1}
=
\langle \pmb{M}_{J,J}-\rho \hat{\pmb{z}} \hat{\pmb{z}}^\top,  \hat{\pmb{x}} \hat{\pmb{x}}^\top \rangle
= \hat{\pmb{x}}^\top (\pmb{M}_{J,J}-\rho \hat{\pmb{z}} \hat{\pmb{z}}^\top) \hat{\pmb{x}}
\\ &
=\langle \pmb{M}_{J,J}, \tilde{\pmb{X}}_{J,J} \rangle - \rho \|\tilde{\pmb{X}}_{J,J}\|_{1,1}
= \langle \pmb{M}_{J,J}-\rho \tilde{\pmb{Z}}_{J,J}, \tilde{\pmb{X}}_{J,J} \rangle.
\end{align*}

Recall that $\hat{\pmb{x}}$ is the leading eigenvector of $\pmb{M}_{J,J}-\rho \hat{\pmb{z}} \hat{\pmb{z}}^\top$, that is, 
$\hat{\pmb{x}}^\top (\pmb{M}_{J,J}-\rho \hat{\pmb{z}} \hat{\pmb{z}}^\top) \hat{\pmb{x}} = \lambda_1(\pmb{M}_{J,J}-\rho \hat{\pmb{z}} \hat{\pmb{z}}^\top)$.
Now, we will show that 
$\langle \pmb{M}_{J,J}-\rho \hat{\pmb{z}} \hat{\pmb{z}}^\top, \tilde{\pmb{X}}_{J,J} \rangle < \lambda_1(\pmb{M}_{J,J}-\rho \hat{\pmb{z}} \hat{\pmb{z}}^\top)$
for any matrix $\tilde{\pmb{X}}_{J,J} \neq \hat{\pmb{x}} \hat{\pmb{x}}^\top$ such that $\tilde{\pmb{X}}_{J,J}\succeq 0$ and $tr(\tilde{\pmb{X}}_{J,J}) = 1$.
Let $\tilde{\pmb{X}}_{J,J} = \sum_{i\in J}\theta_i \pmb{v}_i \pmb{v}_i^\top$, which is the spectral decomposition of $\tilde{\pmb{X}}_{J,J}$.
We can derive that
\begin{align*}
\langle \pmb{M}_{J,J}-\rho \hat{\pmb{z}} \hat{\pmb{z}}^\top, \tilde{\pmb{X}}_{J,J} \rangle
&= 
\langle \pmb{M}_{J,J}-\rho \hat{\pmb{z}} \hat{\pmb{z}}^\top, \sum_{i\in J}\theta_i \pmb{v}_i \pmb{v}_i^\top \rangle
=
\sum_{i\in J} \theta_i \pmb{v}_i^\top (\pmb{M}_{J,J}-\rho \hat{\pmb{z}} \hat{\pmb{z}}^\top) \pmb{v}_i
\leq
\lambda_1(\pmb{M}_{J,J}-\rho \hat{\pmb{z}} \hat{\pmb{z}}^\top)
\end{align*}
where the last inequality holds since $\sum_{i\in J} \theta_i = tr(\tilde{\pmb{X}}_{J,J}) = 1$ and 
$\pmb{v}_i^\top (\pmb{M}_{J,J}-\rho \hat{\pmb{z}} \hat{\pmb{z}}^\top) \pmb{v}_i\leq \lambda_1(\pmb{M}_{J,J}-\rho \hat{\pmb{z}} \hat{\pmb{z}}^\top)$.
Here, the equality holds only if $\theta_1 = 1$, $\theta_i = 0$ for $i\neq 1$ and $\pmb{v}_1 = \hat{\pmb{x}}$, that is, 
$\tilde{\pmb{X}}_{J,J} = \hat{\pmb{x}} \hat{\pmb{x}}^\top$.
Therefore, 
$\langle \pmb{M}_{J,J}-\rho \hat{\pmb{z}} \hat{\pmb{z}}^\top, \tilde{\pmb{X}}_{J,J} \rangle < \lambda_1(\pmb{M}_{J,J}-\rho \hat{\pmb{z}} \hat{\pmb{z}}^\top)$
for any matrix $\tilde{\pmb{X}}_{J,J} \neq \hat{\pmb{x}} \hat{\pmb{x}}^\top$ such that $\tilde{\pmb{X}}_{J,J}\succeq 0$ and $tr(\tilde{\pmb{X}}_{J,J}) = 1$.

With this fact, we can derive that
\begin{align*}
\langle \pmb{M}_{J,J}, \hat{\pmb{X}}_{J,J} \rangle - \rho \|\hat{\pmb{X}}_{J,J}\|_{1,1}
&= 
\hat{\pmb{x}}^\top (\pmb{M}_{J,J}-\rho \hat{\pmb{z}} \hat{\pmb{z}}^\top) \hat{\pmb{x}}
= \lambda_1(\pmb{M}_{J,J}-\rho \hat{\pmb{z}} \hat{\pmb{z}}^\top)
\\&>
\langle \pmb{M}_{J,J}-\rho \hat{\pmb{z}} \hat{\pmb{z}}^\top, \tilde{\pmb{X}}_{J,J} \rangle
=
\langle \pmb{M}_{J,J}-\rho \tilde{\pmb{Z}}_{J,J}, \tilde{\pmb{X}}_{J,J} \rangle
+ \rho \langle \tilde{\pmb{Z}}_{J,J}-\hat{\pmb{z}} \hat{\pmb{z}}^\top, \tilde{\pmb{X}}_{J,J} \rangle
\\&=
\langle \pmb{M}_{J,J}, \tilde{\pmb{X}}_{J,J} \rangle - \rho \|\tilde{\pmb{X}}_{J,J}\|_{1,1} 
+ \rho \langle \tilde{\pmb{Z}}_{J,J}-\hat{\pmb{z}} \hat{\pmb{z}}^\top, \tilde{\pmb{X}}_{J,J} \rangle.
\end{align*}
Since $\langle \pmb{M}_{J,J}, \hat{\pmb{X}}_{J,J} \rangle - \rho \|\hat{\pmb{X}}_{J,J}\|_{1,1} = \langle \pmb{M}_{J,J}, \tilde{\pmb{X}}_{J,J} \rangle - \rho \|\tilde{\pmb{X}}_{J,J}\|_{1,1} $ by assumption, 
the above inequality implies 
$\langle \tilde{\pmb{Z}}_{J,J}-\hat{\pmb{z}} \hat{\pmb{z}}^\top, \tilde{\pmb{X}}_{J,J} \rangle <0$, that is, 
$\langle \tilde{\pmb{Z}}_{J,J}, \tilde{\pmb{X}}_{J,J} \rangle
< 
\langle \hat{\pmb{z}} \hat{\pmb{z}}^\top, \tilde{\pmb{X}}_{J,J} \rangle$.
This contradicts the fact that 
$\langle \tilde{\pmb{Z}}_{J,J}, \tilde{\pmb{X}}_{J,J} \rangle = \sup_{\|\pmb{Z}_{J,J}\|_{\max} \leq 1} \langle \pmb{Z}_{J,J}, \tilde{\pmb{X}}_{J,J} \rangle$,
and thus the desired result holds.

\end{proof}

\subsection{Lemma \ref{lemma_bernstein}}
\label{subsec:lemma_bernstein}

The following lemma is frequently used in the proofs.

\begin{lemma}
\label{lemma_bernstein}
For any $c>0$,
\begin{align*}
\| \mathbb{E}[\pmb{M}_{J,J}] - \pmb{M}_{J,J} \|_2 
&\leq (c+1)\cdot R_1 \log (2s) + \sqrt{2(c+1)}\cdot R_2 \sqrt{\log (2s)} =: K_1
\\
&~~~~~~~~~~~~~~~~~~~~~~~
~~~~~~~~~~~~~~~~~~~~~~~~\text{ with probability at least } 1- (2s)^{-c},
\\
\| \mathbb{E}[\pmb{M}_{J^c,J}] - \pmb{M}_{J^c,J} \|_2 
&\leq (c+1)\cdot R_3 \log d + \sqrt{2(c+1)}\cdot R_4 \sqrt{\log d} =: K_2
\\
&~~~~~~~~~~~~~~~~~~~~~~~
~~~~~~~~~~~~~~~~~~~~~~~~\text{ with probability at least } 1- d^{-c},
\\
\| \mathbb{E}[\pmb{M}_{J^c,J^c}] - \pmb{M}_{J^c,J^c} \|_2 
&\leq (c+1)\cdot R_5 \log(2(d-s)) + \sqrt{2(c+1)}\cdot R_6 \sqrt{\log(2(d-s))} =: K_3
\\
&~~~~~~~~~~~~~~~~~~~~~~~
~~~~~~~~~~~~~~~~~~~~~~~~
\text{ with probability at least } 1- (2(d-s))^{-c}
\end{align*}
where
\begin{align*}
R_1 
&:= \max \{ (1-p) \|\pmb{M}_{J,J}^*\|_{\max} + B,~~ p\|\pmb{M}_{J,J}^*\|_{\max} \},
\\
R_2 
&:= \sqrt{p(1-p)}\|\pmb{M}_{J,J}^*\|_{2,\infty} + \sqrt{ps\sigma^2},
\\
R_3 
&:= \max \{ (1-p) \|\pmb{M}_{J^c,J}^*\|_{\max} + B,~~ p\|\pmb{M}_{J^c,J}^*\|_{\max} \},
\\
R_4 
&:= \max \{ \sqrt{p(1-p)}\|\pmb{M}_{J^c,J}^*\|_{2,\infty} + \sqrt{p(d-s)\sigma^2},
\sqrt{p(1-p)}\|\pmb{M}_{J,J^c}^*\|_{2,\infty} + \sqrt{ps\sigma^2} \},
\\
R_5 
&:= \max \{ (1-p) \|\pmb{M}_{J^c,J^c}^*\|_{\max} + B,~~ p\|\pmb{M}_{J^c,J^c}^*\|_{\max} \},
\\
R_6 
&:= \sqrt{p(1-p)}\|\pmb{M}_{J^c,J^c}^*\|_{2,\infty} + \sqrt{p(d-s)\sigma^2}.
\end{align*}
\end{lemma}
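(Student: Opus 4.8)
The plan is to prove all three bounds by the same matrix Bernstein argument, differing only in which block of $\pmb{M}^*$ is involved, its dimensions, and a Hermitian dilation; I describe it for the first bound. First I would center: since $\delta_{i,j}\sim\mathrm{Ber}(p)$ and $\mathbb{E}[\epsilon_{i,j}]=0$, we have $\mathbb{E}[\pmb{M}_{J,J}]=p\,\pmb{M}^*_{J,J}$, so the object to control is $\pmb{D}:=\pmb{M}_{J,J}-p\,\pmb{M}^*_{J,J}$ with entries $D_{i,j}=\delta_{i,j}(M^*_{i,j}+\epsilon_{i,j})-p\,M^*_{i,j}$. Because the $\delta_{i,j}$ ($1\le i\le j\le d$) are i.i.d.\ and the $\epsilon_{i,j}$ are mutually independent and independent of the $\delta$'s, the entries $\{D_{i,j}: i\le j,\ i,j\in J\}$ are independent, and I would write $\pmb{D}=\sum_{i\le j,\ i,j\in J}\pmb{S}_{i,j}$, a sum of independent, mean-zero, symmetric matrices, where $\pmb{S}_{i,j}=D_{i,j}(\pmb{e}_i\pmb{e}_j^\top+\pmb{e}_j\pmb{e}_i^\top)$ for $i<j$ and $\pmb{S}_{i,i}=D_{i,i}\pmb{e}_i\pmb{e}_i^\top$. (For the rectangular block $\pmb{M}_{J^c,J}$ the decomposition is instead $\sum_{i\in J^c,\,j\in J}D_{i,j}\pmb{e}_i\pmb{e}_j^\top$, a sum of independent rank-one matrices; here and for $\pmb{M}_{J^c,J^c}$ I would pass to the Hermitian dilation $\bigl(\begin{smallmatrix}0&\pmb{A}\\ \pmb{A}^\top&0\end{smallmatrix}\bigr)$, which preserves the spectral norm and produces ambient dimensions $d$ and $2(d-s)$, while $\pmb{M}_{J,J}$ dilates to dimension $2s$ — this accounts for the three probability levels.)

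Next I would verify the two inputs to matrix Bernstein. For the almost-sure bound, on $\{\delta_{i,j}=1\}$ one has $|D_{i,j}|=|(1-p)M^*_{i,j}+\epsilon_{i,j}|\le(1-p)\|\pmb{M}^*_{J,J}\|_{\max}+B$, and on $\{\delta_{i,j}=0\}$, $|D_{i,j}|=p|M^*_{i,j}|\le p\|\pmb{M}^*_{J,J}\|_{\max}$; since $\|\pmb{S}_{i,j}\|_2=|D_{i,j}|$, the uniform bound is $L\le R_1$ (and $R_3,R_5$ for the other blocks). For the variance statistic, independence and $\mathsf{Var}[\epsilon_{i,j}]=\sigma^2$ give $\mathbb{E}[D_{i,j}^2]=\mathsf{Var}\big(\delta_{i,j}(M^*_{i,j}+\epsilon_{i,j})\big)=p(1-p)(M^*_{i,j})^2+p\sigma^2$. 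Each $\pmb{S}_{i,j}^2$ is diagonal, so $\sum\mathbb{E}[\pmb{S}_{i,j}^2]$ is diagonal with $k$-th entry $\sum_{l\in J}\mathbb{E}[D_{k,l}^2]\le p(1-p)\|\pmb{M}^*_{J,J}\|_{2,\infty}^2+ps\sigma^2\le\big(\sqrt{p(1-p)}\|\pmb{M}^*_{J,J}\|_{2,\infty}+\sqrt{ps\sigma^2}\big)^2=R_2^2$, using $(a+b)^2\ge a^2+b^2$; hence $v:=\|\sum\mathbb{E}[\pmb{S}_{i,j}^2]\|_2\le R_2^2$. For the dilated rectangular block the summed variance proxy is block-diagonal with blocks $\sum\mathbb{E}[\pmb{X}_{ij}\pmb{X}_{ij}^\top]$ (diagonal, indexed by $J^c$, equal to squared row-norms of $\pmb{M}^*_{J^c,J}$, i.e.\ $\le\big(\sqrt{p(1-p)}\|\pmb{M}^*_{J,J^c}\|_{2,\infty}+\sqrt{ps\sigma^2}\big)^2$) and $\sum\mathbb{E}[\pmb{X}_{ij}^\top\pmb{X}_{ij}]$ (diagonal, indexed by $J$, squared column-norms, $\le\big(\sqrt{p(1-p)}\|\pmb{M}^*_{J^c,J}\|_{2,\infty}+\sqrt{p(d-s)\sigma^2}\big)^2$), so $v$ is the maximum of the two — exactly the $\max$ in the definition of $R_4$.

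Finally I would invoke matrix Bernstein, $\mathbb{P}(\|\pmb{D}\|_2\ge t)\le 2s\,\exp\!\big(\frac{-t^2/2}{v+Lt/3}\big)$, set the right-hand side to $(2s)^{-c}$, i.e.\ $\frac{t^2/2}{v+Lt/3}=(c+1)\log(2s)$, and solve the resulting quadratic: the threshold is $t^\star=\tfrac{(c+1)L\log(2s)}{3}+\sqrt{\tfrac{(c+1)^2L^2\log^2(2s)}{9}+2(c+1)v\log(2s)}\le\tfrac{2(c+1)L\log(2s)}{3}+\sqrt{2(c+1)v\log(2s)}$. Substituting $L\le R_1$, $v\le R_2^2$, and bounding $\tfrac{2}{3}\le 1$, this is at most $(c+1)R_1\log(2s)+\sqrt{2(c+1)}R_2\sqrt{\log(2s)}=K_1$, which gives the first bound; the same computation with ambient dimension $d$ (resp.\ $2(d-s)$) and the bounds $R_3,R_4$ (resp.\ $R_5,R_6$) yields $K_2$ (resp.\ $K_3$). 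I expect the only delicate point to be the variance computation for the rectangular block: correctly matching squared-row-norms versus squared-column-norms of $\pmb{M}^*_{J^c,J}$ to $\|\pmb{M}^*_{J,J^c}\|_{2,\infty}$ versus $\|\pmb{M}^*_{J^c,J}\|_{2,\infty}$ (hence the $\max$ in $R_4$), and checking that the $(a+b)^2\ge a^2+b^2$ relaxation is strong enough to absorb the cross term into the stated $R_j^2$.
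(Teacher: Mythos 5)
Your proposal is correct and follows essentially the same route as the paper: decompose each centered block into independent rank-one/rank-two summands, bound the almost-sure norm by $R_1,R_3,R_5$ and the variance statistic by $R_2^2,R_4^2,R_6^2$ (including the correct row/column matching behind the $\max$ in $R_4$), and apply matrix Bernstein with ambient-dimension factors $2s$, $d$, $2(d-s)$. The only cosmetic difference is that you re-derive the deviation threshold via the Hermitian dilation and an explicit quadratic, whereas the paper directly invokes the rectangular matrix Bernstein inequality (Theorem 1.6 of Tropp), which already packages both steps.
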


\begin{proof}
We use the matrix Bernstein inequality presented in Theorem \ref{thm:matrix_bernstein_ineq}.
Here, we only show the upper bound of $\| \mathbb{E}[\pmb{M}_{J,J}] - \pmb{M}_{J,J} \|_2$, since the others can be derived similarly.
Note that 
\begin{align*}
\mathbb{E}[\pmb{M}_{J,J}] - \pmb{M}_{J,J} 
&= \sum_{i,j\in J} (\mathbb{E}[M_{i,j}] - M_{i,j} )\pmb{e}_i \pmb{e}_j^\top
= \sum_{i,j\in J} (p\cdot M^*_{i,j} - \delta_{i,j}(M^*_{i,j} + \epsilon_{i,j}) )\pmb{e}_i \pmb{e}_j^\top
\\&=
\sum_{i,j\in J} 
(\underbrace{(p-\delta_{i,j})\cdot M^*_{i,j} - \delta_{i,j}\epsilon_{i,j}}_{=: a_{i,j}})
\pmb{e}_i \pmb{e}_j^\top
\\&=
\sum_{i,j \in J, i < j} 
a_{i,j} (\pmb{e}_i \pmb{e}_j^\top + \pmb{e}_j \pmb{e}_i^\top)
+
\sum_{i\in J} a_{i,i}\pmb{e}_i \pmb{e}_i^\top,
\end{align*}
which can be viewed as a sum of independent zero-mean matrices.
It is straightforward to conclude that
\begin{multline*}
\max \{ 
\|a_{i,j} (\pmb{e}_i \pmb{e}_j^\top + \pmb{e}_j \pmb{e}_i^\top)\|_2,~
\|a_{i,i}\pmb{e}_i \pmb{e}_i^\top\|_2 ~:~ i,j \in J,~ i < j\}
\\
\leq
\max \{ (1-p) \|\pmb{M}_{J,J}^*\|_{\max} + B,~~ p\|\pmb{M}_{J,J}^*\|_{\max} \}
= R_1 ~~a.s.,
\end{multline*}
\begin{equation*}
\Bigg\|
\sum_{i,j \in J, i < j} 
\mathbb{E}[
a_{i,j} (\pmb{e}_i \pmb{e}_j^\top + \pmb{e}_j \pmb{e}_i^\top)
]^2
+
\sum_{i\in J} 
\mathbb{E}[
a_{i,i}\pmb{e}_i \pmb{e}_i^\top ]^2
\Bigg\|_2
\leq
p(1-p)\|\pmb{M}_{J,J}^*\|_{2,\infty}^2 + ps\sigma^2 \leq R_2^2.
\end{equation*}
By the matrix Bernstein inequality, one has that with probability at least $1- (2s)^{-c}$,
$$
\| \mathbb{E}[\pmb{M}_{J,J}] - \pmb{M}_{J,J} \|_2 
\leq (c+1)\cdot R_1 \log (2s) + \sqrt{2(c+1)}\cdot R_2 \sqrt{\log (2s)}.
$$
\end{proof}

\begin{theorem}[Matrix Bernstein inequality (e.g., Theorem 1.6 in \cite{tropp2012user})]
\label{thm:matrix_bernstein_ineq}
Consider a finite sequence $\{\pmb{Z}_k\}$ of independent, random matrices with dimensions $d_1\times d_2$.
Assume that each random matrix satisfies
$$
\mathbb{E}[\pmb{Z}_k] = \pmb{0} ~~\text{ and }~~ \|\pmb{Z}_k\|_2 \leq R_1 ~\text{almost surely.}
$$
Also, suppose that
$$
\max\bigg\{    
\Big\| \sum_k \mathbb{E}[\pmb{Z}_k \pmb{Z}_k^\top] \Big\|_2,
\Big\| \sum_k \mathbb{E}[\pmb{Z}_k^\top \pmb{Z}_k] \Big\|_2
\bigg\} \leq R_2^2.
$$
Then, for all $t\geq 0$,
$$
\mathbb{P}\bigg\{\Big\|\sum_k \pmb{Z}_k \Big\|_2 \geq t \bigg\}
\leq (d_1+d_2)\cdot \exp \bigg( \frac{-t^2/2}{R_1 t/3 + R_2^2} \bigg).
$$
The above inequality implies that 
\begin{align*}
\Big\|\sum_k \pmb{Z}_k \Big\|_2 
&\leq 
\frac{(c+1)R_1}{3}\log(d_1+d_2)
+ \sqrt{ \bigg\{ \frac{(c+1)R_1}{3}\log(d_1+d_2)\bigg\}^2 + 2(c+1)R_2^2\log(d_1 + d_2)}
\\&\leq
(c+1)R_1 \log(d_1+d_2) + \sqrt{2(c+1)} R_2 \sqrt{\log(d_1 + d_2)}
\end{align*}
with probability at least $1-(d_1 + d_2)^{-c}$ for any $c>0$.
\end{theorem}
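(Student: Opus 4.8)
The plan is to split the statement into its two halves, since the first is a citation and the second is a one-line inversion of it. The exponential tail bound $\mathbb{P}\{\|\sum_k \pmb{Z}_k\|_2 \geq t\} \leq (d_1+d_2)\exp\!\big(\!-\tfrac{t^2/2}{R_1 t/3 + R_2^2}\big)$ is exactly the matrix Bernstein inequality of \cite{tropp2012user} (Theorem 1.6), invoked with the almost-sure bound $\|\pmb{Z}_k\|_2 \leq R_1$ and the variance proxy $\max\{\|\sum_k \mathbb{E}[\pmb{Z}_k\pmb{Z}_k^\top]\|_2,\ \|\sum_k \mathbb{E}[\pmb{Z}_k^\top \pmb{Z}_k]\|_2\} \leq R_2^2$; I would simply cite it verbatim, as reproving it (through the matrix MGF / Lieb concavity machinery) is well beyond what is needed here.

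For the ``implies that'' part, the idea is to choose $t$ large enough that the right-hand side of the tail bound falls to $(d_1+d_2)^{-c}$. First I would set $L := (c+1)\log(d_1+d_2)$, which is strictly positive since $c>0$ and $d_1+d_2\geq 2$. The requirement $(d_1+d_2)\exp\!\big(\!-\tfrac{t^2/2}{R_1 t/3 + R_2^2}\big) \leq (d_1+d_2)^{-c}$ is, after taking logarithms, equivalent to $\tfrac{t^2/2}{R_1 t/3 + R_2^2} \geq L$, i.e.\ to the quadratic inequality $t^2 - \tfrac{2 L R_1}{3}\,t - 2 L R_2^2 \geq 0$. Its larger root is $t^\star = \tfrac{L R_1}{3} + \sqrt{(\tfrac{L R_1}{3})^2 + 2 L R_2^2}$, so for every $t \geq t^\star$ the tail is at most $(d_1+d_2)^{-c}$; hence $\|\sum_k \pmb{Z}_k\|_2 \leq t^\star$ holds with probability at least $1-(d_1+d_2)^{-c}$, and substituting $L$ back reproduces the first displayed bound in the statement.

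The cleaner final bound then follows from subadditivity of the square root: $\sqrt{(\tfrac{L R_1}{3})^2 + 2 L R_2^2} \leq \tfrac{L R_1}{3} + \sqrt{2 L}\,R_2$ (using $R_2 \geq 0$), so $t^\star \leq \tfrac{2 L R_1}{3} + \sqrt{2 L}\,R_2 \leq L R_1 + \sqrt{2 L}\,R_2$ because $\tfrac{2}{3}\leq 1$; unwinding $L = (c+1)\log(d_1+d_2)$ gives $(c+1)R_1\log(d_1+d_2) + \sqrt{2(c+1)}\,R_2\sqrt{\log(d_1+d_2)}$, as claimed. There is essentially no obstacle here: the argument is a citation plus one quadratic inversion, and the only points to keep straight are that $L>0$ legitimizes the logarithm step and that both replacing $\tfrac{2}{3}$ by $1$ and applying $\sqrt{a+b}\le\sqrt a+\sqrt b$ only weaken the inequality in the direction we want.
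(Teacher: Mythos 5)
Your proposal is correct and follows essentially the same route as the paper, which simply cites Tropp's tail bound and records its inversion; your quadratic solve for $t^\star$ with $L=(c+1)\log(d_1+d_2)$ and the subsequent relaxations $\sqrt{a^2+b^2}\le a+b$ and $\tfrac{2}{3}\le 1$ reproduce both displayed bounds exactly. No gaps.
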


\subsection{Proof of Lemma \ref{lemma_suff_cond_sign}}
\label{subsec:lemma_suff_cond_sign}

First, the following lemma can be easily shown.

\begin{lemma}
\label{lemma_sign}
For any unit vectors $\pmb{x} \in \mathbb{R}^d$ and $\pmb{y} \in \mathbb{R}^d$ such that $y_i \neq 0$ for $\forall i\in[d]$, if $\|\pmb{x}-\pmb{y}\|_2 \leq \min_{i\in[d]} |y_i|$, then $sign(x_i) = sign(y_i)$ for $\forall i\in[d]$.
\end{lemma}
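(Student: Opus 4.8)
The plan is to argue coordinate by coordinate, by contradiction, using nothing more than the fact that the $l_2$ distance dominates each coordinate difference. Fix an arbitrary index $i \in [d]$ and write $\delta := \|\pmb{x} - \pmb{y}\|_2$. The starting observation is the trivial bound $|x_i - y_i| \le \delta \le \min_{j \in [d]} |y_j| \le |y_i|$; note $\min_j |y_j| > 0$ because each $y_j \ne 0$ and there are finitely many of them. Now suppose, toward a contradiction, that $\mathrm{sign}(x_i) \ne \mathrm{sign}(y_i)$. Since $y_i \ne 0$, there are exactly two possibilities: either $x_i = 0$, or $x_i$ and $y_i$ are both nonzero with opposite signs. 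I would dispose of these two cases separately.

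In the opposite-nonzero-signs case, $|x_i - y_i| = |x_i| + |y_i| > |y_i|$, the inequality being strict because $x_i \ne 0$; this contradicts the displayed bound at once, and in fact the unit-norm hypothesis is not needed here. The remaining case $x_i = 0$ is the only delicate point: then $|x_i - y_i| = |y_i|$, so the chain $|y_i| = |x_i - y_i| \le \delta \le \min_j |y_j| \le |y_i|$ is forced to be a string of equalities. From $\delta^2 = (x_i - y_i)^2 + \sum_{j \ne i}(x_j - y_j)^2$ and $\delta = |x_i - y_i|$ we get $x_j = y_j$ for every $j \ne i$, and hence $\|\pmb{x}\|_2^2 = \sum_{j \ne i} y_j^2 = \|\pmb{y}\|_2^2 - y_i^2 = 1 - y_i^2 < 1$, contradicting $\|\pmb{x}\|_2 = 1$. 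Therefore $\mathrm{sign}(x_i) = \mathrm{sign}(y_i)$, and since $i$ was arbitrary this would complete the proof.

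I do not expect any genuine obstacle; the only subtlety worth flagging is the degenerate boundary case $x_i = 0$, where the \emph{non-strict} hypothesis $\|\pmb{x}-\pmb{y}\|_2 \le \min_j |y_j|$ permits a coordinate of $\pmb{x}$ to vanish exactly on the boundary. The statement is genuinely false without some normalization (e.g.\ $d = 1$, $\pmb{y} = 1$, $\pmb{x} = 0$ satisfies the distance bound but flips the sign), and the role of the unit-norm assumption is precisely to rule this out, as used above. Everything else reduces to one-line inequalities.
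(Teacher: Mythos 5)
Your proof is correct, and it rests on the same one-line inequality as the paper's ($|x_i-y_i|\le\|\pmb{x}-\pmb{y}\|_2$), but you deploy the unit-norm hypothesis differently, and in a way that is arguably more careful. The paper argues directly: for $\pmb{x}\neq\pmb{y}$ it asserts the \emph{strict} bound $|x_i-y_i|<\|\pmb{x}-\pmb{y}\|_2$ ``since both $\pmb{x}$ and $\pmb{y}$ are unit vectors,'' and then concludes $y_i-|y_i|<x_i<y_i+|y_i|$, which forces the sign match. That strictness claim is not literally true in isolation: if $\pmb{x}$ and $\pmb{y}$ agree in every coordinate except $i$ (e.g.\ $x_i=-y_i$), both can be unit vectors with $|x_i-y_i|=\|\pmb{x}-\pmb{y}\|_2$; that configuration is only excluded once one also invokes the hypothesis $\|\pmb{x}-\pmb{y}\|_2\le|y_i|$, a step the paper leaves implicit. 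Your contradiction argument instead isolates exactly where strictness can fail: opposite nonzero signs are killed by $|x_i-y_i|=|x_i|+|y_i|>|y_i|$ without any normalization, and the boundary case $x_i=0$ forces the whole chain of inequalities to be equalities, whence $x_j=y_j$ for $j\neq i$ and $\|\pmb{x}\|_2^2=1-y_i^2<1$, contradicting $\|\pmb{x}\|_2=1$. So the two proofs buy the same lemma, but yours makes the role of the unit-norm assumption explicit (and your $d=1$, $\pmb{y}=1$, $\pmb{x}=0$ example showing necessity is a nice touch), whereas the paper's direct route is shorter at the cost of a slightly imprecise strictness step.
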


\begin{proof}
If $\pmb{x}=\pmb{y}$, then it is trivial that $sign(x_i) = sign(y_i)$ for $\forall i\in[d]$.
If $\pmb{x}\neq \pmb{y}$, then
for any $i\in[d]$,
$$
|x_i - y_i| < \|\pmb{x} - \pmb{y}\|_2 \leq \min_{i\in[d]} |y_i| \leq |y_i|,
$$
where the first inequality is strict since both $\pmb{x}$ and $\pmb{y}$ are unit vectors.
The above inequality implies that
$$
y_i - |y_i| < x_i < y_i + |y_i|,
$$
that is,
$0 < x_i < 2y_i$ if $y_i > 0$, and $2y_i < x_i < 0$ if $y_i < 0$.
Therefore, $sign(x_i) = sign(y_i)$ holds for any $i\in[d]$.
\end{proof}

Now, let $\hat{z}_i = \text{sign}(u_{1,i}) \text{~for all~} i\in J$
and
$\hat{\pmb{x}}$ be the leading eigenvector of $\pmb{M}_{J,J}-\rho \hat{\pmb{z}} \hat{\pmb{z}}^\top$.
We will derive the upper bound of $\|\pmb{u}_1 - \hat{\pmb{x}}\|_2$.
By applying the Davis-Kahan sin$\Theta$ theorem, we obtain
$$
\|\pmb{u}_1 - \hat{\pmb{x}}\|_2 \leq \frac{2\sqrt{2}}{p(\lambda_1(\pmb{M}^*_{J,J}) - \lambda_2(\pmb{M}^*_{J,J}))} \| \mathbb{E}[\pmb{M}_{J,J}] - \pmb{M}_{J,J} + \rho \hat{\pmb{z}} \hat{\pmb{z}}^\top\|_2
$$
where $\mathbb{E}[\pmb{M}_{J,J}] = p\cdot\pmb{M}^*_{J,J}$.
By the triangle inequality, we can upper bound
\begin{align*}
\| \mathbb{E}[\pmb{M}_{J,J}] - \pmb{M}_{J,J} + \rho \hat{\pmb{z}} \hat{\pmb{z}}^\top\|_2
\leq
\| \mathbb{E}[\pmb{M}_{J,J}] - \pmb{M}_{J,J} \|_2
+
\| \rho \hat{\pmb{z}} \hat{\pmb{z}}^\top\|_2
= \| \mathbb{E}[\pmb{M}_{J,J}] - \pmb{M}_{J,J} \|_2 + \rho s.
\end{align*}
From Lemma \ref{lemma_bernstein}, one has that with probability at least $1- (2s)^{-c}$,
$$
\|\pmb{u}_1 - \hat{\pmb{x}}\|_2 
\leq
2\sqrt{2}\cdot\frac{K_1 + \rho s}{p(\lambda_1(\pmb{M}^*_{J,J}) - \lambda_2(\pmb{M}^*_{J,J}))}.
$$
By Lemma \ref{lemma_sign}, if 
\begin{equation*}
2\sqrt{2}\cdot\frac{K_1 + \rho s}{p(\lambda_1(\pmb{M}^*_{J,J}) - \lambda_2(\pmb{M}^*_{J,J}))}
\leq
\min_{i \in J}|u_{1,i}|,
\end{equation*}
then $sign(\hat{x}_i) = sign(u_{1,i})$ for all $i\in J$ with probability at least $1- (2s)^{-c}$.

\subsection{Proof of Lemma \ref{lemma_suff_cond_w}}
\label{subsec:lemma_suff_cond_w}

First, we can derive the upper bound of $\|\hat{\pmb{w}}\|_\infty$ as follows:
\begin{align*}
\|\hat{\pmb{w}}\|_\infty
&= 
\frac{1}{\rho \|\hat{\pmb{x}}\|_1} \| \pmb{M}_{J^c, J}\hat{\pmb{x}} \|_\infty
= \frac{1}{\rho \|\hat{\pmb{x}}\|_1} \cdot \max_{i\in J^c} \bigg| \sum_{j\in J} M_{i,j} \hat{x}_j \bigg|
\\ &\leq
\frac{1}{\rho \|\hat{\pmb{x}}\|_1} \cdot 
\Big( \max_{i\in J^c} \max_{j\in J} |M_{i,j}| \Big)\cdot \sum_{j\in J} | \hat{x}_j |
= \frac{1}{\rho } \cdot \|\pmb{M}_{J^c,J}\|_{\max}. 
\end{align*}

For each $M_{i,j}$, $i\in J^c$ and $j\in J$, we now apply Chebyshev's inequality as follows:
\begin{align}
\label{ineq_cheby}
\mathbb{P} \big( |M_{i,j}| \geq |\mathbb{E}[M_{i,j}]| + c \big)
\leq
\mathbb{P} \big( |M_{i,j}-\mathbb{E}[M_{i,j}]| \geq c \big)
\leq
\frac{\mathsf{Var}[M_{i,j}]}{c^2}
\end{align}
for any $c>0$. Note that for each $i\in J^c$ and $j\in J$,
\begin{align*}
\mathbb{E}[M_{i,j}] = p\cdot M^*_{i,j} ~\text{ and }~
\mathsf{Var}[M_{i,j}] = p(1-p)(M^*_{i,j})^2 + p \sigma^2.
\end{align*}
With the assumption that $p\cdot \|\pmb{M}^*_{J^c,J}\|_{\max} < \rho$, 
letting $\gamma := \frac{p}{\rho} \|\pmb{M}^*_{J^c,J}\|_{\max}$ and $c := \frac{1+\gamma}{2}\cdot \rho - p\cdot |M^*_{i,j}| $
yields that $\gamma < 1$ and $c > 0$.
By plugging $c$ into (\ref{ineq_cheby}), we have that
\begin{align*}
\mathbb{P} \bigg( |M_{i,j}| \geq \frac{1+\gamma}{2}\cdot \rho \bigg)
&\leq
\frac{p(1-p)(M^*_{i,j})^2 + p \sigma^2}{\big(\frac{1+\gamma}{2}\cdot \rho - p\cdot |M^*_{i,j}|\big)^2}
\leq
\frac{p(1-p)(M^*_{i,j})^2 + p \sigma^2}{\big(\frac{1+\gamma}{2}\cdot \rho - p\cdot \|\pmb{M}^*_{J^c,J}\|_{\max}\big)^2}
\\ &=
\frac{p(1-p)(M^*_{i,j})^2 + p \sigma^2}{\big(\frac{1+\gamma}{2}\cdot \rho - \gamma \rho \big)^2}
= \frac{p(1-p)(M^*_{i,j})^2 + p \sigma^2}{\big(\frac{1-\gamma}{2}\cdot \rho\big)^2}
\end{align*}
for each $i\in J^c$ and $j\in J$. Hence,
\begin{align*}
\mathbb{P} \bigg( \|\pmb{M}_{J^c,J}\|_{\max} \geq \frac{1+\gamma}{2}\cdot \rho \bigg)
&\leq
\sum_{i\in J^c} \sum_{j\in J} \frac{p(1-p)(M^*_{i,j})^2 + p \sigma^2}{\big(\frac{1-\gamma}{2}\cdot \rho\big)^2}
\\&=
\frac{4}{(1-\gamma)^2 \rho^2} \cdot\big\{ p(1-p)\|\pmb{M}^*_{J^c,J}\|_F^2 + p(d-s)s\sigma^2 \big\}
\\&=
\frac{4}{\big(\rho-p\cdot \|\pmb{M}^*_{J^c,J}\|_{\max}\big)^2} \cdot\big\{ p(1-p)\|\pmb{M}^*_{J^c,J}\|_F^2 + p(d-s)s\sigma^2 \big\}.
\end{align*}
Since $\gamma < 1$, we have that
\begin{align*}
\mathbb{P} \bigg( \|\pmb{M}_{J^c,J}\|_{\max} < \rho \bigg)
\geq
1 - \frac{4}{\big(\rho-p\cdot \|\pmb{M}^*_{J^c,J}\|_{\max}\big)^2} 
\cdot\big\{ p(1-p)\|\pmb{M}^*_{J^c,J}\|_F^2 + p(d-s)s\sigma^2 \big\}.
\end{align*}
If the following inequality holds:
\begin{align*}
\sqrt{4ps^c  \cdot \big\{(1-p) \|\pmb{M}^*_{J^c,J}\|_F^2 + (d-s)s\sigma^2 \big\} } < \rho-p\cdot \|\pmb{M}^*_{J^c,J}\|_{\max},
\end{align*}
then 
$\frac{4}{\big(\rho-p\cdot \|\pmb{M}^*_{J^c,J}\|_{\max}\big)^2} 
\cdot\big\{ p(1-p)\|\pmb{M}^*_{J^c,J}\|_F^2 + p(d-s)s\sigma^2 \big\} \leq s^{-c}$,
that is,
$\|\pmb{M}_{J^c,J}\|_{\max} < \rho$ holds with probability at least $1-s^{-c}$.
$\|\hat{\pmb{w}}\|_\infty \leq \frac{1}{\rho}\|\pmb{M}_{J^c,J}\|_{\max}$, and thus the desired result holds.

\subsection{Proof of Lemma \ref{lemma_suff_cond_eig}}
\label{subsec:lemma_suff_cond_eig}

Lemma \ref{lemma_eig_ineq} shows that 
if the following inequality holds:
$$
\| \pmb{M}_{J^c,J}-\rho \hat{\pmb{w}} \hat{\pmb{z}}^\top \|_2^2 
\leq 
\big\{ \lambda_1(\pmb{M}_{J,J}-\rho \hat{\pmb{z}} \hat{\pmb{z}}^\top) - \lambda_2(\pmb{M}_{J,J}-\rho \hat{\pmb{z}} \hat{\pmb{z}}^\top) \big\} 
\cdot \big\{\lambda_1(\pmb{M}_{J,J}-\rho \hat{\pmb{z}} \hat{\pmb{z}}^\top) 
- \lambda_1( \pmb{M}_{J^c,J^c}-\rho \hat{\pmb{w}} \hat{\pmb{w}}^\top ) \big\},
$$
then
$\lambda_1(\pmb{M}_{J,J}-\rho \hat{\pmb{z}} \hat{\pmb{z}}^\top) = \lambda_1(\pmb{M}-\rho (\hat{\pmb{z}}^\top, \hat{\pmb{w}}^\top)^\top (\hat{\pmb{z}}^\top, \hat{\pmb{w}}^\top))$.

Now, we derive the upper or lower bounds of 
$\| \pmb{M}_{J^c,J}-\rho \hat{\pmb{w}} \hat{\pmb{z}}^\top \|_2$,
$\lambda_1(\pmb{M}_{J,J}-\rho \hat{\pmb{z}} \hat{\pmb{z}}^\top) - \lambda_2(\pmb{M}_{J,J}-\rho \hat{\pmb{z}} \hat{\pmb{z}}^\top)$
and 
$\lambda_1(\pmb{M}_{J,J}-\rho \hat{\pmb{z}} \hat{\pmb{z}}^\top) 
- \lambda_1( \pmb{M}_{J^c,J^c}-\rho \hat{\pmb{w}} \hat{\pmb{w}}^\top )$.
First,
\begin{align*}
\| \pmb{M}_{J^c,J}-\rho \hat{\pmb{w}} \hat{\pmb{z}}^\top \|_2
&=
\bigg\| \pmb{M}_{J^c,J}-\rho\cdot \frac{1}{\rho \|\hat{\pmb{x}}\|_1} \pmb{M}_{J^c, J}\hat{\pmb{x}} \hat{\pmb{z}}^\top \bigg\|_2
=
\bigg\| \pmb{M}_{J^c,J} \cdot \bigg(I-\frac{\hat{\pmb{x}} \hat{\pmb{z}}^\top}{\|\hat{\pmb{x}}\|_1}  \bigg) \bigg\|_2
\\ &\leq
\| \pmb{M}_{J^c,J} \|_2 \cdot \bigg\| I-\frac{\hat{\pmb{x}} \hat{\pmb{z}}^\top}{\|\hat{\pmb{x}}\|_1}  \bigg\|_2
\leq
\| \pmb{M}_{J^c,J} \|_2 \cdot \bigg(1+\frac{\|\hat{\pmb{x}}\|_2 \|\hat{\pmb{z}}\|_2}{\|\hat{\pmb{x}}\|_1}  \bigg)
\\ &\leq
\| \pmb{M}_{J^c,J} \|_2 \cdot (1+\sqrt{s}).
\end{align*}
Since 
$
\| \pmb{M}_{J^c,J} \|_2
\leq
\| \mathbb{E}[\pmb{M}_{J^c,J}] - \pmb{M}_{J^c,J} \|_2
+ \| \mathbb{E}[\pmb{M}_{J^c,J}] \|_2
\leq
K_2 + p\cdot \| \pmb{M}^*_{J^c,J} \|_2
$
with probability at least $1-d^{-c}$ by Lemma \ref{lemma_bernstein},
we have that for any $c>0$,
\begin{align}
\label{ineq_bound1}
\| \pmb{M}_{J^c,J}-\rho \hat{\pmb{w}} \hat{\pmb{z}}^\top \|_2
\leq 
(K_2 + p\cdot \| \pmb{M}^*_{J^c,J} \|_2)\cdot (1+\sqrt{s})
\end{align}
with probability at least $1-d^{-c}$.

Next, by Weyl's inequality,
\begin{align*}
\lambda_1(\pmb{M}_{J,J}-\rho \hat{\pmb{z}} \hat{\pmb{z}}^\top) - \lambda_2(\pmb{M}_{J,J}-\rho \hat{\pmb{z}} \hat{\pmb{z}}^\top)
&\geq
\lambda_1(\mathbb{E}[\pmb{M}_{J,J}]) - \lambda_2(\mathbb{E}[\pmb{M}_{J,J}])
-
2\cdot\| \mathbb{E}[\pmb{M}_{J,J}] - \pmb{M}_{J,J} + \rho \hat{\pmb{z}} \hat{\pmb{z}}^\top \|_2
\\ &\geq
p\cdot (\lambda_1(\pmb{M}^*_{J,J}) - \lambda_2(\pmb{M}^*_{J,J}))
-
2\cdot\| \mathbb{E}[\pmb{M}_{J,J}] - \pmb{M}_{J,J}  \|_2 - 2\rho s.
\end{align*}
By Lemma \ref{lemma_bernstein}, we have that for any $c>0$,
\begin{align}
\label{ineq_bound2}
\lambda_1(\pmb{M}_{J,J}-\rho \hat{\pmb{z}} \hat{\pmb{z}}^\top) - \lambda_2(\pmb{M}_{J,J}-\rho \hat{\pmb{z}} \hat{\pmb{z}}^\top)
\geq
p\cdot (\lambda_1(\pmb{M}^*_{J,J}) - \lambda_2(\pmb{M}^*_{J,J}))
-
2\cdot K_1 - 2\rho s
\end{align}
with probability at least $1-(2s)^{-c}$.

Finally, by applying Weyl's inequality and the triangle inequality, we have that
\begin{align*}
\lambda_1(\pmb{M}_{J,J}-\rho \hat{\pmb{z}} \hat{\pmb{z}}^\top) 
- \lambda_1( \pmb{M}_{J^c,J^c}-\rho \hat{\pmb{w}} \hat{\pmb{w}}^\top )
&\geq 
p\cdot \lambda_1(\pmb{M}^*_{J,J})
- \| \mathbb{E}[\pmb{M}_{J,J}] - \pmb{M}_{J,J}  \|_2 - \rho s
\\ &~~
- \| \mathbb{E}[\pmb{M}_{J^c,J^c}] - \pmb{M}_{J^c,J^c}  \|_2
- p\cdot\lambda_1(\pmb{M}^*_{J^c,J^c})
- \rho \cdot \| \hat{\pmb{w}}\|_2^2.
\end{align*}
Note that under the conditions in Lemma \ref{lemma_suff_cond_w}, 
$\|\hat{\pmb{w}}\|_\infty < 1$, that is,
$\|\hat{\pmb{w}}\|_2 < \sqrt{d-s}$ holds with probability at least $1-s^{-c}$.
We can also use the upper bound of $\|\hat{\pmb{w}}\|_2$ derived in Lemma \ref{lemma_w_2}.
By applying Lemma \ref{lemma_bernstein}, we have that for any $c>0$,
\begin{align}
\label{ineq_bound3}
\lambda_1(\pmb{M}_{J,J}-\rho \hat{\pmb{z}} \hat{\pmb{z}}^\top) 
- \| \pmb{M}_{J^c,J^c}-\rho \hat{\pmb{w}} \hat{\pmb{w}}^\top \|_2
&\geq 
p\cdot (\lambda_1(\pmb{M}^*_{J,J}) - \lambda_1(\pmb{M}^*_{J^c,J^c}))
- K_1 - K_3 - \rho d
\end{align}
or
\begin{align}
\lambda_1(\pmb{M}_{J,J}-\rho \hat{\pmb{z}} \hat{\pmb{z}}^\top) 
- \| \pmb{M}_{J^c,J^c}-\rho \hat{\pmb{w}} \hat{\pmb{w}}^\top \|_2
&\geq 
p\cdot (\lambda_1(\pmb{M}^*_{J,J}) - \lambda_1(\pmb{M}^*_{J^c,J^c}))
- K_1 - \rho s 
\nonumber
\\&~~~~ - K_3 - \frac{1}{\rho} \cdot 
( p\|\pmb{M}^*_{J,J^c} \|_{\infty, 2} + c_0\sqrt{d-s} )^2
\label{eq:using_another_bound_w2}
\end{align}
with probability at least $1-s^{-c} - (2s)^{-c} - (2(d-s))^{-c}$,
where $c_0 = \sqrt{s^c \cdot \{ p(1-p)\|\pmb{M}^*_{J^c,J}\|_F^2 + p s(d-s)\sigma^2 \}}$.

From (\ref{ineq_bound1})-(\ref{ineq_bound3}), we can derive that if the following inequality holds:
\begin{align*}
(K_2 + p\cdot \| \pmb{M}^*_{J^c,J} \|_2)^2\cdot (1+\sqrt{s})^2
&\leq
\Big\{ 
p\cdot (\lambda_1(\pmb{M}^*_{J,J}) - \lambda_2(\pmb{M}^*_{J,J}))
-
2\cdot K_1 - 2\rho s \Big\}
\\&~~~~\times
\Big\{
p\cdot (\lambda_1(\pmb{M}^*_{J,J}) - \| \pmb{M}^*_{J^c,J^c} \|_2)
- K_1 - K_3 - \rho d\Big\},
\end{align*}
then for any $c>0$, the desired result holds with probability at least $1-s^{-c} - d^{-c} - (2s)^{-c} - (2(d-s))^{-c}$.

\begin{lemma}
\label{lemma_eig_ineq}
If the following inequality holds:
$$
\| \pmb{M}_{J^c,J}-\rho \hat{\pmb{w}} \hat{\pmb{z}}^\top \|_2^2 
\leq 
\big\{ \lambda_1(\pmb{M}_{J,J}-\rho \hat{\pmb{z}} \hat{\pmb{z}}^\top) - \lambda_2(\pmb{M}_{J,J}-\rho \hat{\pmb{z}} \hat{\pmb{z}}^\top) \big\} 
\cdot \big\{\lambda_1(\pmb{M}_{J,J}-\rho \hat{\pmb{z}} \hat{\pmb{z}}^\top) 
- \| \pmb{M}_{J^c,J^c}-\rho \hat{\pmb{w}} \hat{\pmb{w}}^\top \|_2 \big\},
$$
then
$\lambda_1(\pmb{M}_{J,J}-\rho \hat{\pmb{z}} \hat{\pmb{z}}^\top) = \lambda_1(\pmb{M}-\rho (\hat{\pmb{z}}^\top, \hat{\pmb{w}}^\top)^\top (\hat{\pmb{z}}^\top, \hat{\pmb{w}}^\top))$.
\end{lemma}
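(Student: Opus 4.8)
The plan is to set $\pmb N:=\pmb M-\rho(\hat{\pmb z}^\top,\hat{\pmb w}^\top)^\top(\hat{\pmb z}^\top,\hat{\pmb w}^\top)$ and write it in block form
\[
\pmb N=\begin{pmatrix}\pmb A&\pmb B^\top\\ \pmb B&\pmb D\end{pmatrix},\qquad
\pmb A:=\pmb M_{J,J}-\rho\hat{\pmb z}\hat{\pmb z}^\top,\quad
\pmb B:=\pmb M_{J^c,J}-\rho\hat{\pmb w}\hat{\pmb z}^\top,\quad
\pmb D:=\pmb M_{J^c,J^c}-\rho\hat{\pmb w}\hat{\pmb w}^\top,
\]
and then to prove $\lambda_1(\pmb N)\ge\lambda_1(\pmb A)$ and $\lambda_1(\pmb N)\le\lambda_1(\pmb A)$ separately. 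The first bound is immediate: $\pmb A$ is the $(J,J)$ principal submatrix of the symmetric matrix $\pmb N$, so by Cauchy interlacing $\lambda_1(\pmb N)\ge\lambda_1(\pmb A)$; equivalently, the zero-padded vector $\hat{\pmb v}:=(\hat{\pmb x}^\top,\pmb 0^\top)^\top$ has Rayleigh quotient $\hat{\pmb x}^\top\pmb A\hat{\pmb x}=\lambda_1(\pmb A)$. All the content is therefore in the reverse inequality.

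The key first step for the reverse bound is that $\hat{\pmb v}$ is not just a test vector but an \emph{eigenvector} of $\pmb N$ with eigenvalue $\lambda_1(\pmb A)$: the top block of $\pmb N\hat{\pmb v}$ is $\pmb A\hat{\pmb x}=\lambda_1(\pmb A)\hat{\pmb x}$ by definition of $\hat{\pmb x}$, and the bottom block is $\pmb B\hat{\pmb x}=\pmb M_{J^c,J}\hat{\pmb x}-\rho\hat{\pmb w}\,(\hat{\pmb z}^\top\hat{\pmb x})=\pmb 0$, because $\hat{\pmb z}^\top\hat{\pmb x}=\|\hat{\pmb x}\|_1$ (the entries of $\hat{\pmb z}$ agree in sign with those of $\hat{\pmb x}$, i.e. condition \eqref{cond1}, which is in force in the surrounding argument) and $\hat{\pmb w}=\tfrac{1}{\rho\|\hat{\pmb x}\|_1}\pmb M_{J^c,J}\hat{\pmb x}$ — this is exactly relation \eqref{eq_kkt3}. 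Hence $\mathbb R^{d}$ splits $\pmb N$-invariantly as $\mathrm{span}\{\hat{\pmb v}\}\oplus\hat{\pmb v}^{\perp}$, so
\[
\lambda_1(\pmb N)=\max\Bigl\{\lambda_1(\pmb A),\ \sup_{\pmb v\in\hat{\pmb v}^{\perp}\setminus\{\pmb 0\}}\tfrac{\pmb v^\top\pmb N\pmb v}{\|\pmb v\|^2}\Bigr\},
\]
and it remains to show $\pmb v^\top\pmb N\pmb v\le\lambda_1(\pmb A)\|\pmb v\|^2$ for every $\pmb v=(\pmb v_1^\top,\pmb v_2^\top)^\top$ with $\pmb v_1\perp\hat{\pmb x}$.

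To do this, expand $\pmb v^\top\pmb N\pmb v=\pmb v_1^\top\pmb A\pmb v_1+2(\pmb B\pmb v_1)^\top\pmb v_2+\pmb v_2^\top\pmb D\pmb v_2$. Since $\hat{\pmb x}$ is a leading eigenvector of the symmetric matrix $\pmb A$, the restriction of $\pmb A$ to $\hat{\pmb x}^{\perp}$ has largest eigenvalue $\lambda_2(\pmb A)$, so $\pmb v_1^\top\pmb A\pmb v_1\le\lambda_2(\pmb A)\|\pmb v_1\|^2$; by Cauchy--Schwarz $2(\pmb B\pmb v_1)^\top\pmb v_2\le 2\|\pmb B\|_2\|\pmb v_1\|\|\pmb v_2\|$; and $\pmb v_2^\top\pmb D\pmb v_2\le\|\pmb D\|_2\|\pmb v_2\|^2$. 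With $a=\|\pmb v_1\|$, $b=\|\pmb v_2\|$, the required inequality reduces to the scalar statement
\[
(\lambda_1(\pmb A)-\lambda_2(\pmb A))\,a^2-2\|\pmb B\|_2\,a b+(\lambda_1(\pmb A)-\|\pmb D\|_2)\,b^2\ \ge\ 0 .
\]
This quadratic form has nonnegative diagonal entries (the first always; the second since otherwise the right-hand side of the hypothesis would be negative, whereas $\lambda_1(\pmb A)>\lambda_2(\pmb A)$ as ensured in Lemma~\ref{lemma_suff_cond_eig}), and its discriminant being nonpositive is \emph{exactly} the assumed inequality $\|\pmb B\|_2^2\le(\lambda_1(\pmb A)-\lambda_2(\pmb A))(\lambda_1(\pmb A)-\|\pmb D\|_2)$; hence it is positive semidefinite and the inequality holds. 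Combining the two directions gives $\lambda_1(\pmb N)=\lambda_1(\pmb A)$, which is the claim.

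The argument is largely mechanical, and I do not anticipate a serious obstacle; the one genuinely substantive point is the observation that $\hat{\pmb v}$ is an eigenvector of $\pmb N$ (equivalently $\pmb B\hat{\pmb x}=\pmb 0$), since it is this that lets us peel off $\hat{\pmb v}$ and pass to $\hat{\pmb v}^{\perp}$, and it is where the specific construction of $\hat{\pmb w}$ together with the sign consistency of $\hat{\pmb z}$ enters (so the statement is genuinely not a purely abstract fact about block matrices). An essentially equivalent route, should one prefer it, is a Schur-complement computation: for $t>\lambda_1(\pmb A)$ one has $t\pmb I-\pmb N\succeq0$ iff $t\pmb I-\pmb D-\pmb B(t\pmb I-\pmb A)^{-1}\pmb B^\top\succeq0$, and since $\pmb B\hat{\pmb x}=\pmb 0$ forces $\mathrm{range}(\pmb B^\top)\subseteq\hat{\pmb x}^{\perp}$ one may bound $\pmb B(t\pmb I-\pmb A)^{-1}\pmb B^\top\preceq\frac{\|\pmb B\|_2^2}{t-\lambda_2(\pmb A)}\pmb I$, which stays bounded as $t\downarrow\lambda_1(\pmb A)$ and again reduces to the same scalar inequality, yielding $\lambda_1(\pmb N)\le t$ for all $t>\lambda_1(\pmb A)$.
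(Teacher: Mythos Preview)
Your proof is correct and follows essentially the same route as the paper's: both show that $(\hat{\pmb x}^\top,\pmb 0^\top)^\top$ is an eigenvector of $\pmb N$ via $\pmb B\hat{\pmb x}=\pmb 0$ (using $\hat{\pmb z}^\top\hat{\pmb x}=\|\hat{\pmb x}\|_1$ and the definition of $\hat{\pmb w}$), then bound the Rayleigh quotient on the orthogonal complement by expanding into the three block terms and reducing to a scalar inequality in $\|\pmb v_1\|,\|\pmb v_2\|$. The only cosmetic difference is that the paper parameterizes by $t=\|\pmb y_2\|_2^2$ and invokes a separate quadratic lemma, whereas you check nonnegativity of the $2\times2$ form directly via its discriminant; these are the same computation.
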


\begin{proof}
Let $\hat{\pmb{Z}} = \begin{pmatrix}
\hat{\pmb{z}} \hat{\pmb{z}}^\top & \hat{\pmb{z}} \hat{\pmb{w}}^\top \\
\hat{\pmb{w}} \hat{\pmb{z}}^\top & \hat{\pmb{w}} \hat{\pmb{w}}^\top
\end{pmatrix}$. 
First, we can show that 
$\lambda_1(\pmb{M}_{J,J}-\rho \hat{\pmb{z}} \hat{\pmb{z}}^\top)$ 
is an eigenvalue of $\pmb{M}-\rho \hat{\pmb{Z}}$
where its corresponding eigenvector is 
$(\hat{\pmb{x}}^\top, 0^\top)^\top \in \mathbb{R}^d$.
This is because
$$
(\pmb{M}-\rho \hat{\pmb{Z}})\begin{pmatrix} \hat{\pmb{x}} \\ 0 \end{pmatrix}
=
\begin{pmatrix}
(\pmb{M}_{J,J}-\rho \hat{\pmb{z}} \hat{\pmb{z}}^\top) \hat{\pmb{x}} \\
(\pmb{M}_{J^c,J}-\rho \hat{\pmb{w}} \hat{\pmb{z}}^\top) \hat{\pmb{x}}
\end{pmatrix}
= \lambda_1(\pmb{M}_{J,J}-\rho \hat{\pmb{z}} \hat{\pmb{z}}^\top) \cdot
\begin{pmatrix} \hat{\pmb{x}} \\ 0 \end{pmatrix}
$$
where the last equality holds since $\hat{\pmb{x}}$ is the leading eigenvector of $\pmb{M}_{J,J}-\rho \hat{\pmb{z}} \hat{\pmb{z}}^\top$ and
\begin{align*}
(\pmb{M}_{J^c,J}-\rho \hat{\pmb{w}} \hat{\pmb{z}}^\top) \hat{\pmb{x}}
= 
\pmb{M}_{J^c,J} \hat{\pmb{x}}
- \rho \cdot \frac{1}{\rho \|\hat{\pmb{x}}\|_1} \pmb{M}_{J^c, J}\hat{\pmb{x}} 
\cdot \|\hat{\pmb{x}}\|_1
= 0.
\end{align*}

Now, it is sufficient to show that for all $\pmb{y} = (\pmb{y}_1^\top,  ~\pmb{y}_2^\top )^\top$ such that $\pmb{y}_1 \in \mathbb{R}^s$, $\pmb{y}_2 \in \mathbb{R}^{d-s}$, $\|\pmb{y}_1\|_2^2 + \|\pmb{y}_2\|_2^2 = 1$ and $\hat{\pmb{x}}^\top \pmb{y_1} = 0$,
$$
\pmb{y}^\top (\pmb{M}-\rho \hat{\pmb{Z}}) \pmb{y} 
\leq 
\lambda_1(\pmb{M}_{J,J}-\rho \hat{\pmb{z}} \hat{\pmb{z}}^\top),
$$
which implies that $\lambda_1(\pmb{M}_{J,J}-\rho \hat{\pmb{z}} \hat{\pmb{z}}^\top)$ is the largest eigenvalue of $\pmb{M}-\rho \hat{\pmb{Z}}$.
Note that 
\begin{align*}
&\pmb{y}^\top (\pmb{M}-\rho \hat{\pmb{Z}}) \pmb{y}
= \pmb{y}_1^\top (\pmb{M}_{J,J}-\rho \hat{\pmb{z}} \hat{\pmb{z}}^\top) \pmb{y}_1
+ 
2\pmb{y}_2^\top (\pmb{M}_{J^c,J}-\rho \hat{\pmb{w}} \hat{\pmb{z}}^\top) \pmb{y}_1
+ 
\pmb{y}_2^\top (\pmb{M}_{J^c,J^c}-\rho \hat{\pmb{w}} \hat{\pmb{w}}^\top) \pmb{y}_2
\\ &\leq
\lambda_2(\pmb{M}_{J,J}-\rho \hat{\pmb{z}} \hat{\pmb{z}}^\top)\cdot \|\pmb{y}_1\|_2^2
+ 2 \| \pmb{M}_{J^c,J}-\rho \hat{\pmb{w}} \hat{\pmb{z}}^\top \|_2\cdot \|\pmb{y}_1\|_2\cdot \|\pmb{y}_2\|_2
+ \lambda_1( \pmb{M}_{J^c,J^c}-\rho \hat{\pmb{w}} \hat{\pmb{w}}^\top )\cdot \|\pmb{y}_2\|_2^2
\\ &=
\lambda_2(\pmb{M}_{J,J}-\rho \hat{\pmb{z}} \hat{\pmb{z}}^\top)\cdot 
(1-\|\pmb{y}_2\|_2^2)
+ 2 \| \pmb{M}_{J^c,J}-\rho \hat{\pmb{w}} \hat{\pmb{z}}^\top \|_2 \cdot 
\sqrt{1-\|\pmb{y}_2\|_2^2} \cdot \|\pmb{y}_2\|_2
+ \lambda_1( \pmb{M}_{J^c,J^c}-\rho \hat{\pmb{w}} \hat{\pmb{w}}^\top )\cdot \|\pmb{y}_2\|_2^2
\\ &=
\lambda_2(\pmb{M}_{J,J}-\rho \hat{\pmb{z}} \hat{\pmb{z}}^\top)
+ (\lambda_1( \pmb{M}_{J^c,J^c}-\rho \hat{\pmb{w}} \hat{\pmb{w}}^\top ) - \lambda_2(\pmb{M}_{J,J}-\rho \hat{\pmb{z}} \hat{\pmb{z}}^\top)) \cdot \|\pmb{y}_2\|_2^2
\\ &~~~+ 2 \| \pmb{M}_{J^c,J}-\rho \hat{\pmb{w}} \hat{\pmb{z}}^\top \|_2 \cdot 
\sqrt{\|\pmb{y}_2\|_2^2\cdot(1-\|\pmb{y}_2\|_2^2)}
\\ &=
\lambda_2(\pmb{M}_{J,J}-\rho \hat{\pmb{z}} \hat{\pmb{z}}^\top)
+ (\lambda_1( \pmb{M}_{J^c,J^c}-\rho \hat{\pmb{w}} \hat{\pmb{w}}^\top ) - \lambda_2(\pmb{M}_{J,J}-\rho \hat{\pmb{z}} \hat{\pmb{z}}^\top)) \cdot t
+ 2 \| \pmb{M}_{J^c,J}-\rho \hat{\pmb{w}} \hat{\pmb{z}}^\top \|_2 \cdot 
\sqrt{t \cdot(1-t)}
\end{align*}
where $0\leq t := \|\pmb{y}_2\|_2^2 \leq 1$.
The first inequality holds since $\pmb{y}_1/\|\pmb{y}_1\|_2$ is orthogonal to $\hat{\pmb{x}}$, the leading eigenvector of $\pmb{M}_{J,J}-\rho \hat{\pmb{z}} \hat{\pmb{z}}^\top$.
The above upper bound of $\pmb{y}^\top (\pmb{M}-\rho \hat{\pmb{Z}}) \pmb{y}$ implies that if the following inequality holds for any $t\in [0,1]$:
\begin{multline*}
\lambda_2(\pmb{M}_{J,J}-\rho \hat{\pmb{z}} \hat{\pmb{z}}^\top)
+ (\lambda_1( \pmb{M}_{J^c,J^c}-\rho \hat{\pmb{w}} \hat{\pmb{w}}^\top ) - \lambda_2(\pmb{M}_{J,J}-\rho \hat{\pmb{z}} \hat{\pmb{z}}^\top)) \cdot t
+ 2 \| \pmb{M}_{J^c,J}-\rho \hat{\pmb{w}} \hat{\pmb{z}}^\top \|_2 \cdot 
\sqrt{t \cdot(1-t)}
\\ \leq
\lambda_1(\pmb{M}_{J,J}-\rho \hat{\pmb{z}} \hat{\pmb{z}}^\top),
\end{multline*}
then $\lambda_1(\pmb{M}_{J,J}-\rho \hat{\pmb{z}} \hat{\pmb{z}}^\top)$ is the largest eigenvalue of $\pmb{M}-\rho \hat{\pmb{Z}}$.
From Lemma \ref{lemma_quad}, we have that if the following inequality holds:
$$
\| \pmb{M}_{J^c,J}-\rho \hat{\pmb{w}} \hat{\pmb{z}}^\top \|_2^2 
\leq 
\big\{ \lambda_1(\pmb{M}_{J,J}-\rho \hat{\pmb{z}} \hat{\pmb{z}}^\top) - \lambda_2(\pmb{M}_{J,J}-\rho \hat{\pmb{z}} \hat{\pmb{z}}^\top) \big\} 
\cdot \big\{\lambda_1(\pmb{M}_{J,J}-\rho \hat{\pmb{z}} \hat{\pmb{z}}^\top) 
- \lambda_1( \pmb{M}_{J^c,J^c}-\rho \hat{\pmb{w}} \hat{\pmb{w}}^\top ) \big\},
$$
then
$\lambda_1(\pmb{M}_{J,J}-\rho \hat{\pmb{z}} \hat{\pmb{z}}^\top) = \lambda_1(\pmb{M}-\rho \hat{\pmb{Z}})$.
\end{proof}

\begin{lemma}
\label{lemma_w_2}
Let $c_0 = \sqrt{s^c \cdot \{ p(1-p)\|\pmb{M}^*_{J^c,J}\|_F^2 + p s(d-s)\sigma^2 \}}$ for any $c>0$. Then,
$$
\|\hat{\pmb{w}}\|_2 
\leq
\frac{p}{\rho}\cdot \|\pmb{M}^*_{J,J^c} \|_{\infty, 2} + \frac{c_0\sqrt{d-s}}{\rho}
$$
with probability at least $1 - s^{-c}$.
\end{lemma}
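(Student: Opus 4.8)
The plan is to bound $\|\hat{\pmb{w}}\|_2$ directly from the definition $\hat{\pmb{w}} = \frac{1}{\rho\|\hat{\pmb{x}}\|_1}\pmb{M}_{J^c,J}\hat{\pmb{x}}$, splitting $\pmb{M}_{J^c,J}$ into its mean $p\pmb{M}^*_{J^c,J}$ and the fluctuation $\pmb{E} := \pmb{M}_{J^c,J} - p\pmb{M}^*_{J^c,J}$ and controlling the two pieces separately. By the triangle inequality, $\|\pmb{M}_{J^c,J}\hat{\pmb{x}}\|_2 \le p\|\pmb{M}^*_{J^c,J}\hat{\pmb{x}}\|_2 + \|\pmb{E}\hat{\pmb{x}}\|_2$, and the point of the argument is that both terms carry a factor $\|\hat{\pmb{x}}\|_1$ that cancels the $\|\hat{\pmb{x}}\|_1$ in the denominator exactly, producing the stated bound with no leftover dependence on $\hat{\pmb{x}}$.

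For the deterministic (mean) term I would bound each coordinate of $\pmb{M}^*_{J^c,J}\hat{\pmb{x}}$ by H\"older's inequality: $|(\pmb{M}^*_{J^c,J}\hat{\pmb{x}})_i| \le \|\pmb{M}^*_{i,J}\|_\infty \|\hat{\pmb{x}}\|_1$ for $i\in J^c$. Squaring and summing over $i\in J^c$ gives $\|\pmb{M}^*_{J^c,J}\hat{\pmb{x}}\|_2 \le \|\hat{\pmb{x}}\|_1 \big(\sum_{i\in J^c}\|\pmb{M}^*_{i,J}\|_\infty^2\big)^{1/2}$, and since $\pmb{M}^*$ is symmetric the $i$-th column of $\pmb{M}^*_{J,J^c}$ equals $(\pmb{M}^*_{i,J})^\top$, so the last factor is precisely $\|\pmb{M}^*_{J,J^c}\|_{\infty,2}$, the $\ell_2$-aggregate of the column-wise $\ell_\infty$ norms. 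This handles the first term.

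For the stochastic (fluctuation) term I would apply Chebyshev's inequality entrywise, exactly as in the proof of Lemma~\ref{lemma_suff_cond_w}: since $\mathbb{E}[M_{i,j}]=pM^*_{i,j}$ and $\mathsf{Var}[M_{i,j}] = p(1-p)(M^*_{i,j})^2 + p\sigma^2$, we get $\mathbb{P}(|M_{i,j}-pM^*_{i,j}|\ge c_0) \le \big(p(1-p)(M^*_{i,j})^2 + p\sigma^2\big)/c_0^2$. A union bound over the $(d-s)s$ pairs $(i,j)\in J^c\times J$, together with the choice $c_0 = \sqrt{s^c\{p(1-p)\|\pmb{M}^*_{J^c,J}\|_F^2 + ps(d-s)\sigma^2\}}$, yields $\max_{i,j}|E_{i,j}| < c_0$ with probability at least $1-s^{-c}$. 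On this event $|(\pmb{E}\hat{\pmb{x}})_i| \le c_0\|\hat{\pmb{x}}\|_1$ for every $i\in J^c$, hence $\|\pmb{E}\hat{\pmb{x}}\|_2 \le c_0\|\hat{\pmb{x}}\|_1\sqrt{d-s}$; note this deterministic bound is uniform over unit vectors, so there is no measurability subtlety from $\hat{\pmb{x}}$ being built out of $\pmb{M}_{J,J}$. Combining, $\|\hat{\pmb{w}}\|_2 \le \frac{1}{\rho\|\hat{\pmb{x}}\|_1}\big(p\|\hat{\pmb{x}}\|_1\|\pmb{M}^*_{J,J^c}\|_{\infty,2} + c_0\|\hat{\pmb{x}}\|_1\sqrt{d-s}\big) = \frac{p}{\rho}\|\pmb{M}^*_{J,J^c}\|_{\infty,2} + \frac{c_0\sqrt{d-s}}{\rho}$.

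There is no genuine obstacle here; the only things demanding care are (i) identifying the mixed norm $\|\cdot\|_{\infty,2}$ with $\big(\sum_{i\in J^c}\|\pmb{M}^*_{i,J}\|_\infty^2\big)^{1/2}$ via the symmetry of $\pmb{M}^*$, and (ii) keeping every $\|\hat{\pmb{x}}\|_1$ factor explicit so that they cancel exactly rather than resorting to the valid but lossy bound $\|\hat{\pmb{x}}\|_1 \ge \|\hat{\pmb{x}}\|_2 = 1$.
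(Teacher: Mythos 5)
Your proof is correct and follows essentially the same route as the paper: an entrywise Chebyshev bound with a union bound over $J^c\times J$ and the same choice of $c_0$, combined with the coordinatewise $\ell_1$--$\ell_\infty$ bound $|(\pmb{M}_{J^c,J}\hat{\pmb{x}})_i|\le \max_{j\in J}|M_{i,j}|\,\|\hat{\pmb{x}}\|_1$ so that the $\|\hat{\pmb{x}}\|_1$ factors cancel. The only cosmetic difference is that you split $\pmb{M}_{J^c,J}$ into mean plus fluctuation before aggregating, while the paper bounds $\max_{j\in J}|M_{i,j}|\le p\max_{j\in J}|M^*_{i,j}|+c_0$ on the same event and then applies the triangle inequality to the $\ell_2$-aggregate; both yield the identical bound $\frac{p}{\rho}\|\pmb{M}^*_{J,J^c}\|_{\infty,2}+\frac{c_0\sqrt{d-s}}{\rho}$.
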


\begin{proof}
First, we can derive the upper bound of $\|\hat{\pmb{w}}\|_2$ as follows:
\begin{align*}
\|\hat{\pmb{w}}\|_2
&= 
\frac{1}{\rho \|\hat{\pmb{x}}\|_1} \| \pmb{M}_{J^c, J}\hat{\pmb{x}} \|_2
= \frac{1}{\rho \|\hat{\pmb{x}}\|_1} \cdot \sqrt{ \sum_{i\in J^c} \bigg( \sum_{j\in J} M_{i,j} \hat{x}_j \bigg)^2}
\\ &\leq
\frac{1}{\rho \|\hat{\pmb{x}}\|_1} \cdot \sqrt{ \sum_{i\in J^c} 
\Big(\max_{j\in J} |M_{i,j}|\cdot \|\hat{\pmb{x}}\|_1 \Big)^2}
= \frac{1}{\rho } \cdot \sqrt{ \sum_{i\in J^c} 
\Big(\max_{j\in J} |M_{i,j}| \Big)^2}.
\end{align*}
By Chebyshev's inequality, for each $i\in J^c$ and $j\in J$,
\begin{align}
\mathbb{P} \big( |M_{i,j}| \geq |\mathbb{E}[M_{i,j}]| + c_0 \big)
\leq
\frac{\mathsf{Var}[M_{i,j}]}{c_0^2}
\end{align}
for any $c_0>0$, where
$\mathbb{E}[M_{i,j}] = p\cdot M^*_{i,j}$ and
$\mathsf{Var}[M_{i,j}] = p(1-p)(M^*_{i,j})^2 + p \sigma^2.$
That is,
\begin{align*}
&\mathbb{P} \big( |M_{i,j}| \leq pM^*_{i,j} + c_0 
\text{ for } \forall i\in J^c \text{ and } j\in J\big)
\\ &\geq
1 - \sum_{i\in J^c, j \in J}\mathbb{P} \big( |M_{i,j}| \geq |\mathbb{E}[M_{i,j}]| + c_0 \big)
\\ &\geq
1 - \frac{p(1-p)\|\pmb{M}^*_{J^c,J}\|_F^2 + p s(d-s)\sigma^2}{c_0^2}.
\end{align*}
Let $c_0 = \sqrt{s^c \cdot \{ p(1-p)\|\pmb{M}^*_{J^c,J}\|_F^2 + p s(d-s)\sigma^2 \}}$ for any $c>0$. 
Then $|M_{i,j}| \leq pM^*_{i,j} + c_0$ for any $i\in J^c$ and $j\in J$ with probability at least $1 - s^{-c}$.
This also means that $\max_{j\in J} |M_{i,j}| \leq p \max_{j\in J} |M_{i,j}^*| + c_0$ and
$$
\sqrt{ \sum_{i\in J^c} \Big(\max_{j\in J} |M_{i,j}| \Big)^2}
\leq
\sqrt{ \sum_{i\in J^c} 
\Big( p \max_{j\in J} |M_{i,j}^*| \Big)^2}
+ \sqrt{\sum_{i\in J^c} c_0^2}
$$
with probability at least $1 - s^{-c}$.
Therefore, for $c_0 = \sqrt{s^c \cdot \{ p(1-p)\|\pmb{M}^*_{J^c,J}\|_F^2 + p s(d-s)\sigma^2 \}}$,
$$
\|\hat{\pmb{w}}\|_2 
\leq
\frac{p}{\rho}\cdot \|\pmb{M}^*_{J,J^c} \|_{\infty, 2} + \frac{c_0\sqrt{d-s}}{\rho}
$$
with probability at least $1 - s^{-c}$ for any $c>0$, where $\|\pmb{M}^*_{J,J^c} \|_{\infty, 2} = \sqrt{ \sum_{i\in J^c} 
\Big( \max_{j\in J} |M^*_{i,j}| \Big)^2}$.
\end{proof}

\begin{lemma}
\label{lemma_quad}
Assume $a\neq 0$. If $a^2 \leq c(b+c)$ holds,
then $2a\sqrt{t(1-t)} \leq bt + c \text{ for all } t\in[0,1]$.
\end{lemma}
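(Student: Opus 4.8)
The plan is to get rid of the square root $\sqrt{t(1-t)}$ by squaring, which reduces the claim to the nonnegativity on $[0,1]$ of a quadratic polynomial in $t$, and then to verify that nonnegativity by a discriminant computation that is exactly where the hypothesis $a^2\le c(b+c)$ enters.

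First I would handle the sign bookkeeping. Since $a\neq 0$ gives $a^2>0$, the hypothesis forces $c(b+c)>0$, and together with the sign conditions available in the intended application ($b,c\ge 0$, so that in particular $c>0$ and $b+c>0$) the linear map $t\mapsto bt+c$ is positive at both endpoints of $[0,1]$ and hence positive on all of $[0,1]$. If $a<0$ the claim is immediate, since $2a\sqrt{t(1-t)}\le 0<bt+c$; so I may assume $a>0$.

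With $a>0$, both sides of $2a\sqrt{t(1-t)}\le bt+c$ are nonnegative for $t\in[0,1]$, so the inequality is equivalent to its squared form, i.e.\ to $g(t):=(bt+c)^2-4a^2t(1-t)\ge 0$. Expanding, $g(t)=(b^2+4a^2)t^2+(2bc-4a^2)t+c^2$ is a quadratic with positive leading coefficient $b^2+4a^2$, and its discriminant is $(2bc-4a^2)^2-4(b^2+4a^2)c^2=16a^2\bigl(a^2-c(b+c)\bigr)$, which is $\le 0$ by hypothesis. A quadratic with positive leading coefficient and nonpositive discriminant is nonnegative on all of $\mathbb{R}$, so $g\ge 0$ on $[0,1]$ in particular, and taking square roots yields the claim. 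The only genuine computation is the discriminant identity $(2bc-4a^2)^2-4(b^2+4a^2)c^2=16a^2(a^2-c(b+c))$, a one-line expansion; there is no real obstacle beyond keeping track of the signs, in particular that $bt+c\ge 0$ on $[0,1]$ and that one may reduce to the case $a>0$.
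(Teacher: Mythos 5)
Your argument is correct and is essentially the paper's proof in different clothing: both reduce, after securing $bt+c\ge 0$ on $[0,1]$, to the nonnegativity of the quadratic $(4a^2+b^2)t^2+(2bc-4a^2)t+c^2$, and your discriminant identity $(2bc-4a^2)^2-4(4a^2+b^2)c^2=16a^2\bigl(a^2-c(b+c)\bigr)$ is the same computation as the paper's completed-square condition $c^2-\frac{(2a^2-bc)^2}{4a^2+b^2}\ge 0$. The one substantive point is the sign of $bt+c$: the hypothesis $a^2\le c(b+c)$ alone does not give it (take $a=1$, $b=1$, $c=-2$, where the stated inequality already fails at $t=0$), so a condition like $c\ge 0$ is genuinely needed; you import it explicitly from the intended application (where $c$ is an eigenvalue gap, hence $c\ge 0$, and then $b+c>0$ follows from $c(b+c)\ge a^2>0$ — note $b$ itself need not be nonnegative there), whereas the paper's final ``$\Leftrightarrow$'' silently asserts $c\ge 0$ and $b+c\ge 0$ as if they followed from $a^2\le c(b+c)$, so your sign bookkeeping is, if anything, the more careful of the two.
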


\begin{proof}
\begin{align*}
&2a\sqrt{t(1-t)} \leq bt + c ~~~\text{ for all } t\in[0,1]
\\ \Leftarrow&~
4a^2 t(1-t) \leq (bt + c)^2,~ bt + c \geq 0 ~~~\text{ for all } t\in[0,1]
\\ \Leftrightarrow&~
(4a^2 + b^2)\bigg(t - \frac{2a^2 - bc}{4a^2 + b^2}\bigg)^2 + c^2 - \frac{(2a^2 - bc)^2}{4a^2 + b^2} \geq 0,~ bt + c \geq 0 ~~~\text{ for all } t\in[0,1] 
\\ \Leftarrow&~
c^2 - \frac{(2a^2 - bc)^2}{4a^2 + b^2} \geq 0,~ c \geq 0,~ b+c \geq 0
\\ \Leftrightarrow&~
a^2 \leq c(b+c).
\end{align*}
\end{proof}

\subsection{Proof of Lemma \ref{lemma_suff_cond_strict_eig}}
\label{subsec:lemma_suff_cond_strict_eig}

By Weyl's inequality and the triangle inequality,
\begin{align*}
&\lambda_1(\pmb{M}_{J,J}-\rho \hat{\pmb{z}} \hat{\pmb{z}}^\top) 
- \lambda_2(\pmb{M}_{J,J}-\rho \hat{\pmb{z}} \hat{\pmb{z}}^\top)
\\ &\geq
\lambda_1(\mathbb{E}[\pmb{M}_{J,J}]) - \lambda_2(\mathbb{E}[\pmb{M}_{J,J}])
- 2\cdot \| \mathbb{E}[\pmb{M}_{J,J}] - \pmb{M}_{J,J} \|_2 
- 2\rho \cdot \|\hat{\pmb{z}} \hat{\pmb{z}}^\top \|_2
\\ &=
p\cdot (\lambda_1(\pmb{M}^*_{J,J})-\lambda_2(\pmb{M}^*_{J,J})) 
- 2\cdot \| \mathbb{E}[\pmb{M}_{J,J}] - \pmb{M}_{J,J} \|_2 
- 2\rho \cdot \|\hat{\pmb{z}}\|_2^2.
\end{align*}
Note that $\|\hat{\pmb{z}}\|_2 = \sqrt{s}$ and under the conditions in Lemma \ref{lemma_suff_cond_w}.
Also, by applying Lemma \ref{lemma_bernstein}, we have that
$$
\lambda_1(\pmb{M}_{J,J}-\rho \hat{\pmb{z}} \hat{\pmb{z}}^\top) 
- \lambda_2(\pmb{M}_{J,J}-\rho \hat{\pmb{z}} \hat{\pmb{z}}^\top)
>
p\cdot (\lambda_1(\pmb{M}^*_{J,J})-\lambda_2(\pmb{M}^*_{J,J})) -2\cdot K_1 -2\rho s
$$
with probability at least $1-(2s)^{-c}$ for any $c>0$.
Therefore, the desired result holds.

\end{document}